\documentclass[twoside]{article}

\usepackage[accepted]{aistats2021}
%


\setlength{\pdfpageheight}{11in}
\setlength{\pdfpagewidth}{8.5in}

\usepackage[utf8]{inputenc} 
\usepackage[T1]{fontenc}    
\usepackage{hyperref}    
\usepackage{url}            
\usepackage{booktabs}       
\usepackage{amsfonts}       
\usepackage{nicefrac}       
\usepackage{microtype}      

\usepackage{hyperref}

\usepackage[utf8]{inputenc}
\usepackage[T1]{fontenc}
\usepackage{stmaryrd}
\usepackage{wrapfig}
\usepackage{amsfonts}
\usepackage{amsmath}
\usepackage{mathrsfs}
\usepackage{amsthm}
\usepackage{xcolor}
\usepackage{multicol}
\usepackage[ruled, vlined]{algorithm2e}
\usepackage{graphicx}
\usepackage{subcaption}

\usepackage[round]{natbib}
\bibliographystyle{abbrvnat}


\theoremstyle{definition}
\newtheorem{definition}{Definition}[section]

\theoremstyle{plain}
\newtheorem{assumption}{Assumption}[section]

\theoremstyle{plain}
\newtheorem{theorem}{Theorem}[section]

\theoremstyle{plain}
\newtheorem{proposition}{Proposition}[section]

\theoremstyle{remark}
\newtheorem*{remark}{Remark}

\theoremstyle{plain}
\newtheorem{lemma}{Lemma}[section]




\newcommand{\cali}[1]{\mathcal{#1}}
\newcommand{\rmc}[1]{\mathsf{#1}}
\newcommand{\bb}[1]{\mathbb{#1}}

\DeclareSymbolFont{matha}{OML}{txmi}{m}{it}
\DeclareMathSymbol{\varv}{\mathord}{matha}{118}


\newcommand{\linop}[1]{\rmc{#1}}
\newcommand{\matop}[1]{\rmc{#1}}
\newcommand{\matrb}[1]{\pmb{\rmc{#1}}}

\newcommand{\mattovec}{\rmc{vec}}

\newcommand{\trans}{\text{T}}
\newcommand{\adj}{\#}

\newcommand{\boundedops}{\cali L}
\newcommand{\hsops}{\cali L_2}


\newcommand{\spanspace}{\text{Span}}
\newcommand{\nullspace}{\text{Ker}}
\newcommand{\rangespace}{\text{Im}}
\newcommand{\rank}{\text{Rank}}
\newcommand{\spectrum}{\text{Sp}}

\newcommand{\diffop}{D}



\newcommand{\geneloss}{\ell}
\newcommand{\genelossev}[2]{\geneloss(#1, #2)}
\newcommand{\sqrlossev}[2]{\left \| #1 - #2 \right \|_{\Lsqr}^2}
\newcommand{\groundloss}{l}

\newcommand{\regfunc}{f}
\newcommand{\generalhyp}{\cali G}
\newcommand{\generalregu}{\Omega}

\newcommand{\Kx}{\rmc K}
\newcommand{\HKx}{{\cali H}_{\Kx}}

\newcommand{\inspace}{\cali X}

\newcommand{\prodspace}{\cali Z}

\newcommand{\insamp}{\mathbf x}
\newcommand{\outsamp}{\mathbf y}
\newcommand{\prodsamp}{\mathbf z}

\newcommand{\inva}{\rmc X}
\newcommand{\outva}{\rmc Y}
\newcommand{\funcprob}{\rho}

\newcommand{\exprisk}{\cali R}
\newcommand{\emprisk}{\widehat{\exprisk}}

\newcommand{\nsamp}{n}
\newcommand{\sampiter}{i}
\newcommand{\sampiterbis}{j}

\newcommand{\regridge}{h_{\prodsamp}^{\lambda}}
\newcommand{\regridgepartial}{\obsfun{h}_{\obsfun{\prodsamp}}^{\lambda}}

\newcommand{\dict}{\phi}
\newcommand{\projope}{\Phi}
\newcommand{\outfuncdom}{\Theta}
\newcommand{\Lsqr}{\mathsf{L}^2(\outfuncdom)}

\newcommand{\ndict}{d}
\newcommand{\dictiter}{l}
\newcommand{\dictiterbis}{s}


\newcommand{\obsfun}[1]{\widetilde{#1}}

\newcommand{\nobsf}{m}
\newcommand{\obsfiter}{p}
\newcommand{\obslocs}{\theta}
\newcommand{\obslocsvec}{\theta}

\newcommand{\locsprob}{\mu}

\newcommand{\Kxkadri}{\Kx^{\text{fun}}}
\newcommand{\Kxinmatkadri}{\Kx_{\inspace}^{\text{fun}}}
\newcommand{\Kxmatkadri}{\pmb \Kx^{\text{fun}}}
\newcommand{\Kxbarath}{k^{\text{add}}}

\begin{document}

\runningauthor{Dimitri Bouche, Marianne Clausel, Fran\c cois Roueff, Florence d'Alch\'e-Buc}

\twocolumn[

\aistatstitle{Nonlinear Functional Output Regression: A Dictionary Approach}

\aistatsauthor{Dimitri Bouche$^1$ \And Marianne Clausel$^2$  \And Fran\c cois Roueff$^1$  \And Florence d'Alch\'e-Buc$^1$}

\aistatsaddress{$^1$LTCI, T\'el\'ecom Paris, Institut Polytechnique de Paris \And $^2$ Universit\'e de Lorraine, CNRS, IECL} ]

\begin{abstract}
To address functional-output regression, we introduce projection learning (PL), a novel dictionary-based approach that learns to predict a function that is expanded on a dictionary while minimizing an empirical risk based on a functional loss. PL makes it possible to use non orthogonal dictionaries and can then be combined with dictionary learning; it is thus much more flexible than expansion-based approaches relying on vectorial losses. This general method is instantiated with reproducing kernel Hilbert spaces of vector-valued functions as kernel-based projection learning (KPL). For the functional square loss, two closed-form estimators are proposed, one for fully observed output functions and the other for partially observed ones. Both are backed theoretically by an excess risk analysis. Then, in the more general setting of integral losses based on differentiable ground losses, KPL is implemented using first-order optimization for both fully and partially observed output functions. Eventually, several robustness aspects of the proposed algorithms are highlighted on a toy dataset; and a study on two real datasets shows that they are competitive compared to other nonlinear approaches. Notably, using the square loss and a learnt dictionary, KPL enjoys a particularily attractive trade-off between computational cost and performances. 
\end{abstract}

\section{INTRODUCTION} \label{sec :: introduction}
In a large number of fields such as Biomedical Signal Processing, Epidemiology Monitoring, Speech and Acoustics, Climate Science, each data instance consists in a high number of measurements of a common underlying phenomenon. Such high-dimensional data generally enjoys strong smoothness across features. To exploit that structure, it can be interesting to model the underlying functions rather than the vectors of discrete measurements we observe, opening the door to functional data analysis \citep[FDA;][]{RamsaySilverman05}. In practice, FDA relies on the assumption that the sampling rate of the observations is high enough to consider them as functions. Of special interest is the general problem of functional output regression (FOR) in which the output variable is a function and the input variable can be of any type, including a function.

While functional linear models have received a great deal of attention---see the additive linear model and its variations \citep[and references therein]{RamsaySilverman05, Morris15}---, nonlinear ones have been less studied. \citet{BarathAl17} extend the function-to-function additive linear model by considering a tri-variate regression function in a reproducing kernel Hilbert space (RKHS). In non-parametric statistics, \citet{FerratyVieu06} introduce variations of the Nadaraya-Watson kernel estimator for outputs in a Banach space. \citet{PoczosAl15} rather project both input and output functions on orthogonal bases and regress the obtained output coefficients separately on the input ones using approximate kernel ridge regressions (KRR). Finally, extending kernel methods to functional data, \citet{Lian07} introduces a function-valued KRR. In that context \citet{KadriAl10, KadriAl16} propose a solution based on the approximate inversion of an infinite-dimensional linear operator and studies richer kernels. We give more details on those methods and compare them with our approach in Section \ref{subsec :: related main}.

In this paper we introduce a novel dictionary-based approach to FOR. We learn to predict a function that is expanded on a dictionary while minimizing an empirical risk based on a functional loss. We call this approach {\it projection learning} (PL). It can be instantiated with any machine learning algorithm outputting vectors using a wide range of functional losses. PL also makes it possible to use non-orthonormal dictionaries. It represents a crucial advantage as complex functions generally cannot be well represented using few vectors in conventional bases. They can however be compressed very efficiently using learnt redundant dictionaries \citep{Mallat09}. Then, to solve FOR problems with complex output functions, PL combined with dictionary learning (DL) algorithms \citep{DumitrescuIrofti18} can be both fast and accurate. In practice functions are not fully observed; discrete observations are rather available. PL can accommodate such realistic case without making any assumptions on the sampling grids, either by learning with an estimated gradient or by plugging in an estimator in a closed-form functional solution. 

Then, considering vector-valued RKHSs \citep[vv-RKHS,][]{MicchelliPontil05}, we introduce \textit{kernel-based projection learning} (KPL). Vv-RKHSs extend the scope of kernel methods to vector-valued functions by means of operator-valued kernels (OVK)---see Section \ref{sec :: vvRKHSs} of the Supplement for an introduction. They constitute a principled way of performing vector-valued nonlinear regression considering any type of input data for which a kernel can be defined \citep{Shawe-TaylorCristianini04}. Learning typically relies on a representer theorem which remains valid for the KPL problem.

\noindent{\bf Contributions.} We introduce PL, a novel dictionary-based approach to FOR. It can handle non orthonormal dictionaries and can thus be combined with dictionary learning. Then, we focus on KPL, an instantiation based on vv-RKHSs. For the functional square loss, we propose two estimators, one for fully observed output functions and another for partially observed ones. Both are backed with an excess risk bound. For an integral loss based on a differentiable ground loss, we solve KPL using first-order optimization and show that the gradient can easily be estimated from partially observed functions. Eventually, we study different robustness aspects of the proposed algorithms on a toy dataset; and demonstrate on two real datasets that they can be competitive with other nonlinear FOR methods while keeping the computational cost significantly lower.

\noindent{\bf Notations and context.} We assimilate the spaces $(\bb R^{\ndict})^\nsamp$ and $\bb R^{\ndict \times \nsamp}$. The concatenation of vectors $(u_\sampiter)_{\sampiter=1}^\nsamp \in \bb R^{\ndict \times \nsamp}$ is denoted $\mattovec((u_\sampiter)_{\sampiter=1}^\nsamp) \in \bb R^{\ndict \nsamp}$. For $\nsamp \in \bb N^*$, we use the shorthand $[\nsamp]$ for the set $\{ 1,\ldots,\nsamp \}$. We denote by $\cali F (\cali X, \cali Y)$ the space of functions from $\cali X$ to $\cali Y$. For two Hilbert spaces $\cali U$ and $\cali Y$, $\cali L(\cali U, \cali Y)$ is the set of bounded linear operators from $\cali U$ to $\cali Y$ and $\cali L (\cali U):= \cali L(\cali U, \cali U)$. The adjoint of a linear operator $\matop A$ is denoted $\matop A^\adj$. For $\cali U = \bb R^\ndict$, we introduce $\linop A_{(\nsamp)} \in \cali L(\bb R^{\ndict \nsamp}, \cali Y^\nsamp)$ as $\linop A_{(\nsamp)}: \mattovec((u_\sampiter)_{\sampiter=1}^\nsamp) \longmapsto (\linop A u_1,..., \linop A u_\nsamp)$ and $\linop A_{\rmc{mat}, (\nsamp)} \in \cali L(\bb R^{\ndict \times \nsamp}, \cali Y^\nsamp)$ as $\linop A_{\rmc{mat}, (\nsamp)}: (u_\sampiter)_{\sampiter=1}^\nsamp \longmapsto (\linop A u_1,..., \linop A u_\nsamp).$ For $\matop B \in \bb R^{p \times q}, \matop C \in \bb R^{\ndict \times \nsamp}$, $\matop B \otimes \matop C \in \bb R^{p \ndict \times q \nsamp}$ denotes the Kronecker product. Finally $\Lsqr$  stands for the Hilbert space of real-valued square integrable functions on a given compact subset $\outfuncdom \subset \mathbb{R}^q$; without loss of generality we suppose that $|\outfuncdom |:= \int_{\outfuncdom} 1 \mathrm{d} \theta = 1$.

\section{PROJECTION LEARNING} \label{sec :: projection}
\subsection{Functional output regression}
Let $\inspace$ be a measurable space and $(\inva, \outva)$ be a couple of random variables on $\prodspace:= \inspace \times \Lsqr $ with joint probability distribution $\funcprob $. To introduce the FOR problem, we define a functional loss $\geneloss$ as a real-valued function over $\Lsqr \times \Lsqr$. Examples of functional losses include the functional square loss and more generally, any integral of a ground loss $\groundloss: \mathbb{R} \times \mathbb{R} \to \mathbb{R}$. 
Particularly, given such ground loss $\groundloss$, for $(y_0,y_1) \in \Lsqr \times \Lsqr$, a functional loss $\geneloss$ can be defined as:
\begin{equation} \label{eq :: int loss}
\geneloss(y_0, y_1) = \int_{\outfuncdom}\groundloss(y_0(\theta),y_1(\theta)) \mathrm{d}\theta.
\end{equation}
Specifically, taking the square loss as ground loss $\groundloss(y_0(\theta), y_1(\theta))=(y_0(\theta)- y_1(\theta))^2$ we obtain the functional square loss 
$\geneloss_2(y_0,y_1) := \| y_0 - y_1 \|_{\Lsqr}^2$, widely used in the literature \citep{KadriAl10}.

Given such functional loss $\geneloss $ and a hypothesis class $\generalhyp \subset \cali F(\inspace, \Lsqr)$, we now define the FOR problem as
\begin{equation} \label{prob :: true risk functional}
\min_{\regfunc \in \generalhyp} \exprisk (\regfunc):= \bb E_{(\inva, \outva) \sim \funcprob} \left[ \genelossev{\outva}{\regfunc(\inva)} \right].
\end{equation}
However, we have access to the joint probability distribution $\funcprob$ only through an observed sample. The aim is then to approximately solve the above problem using the available data. We study two possible settings. 

In the first one, the output functions are \textit{fully observed}. Our sample $\prodsamp := (x_\sampiter, y_\sampiter)_{\sampiter=1}^\nsamp$ then consists of $\nsamp \in \bb N$ i.i.d. realizations drawn from $\funcprob $, this setting coincides with the so-called {\it dense} one described in FDA \citep{ReimherrKokoska17}. By contrast, in the  \textit{partially observed} setting (also referred to as the {\it sparse} one, described and studied in \citet{ReimherrKokoska17, LiHsing10, CaiYuan11}), the output functions are observed on grids which may be irregular, subject to randomness and potentially different for each function. Even though the former scenario is relatively frequent in theoretical works, the latter can be more realistic. 

In the partially observed setting, we suppose that we only observe each $y_i$ on a random sample of locations, $\obslocsvec_\sampiter := (\obslocs_{\sampiter \obsfiter})_{\obsfiter=1}^{\nobsf_\sampiter} \in \outfuncdom^{\nobsf_\sampiter}$, drawn from a probability distribution $\locsprob$. For the sake of simplicity, $\locsprob$ is chosen as the uniform distribution on $\outfuncdom$ and the draws of locations are supposed to be independent. The learning problem depicted in Equation \eqref{prob :: true risk functional} has now to be solved using a partially observed functional output sample: 
\begin{equation}\label{eq :: partial sample}
 \obsfun{\prodsamp} :=(x_i, (\obslocsvec_\sampiter, \obsfun y_\sampiter) )_{\sampiter=1}^\nsamp,
 \end{equation} 
 where for all $\sampiter \in [\nsamp]$, $\obslocsvec_\sampiter \in \outfuncdom^{\nobsf_\sampiter}$, $\obsfun{y}_\sampiter \in \bb R^{\nobsf_\sampiter}$ with $\nobsf_\sampiter \in \bb N^*$  the number of observations available for the $\sampiter$-th function, and for all $\obsfiter \in [\nobsf_\sampiter]$, $\obslocs_{\sampiter \obsfiter} \in \outfuncdom$ and $\obsfun {y}_{\sampiter \obsfiter} \in \bb R$.
 
In this paper, we propose a novel angle to address the FOR problem using both types of samples.

\subsection{Approximated FOR}
To tackle Problem \eqref{prob :: true risk functional}, we propose to learn to predict expansion coefficients on a dictionary of functions $\dict:=(\dict_\dictiter)_{\dictiter=1}^\ndict \in \Lsqr^\ndict$ with $\ndict \in \bb N^*$ (considerations on the choice of this dictionary are postponed to Section \ref{sec :: dictionaries}). We then introduce the following linear operator:
\begin{definition} \label{def :: projection operator}
\textbf{(Projection operator)}
For a dictionary $\dict$, the associated projection operator $\projope$ is defined by $ \projope :~~u \in \bb R^\ndict \longmapsto \sum_{\dictiter=1}^\ndict u_\dictiter \dict_\dictiter \in \Lsqr$.
\end{definition}
We can give an explicit expression of $\projope ^{\adj} $ as well as a matrix representation of $\projope ^{\adj} \projope $.
\begin{lemma}\label{lemma :: adjoint phi}
The adjoint of $\projope $ is given by 
$\projope^\adj:~g \in \Lsqr \longmapsto (\langle \dict_\dictiter, g \rangle_{\Lsqr})_{\dictiter=1}^\ndict \in \bb R^\ndict $. Thus we have $\projope^ \# \projope = (\langle \dict_\dictiter, \dict_\dictiterbis \rangle_{\Lsqr})_{\dictiter, \dictiterbis=1}^\ndict$.
\end{lemma}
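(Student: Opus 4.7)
The plan is to obtain $\projope^\adj$ directly from the defining identity $\langle \projope u, g \rangle_{\Lsqr} = \langle u, \projope^\adj g \rangle_{\bb R^\ndict}$ for arbitrary $u \in \bb R^\ndict$ and $g \in \Lsqr$, and then to compose with $\projope$ to read off the matrix representation of $\projope^\adj \projope$.

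First, I would expand $\projope u = \sum_{\dictiter=1}^\ndict u_\dictiter \dict_\dictiter$ using the definition of the projection operator and apply linearity of the $\Lsqr$ inner product to get
\begin{equation*}
\langle \projope u, g \rangle_{\Lsqr} = \sum_{\dictiter=1}^\ndict u_\dictiter \langle \dict_\dictiter, g \rangle_{\Lsqr} = \left\langle u, \bigl(\langle \dict_\dictiter, g \rangle_{\Lsqr}\bigr)_{\dictiter=1}^\ndict \right\rangle_{\bb R^\ndict}.
\end{equation*}
Since this holds for every $u$, uniqueness of the adjoint yields the claimed formula $\projope^\adj g = (\langle \dict_\dictiter, g \rangle_{\Lsqr})_{\dictiter=1}^\ndict$. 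One should check that $\projope$ is bounded (which follows from Cauchy--Schwarz and the finiteness of $\ndict$) so that $\projope^\adj$ is indeed a well-defined element of $\cali L(\Lsqr, \bb R^\ndict)$.

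Next, for the second claim, I would apply $\projope^\adj$ to $\projope u$ coordinatewise: using the formula just derived,
\begin{equation*}
(\projope^\adj \projope u)_\dictiter = \left\langle \dict_\dictiter, \sum_{\dictiterbis=1}^\ndict u_\dictiterbis \dict_\dictiterbis \right\rangle_{\Lsqr} = \sum_{\dictiterbis=1}^\ndict \langle \dict_\dictiter, \dict_\dictiterbis \rangle_{\Lsqr} \, u_\dictiterbis,
\end{equation*}
which is exactly the $\dictiter$-th row of the matrix-vector product between $(\langle \dict_\dictiter, \dict_\dictiterbis \rangle_{\Lsqr})_{\dictiter,\dictiterbis=1}^\ndict$ and $u$, identifying $\projope^\adj \projope$ with this Gram matrix.

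There is essentially no obstacle here: the entire argument is a direct application of the adjoint identity and bilinearity of the inner product, with the only mild care being the verification that $\projope$ is bounded so that the adjoint exists as a bounded operator.
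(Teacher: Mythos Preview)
Your proof is correct and is the standard argument; the paper itself states this lemma without proof, as it follows immediately from the definition of the adjoint and bilinearity of the inner product exactly as you write.
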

The core idea of PL is to define a simpler model $f(x) = \projope h(x)$ in Problem \eqref{prob :: true risk functional}, where $h : \inspace \longmapsto \bb R^\ndict$ is a vector-valued function. This yields the problem
\begin{equation} \label{prob :: true risk approx}
\min_{h \in \cali H}  \exprisk (\projope \circ h), 
\end{equation}
that we can solve using a sample from one or the other of the two observation settings previously defined.

In the fully observed setting, we can minimize over $\cali H \subset \cali F(\inspace, \bb R^{\ndict})$ the empirical counterpart of the true risk based on $\prodsamp$, $\emprisk(\projope \circ h, \prodsamp) := \frac 1 \nsamp \sum_{\sampiter=1}^\nsamp \genelossev{y_\sampiter}{\projope h(x_\sampiter)}$, with some additional penalty $\generalregu_{\cali H}: \cali H \longrightarrow \bb R$ to control the model complexity:
\begin{equation} \label{prob :: empirical risk approx}
\min_{h \in \cali H} \emprisk(\projope \circ h, \prodsamp)  + \lambda \generalregu_{\cali H}(h), 
\end{equation}
with $\lambda > 0$. In other words, we search a solution in the hypothesis space $\left \{\regfunc: x \longmapsto \projope h(x),~ h \in \cali H \right \}$ and solve a function-valued problem at the price of solving a vector-valued one in $\cali H$. Even though a vector-valued function is learned, the loss remains a functional one. Moreover, any predictive model devoted to vectorial output regression (e. g. neural networks, random forests, kernel methods etc.) is eligible. We regularize our model through the vector-valued function $h$.

To tackle the partially observed setting, rather than formulating an empirical counterpart of the true risk based on $\obsfun{\prodsamp}$, we exploit specific properties of the learning algorithms proposed in Section \ref{sec :: KPL}. Namely in our closed form ridge estimator (Proposition \ref{prop :: closed form ridge}) or in the gradient (Equation \eqref{eq :: gradient separable kernel}), the output functions only appear through scalar products with elements of the dictionary. We can then estimate those from $((\obslocsvec_\sampiter, \obsfun{y}_\sampiter))_{\sampiter=1}^\nsamp$ and use a plug-in strategy. Interestingly, computing the gradient for the data attach term in Problem \eqref{prob :: empirical risk approx} shows that this is a feature of projection learning which is not specific to the vv-RKHS instantiation (see Section \ref{subsec :: gradient projection general} of the Supplement for details).

\section{DICTIONARIES} \label{sec :: dictionaries}
In solving Problem \eqref{prob :: true risk approx} instead of Problem \eqref{prob :: true risk functional}, we restrict the predictions of our model to $\spanspace(\dict)$, the space of linear combinations of functions of $\dict$. As a result $\dict$ must be chosen so that the functions $(y_\sampiter)_{\sampiter=1}^\nsamp$ can be approximated accurately by elements from $\spanspace(\dict)$. To achieve this, several strategies are possible.
 
\subsection{General dictionaries}
\noindent{\bf Orthonormal and Riesz bases}. We can consider families of functions known to provide sharp approximations of functions belonging to $\Lsqr$. Orthogonal bases such as Fourier bases or wavelets bases \citep{devore1992compression}, as well as Riesz bases (see Definition \ref{def :: riesz family}) such as splines \citep{oswald1990degree}, have proved their efficiency in signal compression. In practice, a choice among those families can be made from observed properties of the output functions or prior information on the generating process. Then within a family, dictionaries with different parameters (number of functions and/or other parameters) can be considered. A cross-validation can be performed to select one.

\noindent{\bf Families of random functions}, such as random Fourier features \citep[RFFs,][]{RahimiRecht08} can enjoy good approximation properties as well. Through the choice of such family, we approximate the output functions in a space that is dense in a RKHS \citep{RahimiRecht08Bis}. The link with this RKHS can moreover be made explicit as a family is associated to a given kernel. The kernel can then be chosen by cross-validation and number of functions to include results from a precision/computation time trade-off.

\subsection{Dictionary learning} \label{subsec :: dictionary learning}

When the output functions are too complex, selecting a dictionary can however be difficult. The choice of a family may not be evident and it may take too many atoms (functions) to reach a satisfying approximation precision. While functional principal component analysis \citep[FPCA;][]{RamsaySilverman05} addresses the first issue by ensuring that $\spanspace(\dict)$ is close to $\spanspace((y_\sampiter)_{\sampiter=1}^\nsamp)$, it does not address the second one. If the functions at hand are too complex, a very large number of eigenfunctions will be necessary to reach an acceptable approximation quality. By opposition, dictionary learning (DL) solves both problems; it can generally synthesize faithfully the properties of a complicated set of functions while using very few atoms \citep{MairalAl09}.
The DL problem is of the form
\begin{equation} \label{prob :: dictionary learning}
\min_{\dict \in \cali C, \beta \in \bb R^{\ndict \times \nsamp}} \frac 1 \nsamp \sum_{\sampiter=1}^\nsamp \left ( \| y_\sampiter - \projope \beta_i \|_{\Lsqr}^2 + \tau \Omega_{\bb R^\ndict}(\beta_i) \right),
\end{equation}
where $\cali C$ is a set of constraint for the dictionary, $\Omega_{\bb R^\ndict}: \bb R^\ndict \longrightarrow \bb R$ is a penalty on the learned representation coefficients and $\tau > 0$ is a trade-off parameter. $\cali C: = \{\dict \in \Lsqr^\ndict,~ \| \dict_\dictiter \|_{\Lsqr}^2 \leq 1,~~ \dictiter \in [\ndict] \}$ and $\Omega_{\bb R^\ndict} := \|. \|_{1}$ are the most common choices \citep{LeeAl07, MairalAl09}, and most existing algorithms are based on alternating optimization schemes \citep[][and references therein]{DumitrescuIrofti18}.

As opposed to other dictionary based methods \citep{PoczosAl15}, KPL can handle the resulting non orthonormal dictionary and can thus benefit from the compression power of DL. Then combining the two, we obtain a FOR method that can deal directly with complex functional-output datasets at a low computational cost. Admittedly, solving Problem \eqref{prob :: dictionary learning} has a cost, which must however be mitigated. Many efficient algorithms exist \citep{DumitrescuIrofti18} and the dictionary moreover needs to be learnt only once (when selecting other parameters through cross-validation, it needs only be learnt once per fold).
%

\section{VV-RKHS INSTANTIATION} \label{sec :: KPL}
We now focus on projection learning using vv-RKHSs.

\subsection{Vv-RKHSs and representer theorem}

Let $\Kx: \inspace \times \inspace \longmapsto \cali L(\bb R^\ndict)$ be an OVK and $\HKx \subset \cali F(\inspace, \bb R^\ndict)$ its associated vv-RKHS. For $x \in \inspace$, we define $\Kx_x \in \boundedops(\bb R^\ndict, \HKx)$ as $\Kx_x: u \longmapsto \Kx_x u$, with $\Kx_x u: x' \longmapsto \Kx(x', x)u$. We consider Problem \eqref{prob :: empirical risk approx} taking $\cali H=\HKx$ as vector-valued hypothesis class. Setting the regularization as $\generalregu_{\HKx}(h):= \| h \|_{\HKx}^2$ yields the following instantiation of PL with vv-RKHS:
\begin{equation} \label{prob :: empirical risk dict vvrkhs}
\min_{h \in \HKx} \frac 1 \nsamp \sum_{\sampiter=1}^\nsamp \genelossev{y_\sampiter}{\projope h(x_\sampiter)} +  \lambda  \|h \|_{\HKx}^2.
\end{equation}

To solve Problem \eqref{prob :: empirical risk dict vvrkhs}, we show in Proposition \ref{prop :: representer primal} that it benefits from a representer theorem, which proof is given in Section \ref{subsec :: proof of representer primal} of the Supplement. It can then be restated as a problem with $\ndict \nsamp$ variables.

\begin{proposition} \label{prop :: representer primal}
\textbf{(Representer theorem)} For $\geneloss$ continuous and convex with respect to its second argument, Problem \eqref{prob :: empirical risk dict vvrkhs} admits a unique minimizer $h_{\prodsamp}^\lambda$. Moreover there exists $\alpha \in \bb R^{\ndict \times \nsamp} $ such that $$h_{\prodsamp}^\lambda = \sum_{\sampiterbis=1}^\nsamp \Kx_{x_\sampiterbis} \alpha_\sampiterbis. $$
\end{proposition}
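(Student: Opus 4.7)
The plan is to separate the argument into an existence/uniqueness part via convex analysis in the Hilbert space $\HKx$, and a structural part using the standard orthogonal decomposition trick from representer theorems, adapted to the vector-valued setting.

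For existence and uniqueness, I would denote the objective $J(h) := \frac{1}{\nsamp}\sum_{\sampiter=1}^{\nsamp}\geneloss(y_\sampiter, \projope h(x_\sampiter)) + \lambda \|h\|_{\HKx}^2$. Evaluation $h \mapsto h(x_\sampiter)$ is continuous from $\HKx$ to $\bb R^{\ndict}$ (this is the content of $\Kx_{x_\sampiter} \in \boundedops(\bb R^\ndict, \HKx)$ together with the reproducing property $h(x_\sampiter) = \Kx_{x_\sampiter}^\adj h$), and $\projope$ is bounded linear from $\bb R^\ndict$ to $\Lsqr$, so each $h \mapsto \projope h(x_\sampiter)$ is continuous and affine. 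Convexity of $\geneloss$ in its second argument then makes each term $h \mapsto \geneloss(y_\sampiter, \projope h(x_\sampiter))$ convex and continuous. The penalty $\lambda \|h\|_{\HKx}^2$ is strictly convex, continuous, and coercive ($\lambda > 0$). Summing, $J$ is strictly convex, continuous, and coercive on the Hilbert space $\HKx$, so a unique minimizer $h_\prodsamp^\lambda$ exists by the standard direct method.

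For the representer form, I would introduce the finite-dimensional (hence closed) subspace $\cali V := \spanspace\{\Kx_{x_\sampiterbis} u : \sampiterbis \in [\nsamp],~ u \in \bb R^\ndict\} \subset \HKx$ and write the orthogonal decomposition $h_\prodsamp^\lambda = h_\parallel + h_\perp$ with $h_\parallel \in \cali V$ and $h_\perp \in \cali V^\perp$. For any $\sampiter \in [\nsamp]$ and $u \in \bb R^\ndict$, the reproducing property gives
\begin{equation*}
\langle h_\perp(x_\sampiter), u \rangle_{\bb R^\ndict} = \langle \Kx_{x_\sampiter}^\adj h_\perp, u \rangle_{\bb R^\ndict} = \langle h_\perp, \Kx_{x_\sampiter} u \rangle_{\HKx} = 0,
\end{equation*}
so $h_\perp(x_\sampiter) = 0$ and consequently $\projope h_\prodsamp^\lambda(x_\sampiter) = \projope h_\parallel(x_\sampiter)$. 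Therefore the data-fit term of $J$ only sees $h_\parallel$, while by Pythagoras $\|h_\prodsamp^\lambda\|_{\HKx}^2 = \|h_\parallel\|_{\HKx}^2 + \|h_\perp\|_{\HKx}^2$. If $h_\perp \ne 0$, then $h_\parallel$ strictly decreases $J$, contradicting optimality (and also uniqueness). Hence $h_\perp = 0$, i.e. $h_\prodsamp^\lambda \in \cali V$, which by definition of $\cali V$ means there exists $\alpha \in \bb R^{\ndict \times \nsamp}$ with $h_\prodsamp^\lambda = \sum_{\sampiterbis=1}^\nsamp \Kx_{x_\sampiterbis} \alpha_\sampiterbis$.

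I do not expect a genuine obstacle here; the main point requiring care is simply bookkeeping in the vector-valued setting, namely checking that $\Kx_{x_\sampiter}^\adj : \HKx \to \bb R^\ndict$ does act as evaluation (which is the definition of the OVK through $\Kx_x$ as in the paper) and that $\cali V$ is closed, which is automatic because $\cali V$ is spanned by the $\nsamp\ndict$ elements $\Kx_{x_\sampiterbis} e_\dictiter$ for $e_\dictiter$ the canonical basis of $\bb R^\ndict$, and is therefore finite-dimensional.
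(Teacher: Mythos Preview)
Your proof is correct and follows essentially the same approach as the paper: strict convexity from the penalty for uniqueness, then orthogonal decomposition of $h_\prodsamp^\lambda$ along the finite-dimensional span of the $\Kx_{x_\sampiterbis} u$'s, using the reproducing property to show the orthogonal part does not affect the data-fit term while strictly increasing the norm. The only cosmetic difference is that you directly prove $h_\perp(x_\sampiter)=0$ by testing against all $u\in\bb R^\ndict$, whereas the paper checks the invariance of $(\projope h(x_\sampiter))(\theta)$ pointwise in $\theta$ via $\Kx_{x_\sampiter}\dict(\theta)\in\cali V$; both are equivalent.
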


\noindent{\bf Choice of kernels.} In vv-RKHSs, the choice of the kernel determines the regularization conveyed by the RKHS norm. In practice, the separable kernel is often used: $\Kx = k \matop B : (x_0, x_1) \longmapsto k(x_0, x_1) \matop B$ \citep{Alvarez12}, with $k$ a scalar kernel on $\inspace$ and $\matop B \in \bb R^{\ndict \times \ndict}$ a positive definite symmetric matrix encoding relations between the output variables. In KPL, $\matop B$ can encode prior information on the dictionary. A diagonal matrix can for instance penalize higher frequencies/scales more. We exploit this with wavelets in the experiments related to biomedical imaging in Section \ref{subsec :: expe dti}.

\subsection{Ridge solution}

In this section, we focus on the functional square loss.

\noindent{\bf Fully observed setting}. By Proposition \ref{prop :: representer primal}, Problem \eqref{prob :: empirical risk dict vvrkhs} can be rewritten as
\begin{align}
\min_{\alpha \in \bb R^{\ndict \times \nsamp}} & \frac 1 \nsamp \left \| \outsamp- \projope_{(\nsamp)} \pmb{\Kx} \mattovec(\alpha) \right \|_{\Lsqr^\nsamp}^2 \nonumber \\
& + \lambda \langle \mattovec(\alpha), \pmb {\Kx} \mattovec(\alpha)\rangle_{\bb R^{\ndict \nsamp}}, \label{prob :: problem in ridge form}
\end{align}
where $\outsamp := (y_\sampiter)_{\sampiter=1}^\nsamp \in \Lsqr^\nsamp$, the kernel matrix is defined block-wise as $\pmb {\Kx} :=[\Kx(x_\sampiter, x_\sampiterbis)]_{\sampiter, \sampiterbis=1}^\nsamp \in \bb R^{\ndict \nsamp \times \ndict \nsamp}$; and $\mattovec$ and $\projope_{(\nsamp)}$ are introduced in Section \ref{sec :: introduction}. We then derive a closed-form for fully observed output functions. 
\begin{proposition} \label{prop :: closed form ridge}
\textbf{(Ridge solution)}
The minimum in Problem \eqref{prob :: problem in ridge form} is achieved by any $\alpha^* \in \bb R^{\ndict \times \nsamp} $ verifying
\begin{equation} \label{eq :: closed form ridge}
\left( \pmb {\Kx} (\projope^\adj \projope)_{(\nsamp)} \pmb {\Kx} + \nsamp \lambda \pmb {\Kx}
\right ) \mattovec (\alpha^*):=  \pmb {\Kx} \projope^\adj_{(\nsamp)} \outsamp.
\end{equation}
Such $\alpha^*$ exists. Moreover if $\pmb {\Kx}$ is full rank then $\left( (\projope^\adj \projope)_{(\nsamp)} \pmb {\Kx} + \nsamp \lambda \pmb {\matop I}
\right )$ is invertible and $\alpha^*$ is such that
\begin{equation}\label{eq :: closed form ridge inverse}
\mattovec (\alpha^*) =  \left( (\projope^\adj \projope)_{(\nsamp)} \pmb {\Kx} + \nsamp \lambda \pmb {\matop I}
\right )^{-1} \projope^\adj_{(\nsamp)} \outsamp.
\end{equation}
We define the ridge estimator as
$\regridge := \sum_{\sampiterbis=1}^\nsamp \Kx_{x_\sampiterbis} \alpha^*_\sampiterbis.$
\end{proposition}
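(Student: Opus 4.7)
The plan is to treat Problem~\eqref{prob :: problem in ridge form} as a convex quadratic program in $v := \mattovec(\alpha) \in \bb R^{\ndict \nsamp}$, derive the normal equation from first-order optimality, and then reduce to the closed form in the full-rank case by a short linear-algebra argument.

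First, I would record two blockwise identities, $(\projope_{(\nsamp)})^\adj = (\projope^\adj)_{(\nsamp)}$ (interpreted through the $\mattovec$ identification) and $(\projope_{(\nsamp)})^\adj \projope_{(\nsamp)} = (\projope^\adj \projope)_{(\nsamp)}$, which follow by inspecting components and applying Lemma~\ref{lemma :: adjoint phi}. Expanding the squared $\Lsqr^\nsamp$-norm in the objective then gives
\begin{equation*}
J(v) = \tfrac{1}{\nsamp}\|\outsamp\|_{\Lsqr^\nsamp}^2 - \tfrac{2}{\nsamp}\langle v, \pmb{\Kx}\projope^\adj_{(\nsamp)}\outsamp\rangle + \tfrac{1}{\nsamp}\langle v, \pmb{\Kx}(\projope^\adj \projope)_{(\nsamp)}\pmb{\Kx} v\rangle + \lambda\langle v, \pmb{\Kx} v\rangle.
\end{equation*}
Since $\pmb{\Kx}$ is symmetric positive semi-definite (Gram matrix of an OVK) and $(\projope^\adj\projope)_{(\nsamp)}$ is positive semi-definite, $J$ is a convex quadratic in $v$; setting $\nabla J = 0$ and multiplying by $\nsamp/2$ yields exactly \eqref{eq :: closed form ridge}.

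For existence, I would invoke Proposition~\ref{prop :: representer primal}: the unique minimizer $\regridge \in \HKx$ admits a (not necessarily unique) representation $\sum_{\sampiterbis} \Kx_{x_\sampiterbis} \alpha_\sampiterbis$, and by convexity of $J$ that $\alpha$ is stationary, hence solves \eqref{eq :: closed form ridge}. A self-contained alternative is to restrict $J$ to $\rangespace(\pmb{\Kx})$, on which $\lambda\langle v, \pmb{\Kx} v\rangle$ is coercive and forces a minimum (any element of $\nullspace(\pmb{\Kx})$ may be added without altering $J$).

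Under the full-rank assumption on $\pmb{\Kx}$, left-multiplying \eqref{eq :: closed form ridge} by $\pmb{\Kx}^{-1}$ reduces it to $\bigl((\projope^\adj\projope)_{(\nsamp)}\pmb{\Kx} + \nsamp\lambda\pmb{\matop I}\bigr) v = \projope^\adj_{(\nsamp)}\outsamp$. To see that $M := (\projope^\adj\projope)_{(\nsamp)}\pmb{\Kx} + \nsamp\lambda\pmb{\matop I}$ is invertible, suppose $Mv = 0$; then $-\nsamp\lambda\, v = (\projope^\adj\projope)_{(\nsamp)}\pmb{\Kx} v$, and pairing with $\pmb{\Kx} v$ produces
\begin{equation*}
-\nsamp\lambda\langle v, \pmb{\Kx} v\rangle = \langle \pmb{\Kx} v, (\projope^\adj\projope)_{(\nsamp)}\pmb{\Kx} v\rangle \geq 0.
\end{equation*}
Positive-definiteness of $\pmb{\Kx}$ gives $\langle v, \pmb{\Kx} v\rangle > 0$ unless $v = 0$, forcing $v = 0$; inversion then yields the closed form \eqref{eq :: closed form ridge inverse}. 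I expect the main bookkeeping obstacle to be the careful manipulation of the block operators $\projope_{(\nsamp)}$, $\projope^\adj_{(\nsamp)}$ and $(\projope^\adj\projope)_{(\nsamp)}$ under the $\mattovec$ identification; once these are pinned down the remainder is standard Tikhonov-type linear algebra.
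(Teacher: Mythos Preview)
Your proposal is correct and follows essentially the same approach as the paper: expand the quadratic objective, derive the normal equation~\eqref{eq :: closed form ridge}, establish existence (the paper argues directly that the null space of $\pmb{\Kx}(\projope^\adj\projope)_{(\nsamp)}\pmb{\Kx} + \nsamp\lambda\pmb{\Kx}$ is exactly $\nullspace(\pmb{\Kx})$, matching your second alternative), and simplify under full rank. The only cosmetic difference is that the paper verifies optimality by completing the square in $\pmb\alpha - \pmb\alpha^*$ rather than invoking convexity plus stationarity, and deduces invertibility of $(\projope^\adj\projope)_{(\nsamp)}\pmb{\Kx} + \nsamp\lambda\pmb{\matop I}$ by factoring it as $\pmb{\Kx}^{-1}$ times an already-invertible matrix rather than your direct kernel-pairing argument.
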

The proof is detailed in Section \ref{subsec :: proof closed ridge supplementary} of the Supplement. $(\projope^\adj \projope)_{(\nsamp)}$ is a block diagonal matrix with the Gram matrix $\projope^\adj \projope$ of the dictionary repeated on its diagonal. Then if $\dict$ is orthonormal, Equation \eqref{eq :: closed form ridge inverse} simplifies to $\mattovec (\alpha^*) =  \left(\pmb {\Kx} + \nsamp \lambda \pmb {\matop I} \right )^{-1} \projope^\adj_{(\nsamp)} \outsamp.$

\noindent{\bf Partially observed setting}
We can derive a solution for partially observed functions from Proposition \ref{prop :: closed form ridge}. To that end, we remark that in Equation \eqref{eq :: closed form ridge inverse}, the output functions only appear through the quantity $(\projope_{(\nsamp)})^\adj \outsamp = \mattovec((\projope^\adj y_\sampiter)_{\sampiter=1}^\nsamp) \in \bb R^{\ndict \nsamp}$ with for $\sampiter \in [\nsamp]$, $\projope^\adj y_\sampiter = \left (\langle y_\sampiter, \dict_\dictiter \rangle_{\Lsqr} \right )_{\dictiter=1}^\ndict$. As a consequence, we propose to estimate those scalar products from the available observations and then to plug the obtained estimators into Equation \eqref{eq :: closed form ridge inverse}.

\begin{definition}(\textbf{Plug-in ridge estimator.}) \label{def :: plug-in ridge estimator}
For all $\dictiter \in [\ndict]$ and $\sampiter \in [\nsamp]$, let $\obsfun{\nu}_{\sampiter \dictiter}: = \frac 1 {\nobsf_\sampiter} \sum_{\obsfiter=1}^{\nobsf_\sampiter} \obsfun{y}_{\sampiter \obsfiter} \dict_\dictiter (\obslocs_{\sampiter \obsfiter})$ be the entries of $\obsfun {\nu} \in \bb R ^{\ndict \times \nsamp}$. Let $\obsfun{\alpha}^* \in \bb R^{\ndict \times \nsamp}$ be such that $\mattovec (\obsfun{\alpha}^*) = \left( (\projope^\adj \projope)_{(\nsamp)} \pmb {\Kx} + \nsamp \lambda \pmb {\matop I}
\right )^{-1} \mattovec (\obsfun{\nu})$. We then define the plug-in ridge estimator as $\regridgepartial := \sum_{\sampiterbis=1}^\nsamp \Kx_{x_\sampiterbis} \obsfun{\alpha}^*_\sampiterbis$.
\end{definition}
We propose the following strategy to compute this estimator for a separable kernel $\Kx = k \matop B$.

\noindent{\bf Fast algorithm for plug-in ridge estimator.} The matrix $\matrb K $ can be rewritten as $\matrb K = \Kx_{\inspace} \otimes \matop B$ with $\Kx_{\inspace}:=(k(x_\sampiter, x_\sampiterbis))_{\sampiter, \sampiterbis=1}^\nsamp \in \bb R^{\nsamp \times \nsamp}$. Solving the linear system in Equation \eqref{eq :: closed form ridge inverse} has time complexity $\cali O(\nsamp^3 \ndict^3)$. However, $(\projope_{(\nsamp)})^\adj \projope_{(\nsamp)} = \matop I \otimes (\projope ^\adj \projope)$, thus $(\projope_{(\nsamp)})^\adj \projope_{(\nsamp)} \matrb K = (\matop I \otimes (\projope ^\adj \projope))(\Kx_{\inspace} \otimes \matop B)$. Using the mixed product property \citep[Lemma~4.2.10]{HornJohnson91}, we must solve $(\Kx_{\inspace} \otimes ((\Phi ^\# \Phi) \rmc B) + \nsamp \lambda \matrb I) \mattovec(\alpha) = \mattovec(\obsfun{\nu})$. Two classic resolution strategies can separate the contribution of $\nsamp$ and $\ndict$ in the cubic term of the complexity. We can notice that the above linear system is equivalent to a discrete time Sylvester equation \citep{Sima96, DinuzzoAl11}, which can be solved in $\cali O(\nsamp^3 + \ndict^3 + \nsamp^2 \ndict + \nsamp \ndict^2)$ time. Or if we wish to test many values of $\lambda$, using the Kronecker structure, we can deduce an eigendecomposition of  $\Kx_{\inspace}\otimes ((\Phi ^\# \Phi) \rmc B)$ from one of $\Kx_{\inspace}$ and one of $(\Phi ^\# \Phi) \rmc B$ \citep[Theorem~ 4.2.12]{HornJohnson91} in $\cali O(\nsamp^3 + \ndict^3)$ time. 
\begin{algorithm}
 \SetAlgoLined
\textbf{Input:} Sample $\obsfun{\prodsamp}$, matrices $\matop B$, $\projope^\adj \projope$ \\
\textbf{Compute:} kernel matrix $\Kx_{\inspace} = (k(x_\sampiter, x_\sampiterbis))_{\sampiter, \sampiterbis=1}^\nsamp$ \\
\textbf{Compute:} estimates $\obsfun {\nu}$ of $(\langle y_i, \phi_d \rangle_{\Lsqr})_{\sampiter=1, \dictiter=1}^{\nsamp, \ndict}$ \\
\textbf{Solve:} $(\Kx_{\inspace} \otimes ((\Phi ^\# \Phi) \rmc B) + \nsamp \lambda \rmc I) \mattovec(\alpha) = \mattovec(\obsfun {\nu})$ \\
\textbf{Output:} Representer coefficients $\alpha \in \bb R^{ \ndict \times \nsamp}$.
\caption{Plug-in ridge estimator}
\label{alg :: solve with separable kernels}
\end{algorithm}
For a given $\alpha \in \bb R^{\ndict \times \nsamp}$, the predicted function at a new input point $x \in \inspace$ is then given by $\projope \matop B \alpha \rmc k_{\insamp}(x)$ with $\rmc k_{\insamp}(x):=(k(x, x_{\sampiter}))_{\sampiter=1}^\nsamp$.

\subsection{Iterative optimization}
For other losses, since it is no longer possible to find a closed-form, we resort to iterative optimization.

\noindent{\bf Fully observed setting} For $\Kx$ separable, using Proposition \ref{prop :: representer primal} and defining $\geneloss_{y_\sampiter}(y):=\geneloss(y_\sampiter, y)$; Problem \eqref{prob :: empirical risk dict vvrkhs} is rewritten as 
\begin{equation*}
\min_{\alpha \in \bb R^{\ndict \times \nsamp}} \frac 1 \nsamp \sum_{\sampiter=1}^\nsamp \geneloss_{y_\sampiter} \left (\projope \matop B \alpha \rmc k_{\insamp}(x_\sampiter) \right ) + \lambda \langle \Kx_{\inspace}, \alpha^{\trans} \matop B \alpha \rangle_{\bb R^{\nsamp \times \nsamp}}.
\end{equation*}

The gradient of the objective is given by 
\begin{equation} \label{eq :: gradient separable kernel}
\frac 1 \nsamp \matop B \projope_{\rmc{mat}, (\nsamp)}^\adj \matop G(\alpha) \Kx_{\inspace} + \lambda \matop B \alpha \Kx_{\inspace}, 
\end{equation}
with $\matop G(\alpha) := \left (\nabla \geneloss_{y_\sampiter}( \projope \matop B \alpha \rmc k_{\insamp}(x_\sampiter))\right )_{\sampiter=1}^\nsamp \in \Lsqr^\nsamp$ and $\nabla \geneloss_{y_\sampiter} : \Lsqr \longmapsto \Lsqr$ the gradient of $\geneloss_{y_\sampiter}$. 

\noindent{\bf Partially observed setting}. We notice that the entries of $\projope_{\rmc{mat}, (\nsamp)}^\adj \matop G(\alpha) \in \bb R^{\ndict \times \nsamp}$ are the scalar products $\left (\langle \nabla \geneloss_{y_\sampiter}( \projope \matop B \alpha \rmc k_{\insamp}(x_\sampiter)), \dict_\dictiter \rangle_{\Lsqr} \right )_{\dictiter, \sampiter=1}^{\ndict, \nsamp}$. For $\geneloss$ an integral loss (Equation \eqref{eq :: int loss}) based on a differentiable ground loss $\groundloss$, $\nabla \geneloss_{y_\sampiter}: y \longmapsto (\theta \longmapsto \groundloss(y_\sampiter(\theta), y(\theta)))$.
We can thus estimate the columns $\projope^\adj \nabla \geneloss_{y_\sampiter}(\projope \matop B \alpha \rmc k_{\insamp}(x_\sampiter))$ as 
\begin{equation} \label{eq :: estimated gradient KPL}
\frac 1 {\nobsf_\sampiter} \sum_{\obsfiter=1}^{\nobsf_\sampiter} \groundloss \left (y_\sampiter(\obslocs_{\sampiter \obsfiter}), \dict(\obslocs_{\sampiter \obsfiter})^\trans \matop B \rmc k_{\insamp}(x_i) \right ) \dict (\obslocs_{\sampiter \obsfiter}), 
\end{equation}
where we have used the convention that for $\theta \in \outfuncdom$, $\dict(\theta) := (\dict_\dictiter(\theta))_{\dictiter=1}^\ndict \in \bb R^\ndict$. The corresponding estimation of $\projope_{\rmc{mat}, (\nsamp)}^\adj \matop G(\alpha)$ can be plugged into Equation \eqref{eq :: gradient separable kernel} to yield an estimated gradient.

\noindent{\bf Link with ridge estimator}. In the partially observed setting, for the square loss, iterative optimization and the plug-in ridge estimator do not yield the same result. In fact, they correspond to two different ridge closed-forms (see Section \ref{subsec :: comparison iterative plug-in} of the Supplement). While the former is slower to compute than the latter it can be more robust (see Section \ref{subsec :: expe toy}).

\section{THEORETICAL ANALYSIS} \label{sec :: theory}
In this section we give two finite sample excess risk bounds. One for the ridge estimator in the fully observed setting and one for the plug-in ridge estimator in the partially observed setting. In the first case, we study the effect of the number of samples $\nsamp$, and in the second case that of both $\nsamp$ and the number of observations per function $\nobsf$. We suppose that for all $\sampiter \in [\nsamp]$, $\nobsf_\sampiter = \nobsf$. We leave however a detailed analysis with respect to the size of the dictionary $\ndict$ (including approximation aspects) for future work. Our analysis is based on the framework of integral operators \citep{CaponettoDevito07, SmaleZhou07} to which we give an introduction in the context of our problem in Section \ref{sec ::  integral operators} of the Supplement.

\subsection{Fully observed setting}
In this section, we suppose that $\inspace$ is a separable metric space. We also need to relate the $\Lsqr$ norm of any $g \in \spanspace(\dict)$ to the square norm of its coefficients in the dictionary $\dict$. To that end, a usual assumption is that it is a {\it Riesz family}~\citep{Casazza00}.
\begin{definition} \label{def :: riesz family}
\textbf{(Riesz family)}
$\dict \in \Lsqr^\ndict$ is a Riesz family of $\Lsqr$ with constants $(c_\dict, C_\dict)$ if it is linearly independent and for any $u \in \mathbb{R}^\ndict$, 
$$c_\dict \left \| u \right \|_{\mathbb{R}^\ndict} \leq \left \|\sum_{\dictiter=1}^\ndict u_\dictiter \dict_\dictiter \right \|_{\Lsqr} \leq C_\dict \left \| u \right \|_{\mathbb{R}^\ndict}.$$ If in addition for all $\dictiter \in [\ndict]$, $\left \| \dict_\dictiter \right \|_{\Lsqr}=1$, it is a normed Riesz family.
\end{definition}
\begin{remark}
Riesz families provide a natural generalization of orthonormal families as a normed Riesz family with $c_\dict = C_\dict = 1$ is orthonormal.
\end{remark}
We make the following assumptions. 
\begin{assumption} \label{ass :: conditions on kernel}
$\Kx$ is a vector-valued continuous kernel and there exists $\kappa > 0$ such that for $x \in \inspace$, $\| \Kx(x, x) \|_{\cali L(\bb R^\ndict)} \leq \kappa$.
\end{assumption}
\begin{remark}
We suppose that $\kappa$ is independant from $\ndict$. This is for instance the case if for $x \in \inspace$, $\Kx(x, x)$ is diagonal or block diagonal with bounded coefficients. More generally, we can rely on the fact that $\kappa$ is bounded by the maximal $\|\cdot\|_1$-norm of the columns of $\Kx(x, x)$, which can easily be imposed to be be independent of $\ndict$.
\end{remark}
\begin{assumption} \label{ass :: dictionary orthonormality}
The dictionary $\phi$ is a normed Riesz family in $\Lsqr$ with upper constant $C_{\phi}.$
\end{assumption}
\begin{remark}
We do not use the lower constant $c_\dict$.
\end{remark}
\begin{assumption} \label{ass :: existence of risk minimizer}
There exist $h_{\HKx} \in \HKx$ such that $h_{\HKx} = \inf_{h \in \HKx} \cali R(\projope \circ h).$
\end{assumption}
\begin{remark}
This is a standard assumption \citep{CaponettoDevito07, BaldassarreRosascoBarla12, LiAl19}, it implies the existence of a ball of radius $R>0$ in $\HKx$ containing $h_{\HKx}$, as a consequence
$\|h_{\HKx} \|_{\HKx} \leq R$.
\end{remark}
\begin{assumption} \label{ass :: value bounded Y}
There exists $L \geq 0$ such that for all $\theta \in \outfuncdom$, almost surely $|\outva(\theta)| \leq L$.
\end{assumption}
We then have the following excess risk bound for the ridge estimator defined in Proposition \ref{prop :: closed form ridge}. We prove it in Section \ref{subsec :: proof excess risk bound} of the Supplement. 
\begin{proposition} \label{prop :: excess risk bound}
Let $0 < \eta < 1$, taking $\lambda = \lambda_\nsamp^*( \nicefrac \eta 2):= 6 \kappa C_{\phi}^2 \frac {\log \left (\nicefrac 4 \eta \right )\sqrt \ndict}{\sqrt \nsamp}$, with probability at least $1 - \eta$
\begin{equation*}
\cali R(\projope \circ \regridge) - \cali R(\projope \circ h_{\HKx}) \leq 27 \left ( \frac {B_0}{\sqrt \ndict} + B_1 \sqrt \ndict \right ) \frac {\log \left (\nicefrac 4 \eta \right )} {\sqrt \nsamp},
\end{equation*}
with $B_0: = (L + \sqrt \kappa C_{\phi} R)^2 $ and $B_1:= \kappa C_\dict^2 R^2$.
\end{proposition}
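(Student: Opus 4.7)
The approach I would take is the classical integral-operator analysis of \citet{CaponettoDevito07, SmaleZhou07}, instrumented to track the projection operator $\projope$ and the dictionary size $\ndict$. First I would introduce the sampling operator $\linop S : \HKx \to L^2(\funcprob_{\inva}; \Lsqr)$, $(\linop S h)(x) := \projope h(x)$, with adjoint $\linop S^{\adj}$ and covariance operator $\linop T := \linop S^{\adj} \linop S$ on $\HKx$. Assumption \ref{ass :: existence of risk minimizer} yields the first-order optimality condition $\linop S^{\adj}(\outva_{\funcprob} - \linop S h_{\HKx}) = 0$ with $\outva_{\funcprob}(x) := \bb E [\outva \mid \inva = x]$, and the square loss structure then gives the key identity
\begin{equation*}
\exprisk(\projope \circ h) - \exprisk(\projope \circ h_{\HKx}) = \|\linop T^{1/2}(h - h_{\HKx})\|_{\HKx}^2 \quad \forall h \in \HKx,
\end{equation*}
reducing the task to bounding $\|\linop T^{1/2}(\regridge - h_{\HKx})\|_{\HKx}$.

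I would then insert the regularized population minimizer $h^\lambda := (\linop T + \lambda \matop I)^{-1} \linop S^{\adj}\outva_{\funcprob}$ and split by the triangle inequality into a bias part $\|\linop T^{1/2}(h^\lambda - h_{\HKx})\|_{\HKx}$ and a sample part $\|\linop T^{1/2}(\regridge - h^\lambda)\|_{\HKx}$. The bias is handled classically: the resolvent inequality $\|\linop T^{1/2}(\linop T + \lambda \matop I)^{-1}\linop T^{1/2}\|\leq 1$ together with $\|h_{\HKx}\|_{\HKx}\leq R$ gives $\|\linop T^{1/2}(h^\lambda - h_{\HKx})\|_{\HKx}^2 \leq \lambda R^2$. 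For the sample part, writing $\regridge = (\linop T_{\prodsamp} + \lambda \matop I)^{-1} \linop S_{\prodsamp}^{\adj}\outsamp$ with $\linop T_{\prodsamp}, \linop S_{\prodsamp}$ the empirical counterparts built from $\prodsamp$, a standard algebraic manipulation yields
\begin{equation*}
\regridge - h^\lambda = (\linop T_{\prodsamp} + \lambda \matop I)^{-1}\bigl[(\linop S_{\prodsamp}^{\adj}\outsamp - \linop T_{\prodsamp}h^\lambda) - (\linop S^{\adj}\outva_{\funcprob} - \linop T h^\lambda)\bigr].
\end{equation*}

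The bulk of the work is then two Bernstein-type concentration inequalities for $\HKx$-valued (respectively Hilbert--Schmidt-valued) i.i.d.\ averages. The first is applied to $\linop T - \linop T_{\prodsamp}$ and is used to guarantee, on an event of probability at least $1 - \eta/2$, that $\|\linop T - \linop T_{\prodsamp}\|\leq \lambda/2$, so that $(\linop T_{\prodsamp}+\lambda \matop I)^{-1}$ remains within a constant factor of $(\linop T+\lambda \matop I)^{-1}$. The second is applied to the noise bracket. The almost-sure and variance bounds on the summands come from Assumptions \ref{ass :: conditions on kernel}--\ref{ass :: value bounded Y}: the kernel bound gives $\|\Kx_x\|_{\boundedops(\bb R^\ndict, \HKx)}\leq \sqrt{\kappa}$; the normed Riesz property gives $\|\projope\|\leq C_\dict$ and $\|\projope^\adj\projope\|\leq C_\dict^2$; and the output bound $L$ together with $\|\dict_\dictiter\|_{\Lsqr}=1$ produces
\begin{equation*}
\|\projope^{\adj} \outva\|_{\bb R^\ndict}^2 = \sum_{\dictiter=1}^\ndict \langle \dict_\dictiter, \outva\rangle_{\Lsqr}^2 \leq L^2 \ndict,
\end{equation*}
which is the precise origin of the $\sqrt \ndict$ factor appearing in $B_0$.

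Combining the two concentrations with the bias bound yields, on an event of probability at least $1-\eta$, an inequality of the form $\exprisk(\projope \circ \regridge) - \exprisk(\projope \circ h_{\HKx}) \leq c_1 \lambda R^2 + c_2 B_0 \log^2(4/\eta) / (\nsamp \lambda)$, and the stated choice $\lambda = \lambda_\nsamp^*(\eta/2) \propto \kappa C_\dict^2 \sqrt{\ndict/\nsamp}$ is precisely the one that balances the two contributions, producing the advertised additive rate $B_1 \sqrt{\ndict/\nsamp} + B_0/\sqrt{\ndict \nsamp}$ up to the prescribed logarithmic and numerical constants. The step I expect to be most delicate is not any individual concentration inequality but the bookkeeping of the $\ndict$-dependence: one must verify that the only $\ndict$-inflation enters through the explicit $\sqrt{\ndict}$ above, which requires $\kappa$ to be $\ndict$-independent (handled by the remark after Assumption \ref{ass :: conditions on kernel}) and $\|\projope^{\adj}\projope\|$ to be controlled by $C_\dict^2$ uniformly in $\ndict$.
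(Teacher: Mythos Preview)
Your integral-operator strategy is the right one and close to the paper's, but there is a genuine misattribution of the $\sqrt\ndict$-dependence that makes the argument, as written, internally inconsistent.

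A minor difference first: you take $h^\lambda$ to be the \emph{population} regularized minimizer $(\linop T+\lambda\matop I)^{-1}\linop S^{\adj}\outva_\funcprob$, whereas the paper (following \citet{BaldassarreRosascoBarla12}) uses the \emph{empirical} object $h^\lambda:=(\matop T_{\insamp,\projope}+\lambda\matop I)^{-1}\matop T_{\insamp,\projope}h_{\HKx}$. Both decompositions can be pushed through; the paper's choice replaces your Neumann-type control of $(\linop T_\prodsamp+\lambda\matop I)^{-1}$ by the square-root perturbation inequality $\|\sqrt{\matop T_\projope}-\sqrt{\matop T_{\insamp,\projope}}\|\leq\sqrt{\|\matop T_\projope-\matop T_{\insamp,\projope}\|}$, which is what makes the condition $\lambda\geq\delta_2$ suffice.

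The real gap is the sentence ``$\|\projope^{\adj}\outva\|_{\bb R^\ndict}^2\leq L^2\ndict$, which is the precise origin of the $\sqrt\ndict$ factor appearing in $B_0$''. There is no $\sqrt\ndict$ in $B_0=(L+\sqrt\kappa C_\dict R)^2$, and the paper never invokes $\|\projope^{\adj}\outva\|\leq L\sqrt\ndict$. The noise concentration is instead obtained via
\[
\|\Kx_x\projope^{\adj}(y-\projope h_{\HKx}(x))\|_{\HKx}\;\leq\;\|\Kx_x\|\,\|\projope^{\adj}\|\,\|y-\projope h_{\HKx}(x)\|_{\Lsqr}\;\leq\;\sqrt\kappa\,C_\dict\,(L+\sqrt\kappa C_\dict R),
\]
which is dimension-free; this gives $\delta_1(\nsamp,\eta)$ with no $\ndict$. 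The $\sqrt\ndict$ enters \emph{only} through the Hilbert--Schmidt concentration of $\matop T_{\insamp,\projope}$: since $\matop T_{x,\projope}=\Kx_x\projope^{\adj}\projope\Kx_x^{\adj}$ has rank at most $\ndict$, one has $\|\matop T_{x,\projope}\|_{\hsops(\HKx)}\leq\sqrt\ndict\,\kappa C_\dict^2$, whence $\delta_2(\nsamp,\ndict,\eta)=6\kappa C_\dict^2\sqrt\ndict\log(2/\eta)/\sqrt\nsamp$.

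This also falsifies your claim that the stated $\lambda$ ``is precisely the one that balances the two contributions''. If the intermediate bound is $c_1\lambda R^2+c_2 B_0\log^2(4/\eta)/(\nsamp\lambda)$ with $B_0$ dimension-free, the balancing $\lambda$ is dimension-free as well. The paper takes $\lambda=\delta_2(\nsamp,\ndict,\eta/2)\propto\sqrt\ndict$ not for optimality but for \emph{feasibility}: the constraint $\lambda\geq\delta_2$ is exactly what licenses the operator-perturbation step (your ``$\|\linop T-\linop T_\prodsamp\|\leq\lambda/2$''). With this forced $\lambda$, the term $\lambda R^2$ produces $B_1\sqrt\ndict$, while $\delta_1^2/\lambda$ produces $B_0/\sqrt\ndict$. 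If you actually used $\|\projope^{\adj}\outva\|\leq L\sqrt\ndict$ in the noise bound you would pick up an extra factor of $\ndict$ and fail to recover the stated constants.
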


This bound implies the consistency of the ridge estimator in the number of samples $\nsamp$.

\subsection{Partially observed setting}
To treat the partially observed setting, we need to make the following additional assumption.

\begin{assumption} \label{ass :: value bounded dict}
There exists $M(\ndict) \geq 0$ such that for all $\theta \in \outfuncdom$ and for all $\dictiter \in [\ndict]$, $|\dict_\dictiter(\theta) | \leq M(\ndict)$.
\end{assumption}
\begin{remark}
The dependence in $\ndict$ is specific to the family to which $\dict$ belongs; for wavelets we have $M(\ndict) = 2^{\nicefrac {r(\outfuncdom, \ndict)} 2} \max_{\theta \in \outfuncdom} |\psi(\theta)|$ with $\psi$ the mother wavelet and $r(\outfuncdom, \ndict) \in \bb N$ the number of dilatations included in $\dict$, whereas for a Fourier dictionary we have $M(\ndict)=1$. 
\end{remark}

We then have the following excess risk bound for the plug-in ridge estimator from Definition \ref{def :: plug-in ridge estimator} which we prove in Section \ref{subsec :: proof excess risk bound partial} of the Supplement. 

\begin{proposition} \label{prop :: excess risk bound partial}
Let $0 < \eta < 1$, taking $\lambda = \lambda_\nsamp^*(\nicefrac \eta 3) := 6 \kappa C_{\phi}^2 \frac {\log \left (\nicefrac 6 \eta \right )\sqrt \ndict}{\sqrt \nsamp}$, with probability at least $1 - \eta$, 
\begin{align*}
& \cali R(\projope \circ \regridgepartial) - \cali R(\projope \circ h_{\HKx}) \\ 
& \leq \left (\frac{B_2(\ndict)\sqrt \nsamp}{\nobsf^2} + \frac {B_3(\ndict)} {\nobsf^{\nicefrac 32}} + \frac {9C(\ndict)^2} {2 \sqrt \nsamp \nobsf} + \frac {B_4(\ndict)} {\sqrt \nsamp} \right ) \log \left ( \nicefrac 6 \eta \right ),
\end{align*}
with  $ C(\ndict):=\frac {LM(\ndict)}{C_{\dict}}$, $B_2(\ndict):=18 \sqrt \ndict \left (C(\ndict) + \frac R {\sqrt \ndict} \right )^2$, \\ $B_3(\ndict): = B_2(d) - 18 \frac{R^2}{\sqrt \ndict}$, $B_4(\ndict):= \frac {81} 2 \left (\frac {B_0} {\sqrt \ndict} + B_1 \sqrt \ndict \right )$ and $B_0$ and $B_1$ are defined as in Proposition \ref{prop :: excess risk bound}.
\end{proposition}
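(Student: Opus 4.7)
The strategy is to decompose the excess risk into a partial-observation part and a fully-observed part:
\begin{align*}
\cali R(\projope \circ \regridgepartial) - \cali R(\projope \circ h_{\HKx}) & = [\cali R(\projope \circ \regridgepartial) - \cali R(\projope \circ \regridge)] \\
& \quad + [\cali R(\projope \circ \regridge) - \cali R(\projope \circ h_{\HKx})].
\end{align*}
The second bracket is controlled directly by Proposition~\ref{prop :: excess risk bound} used with confidence $1-\nicefrac{2\eta}{3}$: this matches the tuning $\lambda=\lambda_\nsamp^*(\nicefrac\eta 3)$ and produces the $B_4(\ndict)/\sqrt\nsamp$ summand via the identification $B_4(\ndict)=\tfrac{81}{2}(B_0/\sqrt\ndict+B_1\sqrt\ndict)$. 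All remaining work is devoted to the first bracket, the partial-observation error.

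For that error, I would expand the square loss as
$$\cali R(\projope\circ f)-\cali R(\projope\circ g)=2\,\bb E\!\left[\langle\projope(g(\inva)-f(\inva)),\outva-\projope g(\inva)\rangle_{\Lsqr}\right]+\bb E\!\left[\|\projope(f(\inva)-g(\inva))\|_{\Lsqr}^2\right]$$
with $f=\regridgepartial$ and $g=\regridge$, apply Cauchy--Schwarz, and use the pointwise bound $\|\projope(\regridge(x)-\regridgepartial(x))\|_{\Lsqr}\le C_\dict\sqrt\kappa\,\|\regridge-\regridgepartial\|_{\HKx}$ (via Assumptions~\ref{ass :: conditions on kernel} and~\ref{ass :: dictionary orthonormality}) together with a bound of the form $\sqrt{\bb E\|\outva-\projope\regridge(\inva)\|_{\Lsqr}^2}\le L+C_\dict\sqrt\kappa R$ (from Assumptions~\ref{ass :: existence of risk minimizer} and~\ref{ass :: value bounded Y} and the standard estimate $\|\regridge\|_{\HKx}\le R$ inherited from the minimizer of Assumption~\ref{ass :: existence of risk minimizer}). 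The excess risk is thus a linear-plus-quadratic function of $\|\regridge-\regridgepartial\|_{\HKx}$. Since the two estimators solve the same linear system up to the right-hand side,
$$\mattovec(\obsfun\alpha^{*}-\alpha^{*})=\bigl((\projope^\adj\projope)_{(\nsamp)}\matrb K+\nsamp\lambda\matrb I\bigr)^{-1}\bigl(\mattovec(\obsfun\nu)-\projope^\adj_{(\nsamp)}\outsamp\bigr),$$
and the resolvent bound $1/(\nsamp\lambda)$ together with $\|\matrb K\|\le\kappa\nsamp$ yield $\|\regridge-\regridgepartial\|_{\HKx}\le \sqrt{\kappa\nsamp}/(\nsamp\lambda)\cdot\|\mattovec(\obsfun\nu)-\projope^\adj_{(\nsamp)}\outsamp\|_{\bb R^{\ndict\nsamp}}$.

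The last ingredient is the concentration of $\obsfun\nu$: conditionally on $(y_\sampiter)_{\sampiter=1}^\nsamp$, each $\obsfun\nu_{\sampiter\dictiter}-\langle y_\sampiter,\dict_\dictiter\rangle_{\Lsqr}$ is a centered Monte-Carlo error whose summands are bounded by $LM(\ndict)$ under Assumptions~\ref{ass :: value bounded Y} and~\ref{ass :: value bounded dict}. Hoeffding's inequality and a union bound over the $\ndict\nsamp$ coordinates then give, with probability at least $1-\eta/3$, a bound of order $LM(\ndict)\sqrt{\ndict\nsamp\log(\nicefrac 6 \eta)/\nobsf}$. Substituting $\lambda=\lambda_\nsamp^*(\nicefrac\eta 3)\propto\sqrt\ndict/\sqrt\nsamp$ converts the $C_\dict^2$ factor of $\lambda$ into the $1/C_\dict$ appearing in $C(\ndict)=LM(\ndict)/C_\dict$, and plugging everything back into the square-loss expansion produces the linear contribution $9\,C(\ndict)^2/(2\sqrt\nsamp\nobsf)$ and the quadratic contributions $B_2(\ndict)\sqrt\nsamp/\nobsf^2$ and $B_3(\ndict)/\nobsf^{3/2}$; the relation $B_3(\ndict)=B_2(\ndict)-18R^2/\sqrt\ndict$ arises from a triangle-type split using $\|\regridge\|_{\HKx}\le R$ when decomposing the quadratic piece. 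A final union bound with budgets $\eta/3+\eta/3+\eta/3$ yields the statement. The main obstacle is the tight bookkeeping of constants---tracing precisely which portion of the square-loss expansion gives rise to each of the three $\nobsf$-dependent rates and absorbing the $C_\dict^2$ from $\lambda$ cleanly into the stated $C(\ndict)$.
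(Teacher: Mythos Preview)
Your overall architecture is plausible but there is a genuine gap in the concentration step that prevents you from recovering the stated bound.

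\textbf{Where the paper differs.} The paper does \emph{not} split the excess risk additively into $[\cali R(\regridgepartial)-\cali R(\regridge)]+[\cali R(\regridge)-\cali R(h_{\HKx})]$. Instead it uses the exact identity $\cali R(\projope\circ h)-\cali R(\projope\circ h_{\HKx})=\|\sqrt{\matop T_\projope}(h-h_{\HKx})\|_{\HKx}^2$ and then the three-term inequality
\[
\|\sqrt{\matop T_\projope}(\regridgepartial-h_{\HKx})\|^2\le 3\|\sqrt{\matop T_\projope}(\regridgepartial-\regridge)\|^2+3\|\sqrt{\matop T_\projope}(\regridge-h^\lambda)\|^2+3\|\sqrt{\matop T_\projope}(h^\lambda-h_{\HKx})\|^2,
\]
where $h^\lambda=(\matop T_{\insamp,\projope}+\lambda\matop I)^{-1}\matop T_{\insamp,\projope}h_{\HKx}$. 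The first term is bounded via $\sqrt{\matop T_\projope}(\regridgepartial-\regridge)=\sqrt{\matop T_\projope}(\matop T_{\insamp,\projope}+\lambda\matop I)^{-1}(\matop A_{\insamp,\obsfun\projope}^\adj\obsfun{\mathbf y}-\matop A_{\insamp,\projope}^\adj\mathbf y)$, which under the choice $\lambda\ge\delta_2$ collapses to $\tfrac{3}{2\sqrt\lambda}\|\matop A_{\insamp,\obsfun\projope}^\adj\obsfun{\mathbf y}-\matop A_{\insamp,\projope}^\adj\mathbf y\|_{\HKx}$. This is why the final bound is purely quadratic in a single concentration quantity $\delta_3$, and all four $\nobsf$-dependent terms come from expanding $\delta_3^2/\lambda$ with $\lambda\propto\sqrt\ndict/\sqrt\nsamp$.

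\textbf{The substantive gap.} Your coordinate-wise Hoeffding plus union bound gives
\[
\|\mattovec(\obsfun\nu)-\projope_{(\nsamp)}^\adj\outsamp\|_{\bb R^{\ndict\nsamp}}\;\lesssim\; LM(\ndict)\sqrt{\frac{\ndict\nsamp}{\nobsf}\log\!\left(\frac{\ndict\nsamp}{\eta}\right)},
\]
i.e.\ a pure $1/\sqrt{\nobsf}$ rate. Pushed through your resolvent bound and $\lambda\propto 1/\sqrt\nsamp$, the quadratic part of your expansion is of order $\nsamp/\nobsf$ and the linear part of order $\sqrt{\nsamp/\nobsf}$. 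These do \emph{not} match the statement (and in fact blow up when $\nobsf\asymp\sqrt\nsamp$, the regime singled out after the proposition as yielding consistency). The paper's four rates $\sqrt\nsamp/\nobsf^2$, $1/\nobsf^{3/2}$, $1/(\sqrt\nsamp\,\nobsf)$, $1/\sqrt\nsamp$ arise because its concentration quantity has the two-scale form
\[
\delta_3\;=\;\Bigl(\tfrac{A_0(\ndict)}{\nobsf}+\tfrac{A_1(\ndict)}{\sqrt\nsamp\sqrt\nobsf}\Bigr)\log(\nicefrac 6\eta),
\]
and this is obtained by a Hilbert-space Bernstein inequality applied \emph{not} coordinate-wise but to the averages $\bar W_\obsfiter=\tfrac1\nsamp\sum_{\sampiter}W_{\sampiter\obsfiter}$ with $W_{\sampiter\obsfiter}=y_\sampiter(\theta_{\sampiter\obsfiter})\Kx_{x_\sampiter}\dict(\theta_{\sampiter\obsfiter})-\Kx_{x_\sampiter}\projope^\adj y_\sampiter$. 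Conditionally on the sample, the $(W_{\sampiter\obsfiter})_\sampiter$ are independent, so $\bb E[\|\bar W_\obsfiter\|^2\mid\prodsamp]\le\tfrac1\nsamp L^2\kappa\ndict M(\ndict)^2$: the extra $1/\nsamp$ in the variance is exactly what produces the $1/\sqrt{\nsamp\nobsf}$ term, while the bounded-range part of Bernstein gives the $1/\nobsf$ term. Without this device you cannot get the bound as stated.

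Two smaller points: the resolvent bound $\|((\projope^\adj\projope)_{(\nsamp)}\matrb K+\nsamp\lambda\matrb I)^{-1}\|\le1/(\nsamp\lambda)$ is not immediate since that matrix is not symmetric (the paper avoids this by working with the self-adjoint operator $\matop T_{\insamp,\projope}$); and the claim $\|\regridge\|_{\HKx}\le R$ is not a standard consequence of Assumption~\ref{ass :: existence of risk minimizer}.
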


We highlight that if $\nobsf \asymp \sqrt {\nsamp}$, then this bounds yields consistency for the plug-in ridge estimator.

\section{NUMERICAL EXPERIMENTS} \label{sec :: experiments}
Section \ref{subsec :: expe toy} is dedicated to the study of several aspects of robustness of KPL algorithms. Then we compare KPL with the nonlinear FOR methods presented in Section \ref{subsec :: related main} on two datasets. In Section \ref{subsec :: expe dti} we explore a biomedical imaging dataset with relatively small number of samples ($\nsamp=100$) and partially observed functions, whereas in Section \ref{subsec :: speech} we study a speech inversion dataset with relatively large number of samples ($\nsamp=413$) and fully observed output functions. 

We use the mean squared error (MSE) as metric. Given observed functions $((\obslocsvec_\sampiter, \obsfun{y}_{\sampiter}))_{\sampiter=1}^\nsamp$ and predicted ones $(\widehat y_\sampiter)_{\sampiter=1}^{\nsamp} \in \Lsqr$, we define it as $\text{MSE} :=  \frac 1 {\nsamp}  \sum_{\sampiter=1}^\nsamp \frac 1 {\nobsf_\sampiter} \sum_{\obsfiter=1}^{\nobsf_\sampiter} ( \widehat y_\sampiter (\obslocs_{\sampiter \obsfiter} ) - \obsfun{y}_{\sampiter \obsfiter}  ) ^2$. The presented results are averaged either over 10 or 20 runs with different train/test splits. Full details of the experimental procedures are postponed to Section \ref{sec :: experiments :: supp} of the Supplement.

\subsection{Related works} \label{subsec :: related main}
We compare KPL to four existing nonlinear FOR methods that we present in this section. More detailed descriptions are given in Section \ref{sec :: related} of the Supplement.

\noindent{\bf Functional kernel ridge regression (FKRR). }\citet{KadriAl10, KadriAl16} solve a functional KRR using function-valued-RKHSs. A representer theorem yields a closed-form solution computed by inverting an operator in $\cali L(\Lsqr)^{\nsamp \times \nsamp}$. For a separable kernel $\Kxkadri=k \matop L$ with $\matop L \in \boundedops(\Lsqr)$, if an eigendecomposition of $\matop L$ is known in closed-form, an approximate solution is computed in $\cali O(\nsamp^3 + \nsamp^2 J \nobsf)$ time, with $J$ the number of eigenfunctions considered and $\nobsf$ the size of the discretization grid. If not, a discretized problem is solved in $\cali O(\nsamp^3 + \nobsf^3 + \nsamp^2 \nobsf + \nsamp \nobsf^2)$ time. 

\noindent{\bf Triple basis estimator (3BE).} In \citep{PoczosAl15}, the input and the output functions are represented by decomposition coefficients on two orthonormal families. The output coefficients are then regressed on the input ones using KRRs approximated with $J$ RFFs in $\cali O(J^3 + J^2 \ndict)$ time, with $\ndict$ the size of the output family. As 3BE is specific to function-to-function regression with scalar-valued inputs, we deal with vector-valued input functions (as in Section \ref{subsec :: speech}), directly through a kernel. We call this extension \textbf{one basis estimator (1BE)}; it is solved in $\cali O(\nsamp^3 + \nsamp^2 \ndict)$ time. 1BE is in fact a particular case of the KPL plug-in ridge estimator with $\dict$ orthonormal and $\Kx = k \matop I$. However, our estimator offers the additional possibility to use non orthonormal dictionaries and to impose richer regularizations through kernels $\Kx = k \matop B$ with $\matop B \neq \matop I$. KPL can moreover can be used with a wide range of functional losses.

\noindent{\bf Kernel additive model (KAM).} \citet{BarathAl17} propose an additive function-to-function regression model using RKHSs. A representer theorem leads to a closed-from solution. Computations are performed in a truncated FPCA basis of size $J < \nsamp $. For a product of kernels, if the Kronecker structure is exploited (a possibility which is however not highlighted by the authors), the complexity is $\cali O(\nsamp^3 + J^3 + \nsamp^2 J + \nsamp J^2)$ time using a Sylvester solver. However, computing the matrix to form the linear system---matrix $A$ in page 6 of \citep{BarathAl17}---is generally much more expensive; exploiting the product of kernels, $\nsamp^2 + J^2$ double integrals must be computed which has time complexity $\cali O(\nsamp^2 t^2 + J^2 \nobsf^2)$, with $t$ the size of the input discretization grid. Those computations must moreover generally be repeated many times so as to tune the multiple kernel parameters. 

\noindent{\bf Kernel Estimator (KE).} Finally, an extension of the Nadaraya-Watson kernel estimator to Banach spaces is introduced and studied in \citep{Ferraty2011}.

\subsection{Preliminary elements} \label{subsec :: preliminary elements}
\noindent{\bf Note on optimization}. We compute the KPL plug-in ridge estimator as in Algorithm \ref{alg :: solve with separable kernels} with Sylvester solver. For iterative optimization, we use L-BFGS-B \citep{ZhuAl97LBFGS}; the estimates of partial second order informations improve convergence speed. For FKRR, of the two possible approaches from Section \ref{sec :: related} of the Supplement, we use the faster Sylvester approach. For KAM we exploit the separability as well using a Sylvester solver. 

\noindent{\bf Logcosh functional loss}.
As an example of a robust integral loss, for $\gamma > 0$, we introduce $\geneloss_{\rmc{lch}}^{(\gamma)}$. It is obtained by taking $\groundloss_{\rmc{lch}}^{(\gamma)}: (a, b) \longmapsto \nicefrac 1 \gamma \log(\text{cosh}(\gamma(a - b))$ as ground loss in Equation \eqref{eq :: int loss}. This ground loss behaves similarly to the Huber loss \citep{Huber64}---almost quadratically around $0$ and almost linearly elsewhere. The parameter $\gamma$ gives us control on its behaviour around $0$, as it grows bigger,  $\groundloss_{\rmc{lch}}^{(\gamma)}$ tends to the absolute loss (see Section \ref{sec :: experiments :: supp} for examples). As opposed to our proposed integral loss $\geneloss_{\rmc{lch}}^{(\gamma)}$, the extension of the Huber loss to $\Lsqr \times \Lsqr$ \citep[e. g. ][Example 13.7]{BauschkeCombettes17} is not differentiable everywhere. 

\subsection{Toy data} \label{subsec :: expe toy}
\begin{figure}
\begin{center}
\includegraphics[width=\linewidth]{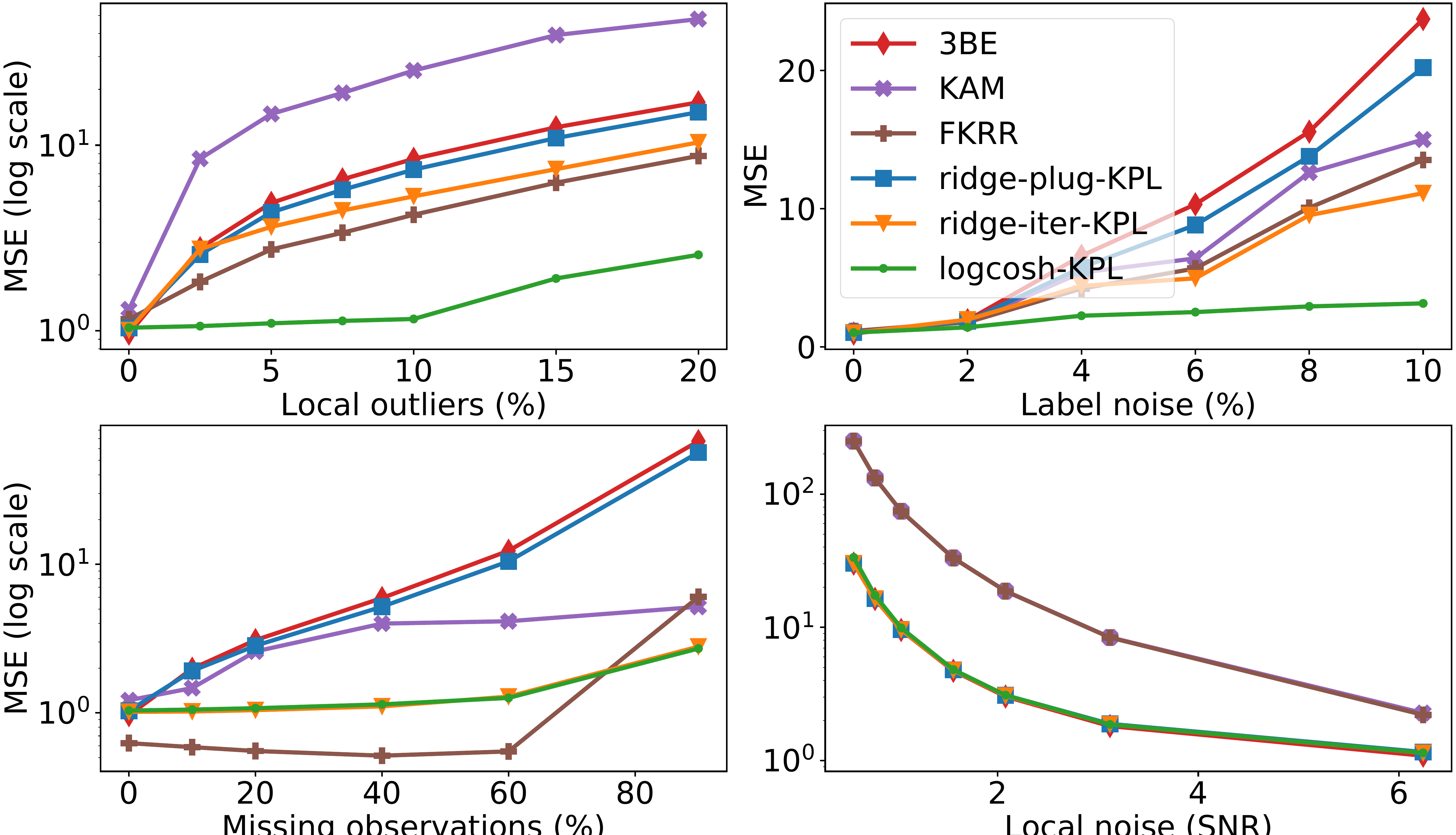}
\end{center}
\caption{Several aspects of robustness.}
\label{fig :: toy all}
\end{figure}

In this section, we take $\Kx=k \matop I$ with $k$ a scalar-valued Gaussian kernel. We use a generated toy dataset: inputs are random mixtures of cubic B-splines \citep{DeBoor01} centered at different
locations and outputs are associated mixtures of Gaussian processes (drawn once and then fixed). The full generation procedure is described in Section \ref{sec :: experiments :: supp} of the Supplement. We use $\nsamp_{\text{train}}=100$ samples for training and $\nsamp_{\text{test}}=100$ samples and use Fourier dictionaries for KPL and 3BE. 

\noindent{\bf Corruption modalities}. We study the effect of four types of corruptions of the training data: local outliers, label noise, missing observations and local noise. In the first case, observations from the output functions are replaced with random draws in their range. In the second case, some output functions are replaced with erroneous ones. In the third case we remove observations from the output functions uniformly at random. Finally, in the last one we add Gaussian noise to those observations. We then use the signal to noise ratio as x-axis; for a noise level $\sigma$ and a sample $\obsfun{\prodsamp}$, we define it as $\text{SNR}: = \frac 1 {\sigma \nsamp}  \sum_{\sampiter=1}^\nsamp \frac 1 {\nobsf_\sampiter} \sum_{\obsfiter=1}^{\nobsf_\sampiter} \left | \obsfun{y}_{\sampiter \obsfiter} \right |.$ 

\noindent{\bf Comments on the results}. The evolution of the MSEs for several levels of corruption are displayed in Figure \ref{fig :: toy all}. For each type, at least one KPL algorithm is particularly robust which demonstrates the versatility of our framework.
KPL can be combined with the functional logcosh loss to obtain a FOR algorithm that is robust to outliers (\textit{logcosh-KPL}). Dealing with partially observed functions, KPL solved iteratively using estimated gradients works especially well (\textit{ridge-iter-KPL}, \textit{logcosh-KPL}). Finally all proposed KPL algorithms are robust to local noise.
\subsection{Diffusion tensor imaging dataset (DTI)} \label{subsec :: expe dti}
\noindent{\bf Dataset.} We now consider the DTI dataset.\footnote[1]{This dataset was collected at Johns Hopkins University and the Kennedy-Krieger Institute and is freely available as a part of the \textit{Refund} R package} It consists of 382 Fractional anisotropy (FA) profiles inferred from DTI scans along two tracts---corpus callosum (CCA) and right corticospinal (RCS). The scans were performed on 142 subjects; 100 multiple sclerosis (MS) patients and 42 healthy controls. MS is an auto-immune disease which causes the immune system to gradually destroy myelin, however the structure of this process is not well understood. Using the proxy of FA profiles, we propose to predict one tract (RCS) from the other (CCA). We consider only the first $\nsamp=100$ scans of MS patients. Finally, we highlight that the functions are partially observed: significant parts of the FA profiles along the RCS tract are missing. 
\begin{table}[t]
\caption{MSEs on the DTI dataset.}
\label{tab :: results DTI}
\begin{sc}
\begin{small}
\begin{center}
\begin{tabular}{lccr}
\toprule
KE & 0.231 $\pm$ 0.025  \\
3BE & 0.227 $\pm$ 0.017 \\
KAM & 0.222 $\pm$ 0.021 \\
FKRR & 0.215 $\pm$ 0.020 \\
\midrule
Ridge-KPL & 0.211 $\pm$ 0.022 \\
Logcosh-KPL & \textbf{0.209 $\pm$ 0.020} \\
\bottomrule
\end{tabular}
\end{center}
\end{small}
\end{sc}
\end{table}

\noindent{\bf Experimental setting.} We perform linear smoothing if necessary---for FKRR and KAM. We split the data as $\nsamp_{\text{train}}=70$ and $\nsamp_{\text{test}}=30$ and use wavelets dictionaries for 3BE and KPL. For KPL, we take a kernel of the form $\Kx = k \matop D$ with $k$ a Gaussian kernel and $\matop D$ a diagonal matrix with diagonal decreasing with the corresponding wavelet scale. Finally, when using wavelets, we extend the signal symmetrically to avoid boundary effects. The MSEs are shown in Table \ref{tab :: results DTI}. 

\noindent{\bf Comments on the results.} The studied methods perform almost equally well, with a slight advantage for ours. The combination of an efficient use of wavelets (well suited to non-smooth data) with the scale-dependant regularization induced by the kernel $\Kx = k \matop D$ may explain this. 

\subsection{Synthetic speech inversion dataset} \label{subsec :: speech}
\begin{figure}[t]
\begin{center}
\includegraphics[width=\linewidth]{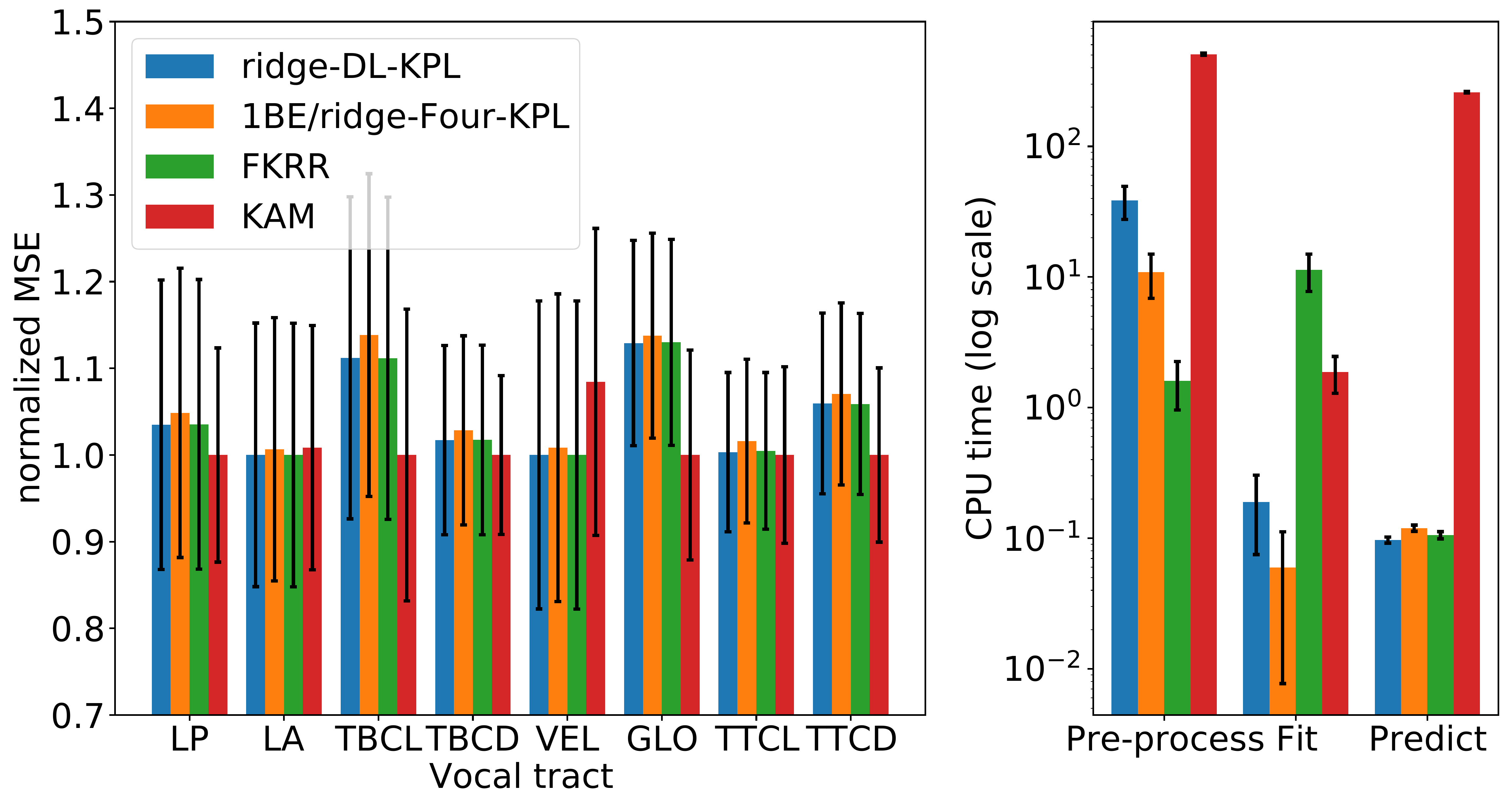}
\end{center}
\caption{MSEs and CPU times on the speech dataset.}
\label{fig :: speech}
\end{figure}

\textbf{Dataset.} We consider a speech inversion problem: from an acoustic speech signal, we estimate the underlying vocal tract (VT) configuration that produced it \citep{Richmond02}. Such information can improve performance in speech recognition systems or in speech synthesis. The dataset was introduced by \citet{MitraAl09}; it is generated by a software synthesizing words from an articulatory model. It consists of a corpus of $\nsamp=413$ pronounced words with 8 distinct VT functions: lip aperture (LA), lip protrusion (LP), tongue tip constriction degree (TTCD), tongue tip constriction location (TTCL), tongue body constriction degree (TBCD), tongue body constriction location (TBCL), Velum (VEL) and Glottis (GLO). 

\noindent{\bf Experimental setting.} To match words of varying lengths, we extend symmetrically both the input sounds and the VT functions matching the longest word. We represent the sounds using 13 mel-frequency cepstral coefficients (MFCC), the input data thus consist of vector-valued functions. We split the data as $\nsamp_{\text{train}} = 300$ and $\nsamp_{\text{test}}=113$. We normalize the output functions so that they take their values in $[-1, 1]$. To deal with the vector-valued functional inputs, we use an integral of Gaussian kernels on the standardized MFCCs (KPL, FKRR, 1BE/KPL). For KAM we take Laplace kernels for both input and output locations, and use a Gaussian kernel defined on  $\bb R^{13}$ to compare the evaluations of the standardized MFCCs (see Section \ref*{sec :: experiments :: supp} of the Supplement for details on the employed kernels). 

The MSEs for the 8 VTs (left panel) as well as an analysis of the computation times (right panel) are displayed in Figure \ref{fig :: speech}. \textit{Pre-process} entails all pre-processing operations (e. g. computing the the kernel matrices, learning the dictionary, computing the gram matrix of $\dict$), \textit{fit} measures the fitting time per se (solving the relevant linear system) and \textit{predict} measures the prediction time on the test set (for all methods, it entails computing new kernel matrices). \textit{ridge-DL-KPL} is the KPL ridge estimator with $\dict$ learnt by solving Problem \eqref{prob :: dictionary learning} with $\cali C$ and $\Omega_{\bb R^\ndict}$ as introduced in Section \ref{subsec :: dictionary learning}. \textit{1BE/ridge-Four-KPL} corresponds to 1BE (or equivalently KPL with $\Kx=k\matop I$) with $\dict$ a Fourier family. To give an order of idea, we use $30$ atoms for the learnt dictionaries while the numbers of atoms selected by cross-validation for the Fourier ones are around $100$. We do not include KE in the figure as it performed poorly on this dataset. 

\newpage
\textbf{Comments on the results.}
For 4 out of 8 VTs (LP, LA, TBCD, TTCL), the performances of the methods are comparable, with KAM being slightly more precise. On the remaining 4 VTs, ridge-DL-KPL, 1BE/ridge-Four-KPL and FKRR beat KAM on one (VEL) and are beaten by KAM on the 3 other (TBCL, GLO, TTCD). This could be explained by the fact that KAM predicts locally the functions while the other three methods have more of a global approach. Depending on the properties of the functions and the nature of the dependency between input and output functions, one or the other could be more favorable. However KAM's main weakness is its computational cost for pre-processing and prediction, which makes it unpractical to use on medium-sized datasets and impossible to use on larger ones. The particularily time-consuming operation in question is the computation of an analogous to the kernel matrix (see Section \ref{subsec :: related main}). The three other methods display very close MSEs, with 1BE/ridge-Four-KPL being a bit less precise than the two others. Ridge-DL-KPL and FKRR perform equally well. However for the former the main computational burden comes from a pre-processing operation (learning the dictionary), which is performed only once per dataset (or once per fold in a cross-validation); whereas for the latter it comes from fitting the method, which must be done many times so as to tune its parameters. Moreover for Ridge-DL-KPL, once a number of atoms yielding a good approximation has been found and the dictionary has been learnt, no further tuning must be performed for the outputs,  whereas for FKRR an output kernel must be chosen. 

\section{CONCLUSION} \label{sec:6-conclusion}
We introduced PL, a general dictionary-based framework to address FOR. It can be used with a wide class of functional losses and non orthonormal dictionaries. Through an extensive study in the context of vv-RKHSs, we illustrated some aspects of its versatility and demonstrated that the approach is efficient and can be backed theoretically in some cases. For future research, PL could be instantiated using other hypothesis classes than vv-RKHS and the possibilities offered by dictionary learning could be investigated further. 

\subsubsection*{Acknowledgements}
The authors thank Zolt\'an Szab\'o for his insightful feedbacks. This work was supported by the Télécom Paris research chair on Data Science and Artificial Intelligence for Digitalized Industry and Services (DSAIDIS).

\bibliography{biblio.bib}

\appendix

\onecolumn

{\huge \bf SUPPLEMENTARY MATERIAL.}
\vspace{0.5cm}
\\
This supplementary material is organized as follows.  Section \ref{sec :: vvRKHSs} provides a reminder about operator-valued kernels and vector-valued RKHSs. In Section \ref{sec :: proofs KPL}, we detail the proofs of the propositions from Section \ref{sec :: KPL} of the main paper. In Section \ref{sec :: integral operators}, we introduce key concepts from learning theory using integral operators. Section \ref{sec :: supporting results} is dedicated to supporting results for the the theoretical proofs. The proofs of the two propositions from Section \ref{sec :: theory} of the main paper are detailed in Section \ref{sec :: proofs theory}. In Section \ref{sec :: additional}, some additional results on projection learning and kernel-based projection learning are presented. Section \ref{sec :: related} is dedicated to a detailed description of related work. Eventually, in section \ref{sec :: experiments :: supp}, experimental details supplements are laid out. The Python code is provided in a separate zip file.

\section{OVKs AND VV-RKHSs}\label{sec :: vvRKHSs}
First, we give the definition of an operator-valued kernel (OVK) and of its associated reproducing kernel Hilbert space (RKHS).

\begin{definition}
Let $\inspace $ be a space on which a kernel can be defined and let $\cali U$ be a Hilbert space. An operator-valued kernel on $\inspace \times \inspace$ is a function $ \Kx: \inspace \times \inspace \rightarrow \cali L(\cali U) $ satisfying the two following conditions:

\begin{itemize}
\item Symmetry: for all $x,x' \in \inspace $, $ \Kx (x,x') = \Kx (x',x)^ \adj $.
\item Positivity: for all $\nsamp \in \bb N^*$, for all $(x_1,...,x_\nsamp) \in \inspace^\nsamp$, for all $(u_1,...,u_\nsamp) \in \cali U^\nsamp$,

$$\sum_{\sampiter=1}^\nsamp \sum_{\sampiterbis=1}^\nsamp \langle u_\sampiter, \Kx (x_\sampiter, x_\sampiterbis) u_\sampiterbis \rangle_{\cali U} \geq 0~.$$
\end{itemize}
\end{definition}

The following theorem shows that given an OVK, it is possible to build a unique RKHS associated to it.

\begin{theorem} \citep{Senkene73,CarmeliAl10}
\label{thm :: def vvrkhs reproducing}
Let $\Kx$ be a given operator-valued kernel $\Kx: \inspace \times \inspace \rightarrow \cali L(\cali U) $. For any $x \in \inspace$, we define $\Kx_x$ as

\begin{equation} \label{eq :: def Kx}
\Kx_x: u \longmapsto \Kx_x u, ~~ \text{with} ~~ \Kx_x u: x' \longmapsto \Kx(x', x)u.
\end{equation}

There exists a unique Hilbert space $\HKx$ of functions $h:\inspace \rightarrow \cali U $ satisfying the two conditions:

\begin{itemize}
\item For all $x \in \inspace$, $\Kx_x \in \cali L(\cali U, \HKx)$.
\item For all $h \in \HKx,~ h(x) = \Kx_x^\adj h$.
\end{itemize}

The second condition is called the reproducing property; it implies that for all $x \in \inspace$, for all $u \in \cali U$ and for all $h \in \HKx, \; $

\begin{equation}\label{eq :: reproducing property supp}
\langle \Kx_x u, h \rangle_{\HKx}  = \langle u, h(x)\rangle_{\cali U}.
\end{equation}
\end{theorem}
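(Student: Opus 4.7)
The plan follows the standard Moore--Aronszajn construction, adapted to the operator-valued setting. First I would introduce the pre-Hilbert space $\cali H_0 \subset \cali F(\inspace, \cali U)$ as the linear span of $\{\Kx_x u : x \in \inspace,\, u \in \cali U\}$ and equip it with the bilinear form
$$
\Bigl\langle \sum_{i} \Kx_{x_i} u_i,\, \sum_{j} \Kx_{x'_j} v_j \Bigr\rangle_0 := \sum_{i,j} \langle u_i,\, \Kx(x_i, x'_j) v_j \rangle_{\cali U}.
$$
Rewriting the right-hand side as $\sum_i \langle u_i,\, g(x_i) \rangle_{\cali U}$ with $g := \sum_j \Kx_{x'_j} v_j$ shows that the form only depends on the actual functions being paired, hence is well defined regardless of the particular representations. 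Symmetry then follows from the symmetry of $\Kx$ and non-negativity from its positivity property. The ``prototype'' identity $\langle \Kx_x u, h \rangle_0 = \langle u, h(x) \rangle_{\cali U}$ combined with Cauchy--Schwarz on the semi-definite form yields $|\langle u, h(x)\rangle_{\cali U}|^2 \le \langle u, \Kx(x,x) u\rangle_{\cali U} \cdot \langle h, h\rangle_0$, so $\langle h, h\rangle_0 = 0$ forces $h \equiv 0$ pointwise, upgrading the form to a genuine inner product.

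Next I would complete $\cali H_0$ to a Hilbert space and realize it concretely as a subspace of $\cali F(\inspace, \cali U)$. The evaluation inequality $\|h(x)\|_{\cali U} \le \sqrt{\|\Kx(x,x)\|_{\boundedops(\cali U)}}\,\|h\|_0$, obtained again from the prototype reproducing identity, ensures that every Cauchy sequence $(h_n)$ in $\cali H_0$ converges pointwise in $\cali U$ to some limit function. I would then associate each abstract Cauchy class with this pointwise limit and verify that the map is injective, so that the completion $\HKx$ embeds faithfully in $\cali F(\inspace, \cali U)$. The first stated condition, $\Kx_x \in \boundedops(\cali U, \HKx)$, is immediate from $\|\Kx_x u\|_{\HKx}^2 = \langle u, \Kx(x,x) u\rangle_{\cali U}$, and the condition $h(x) = \Kx_x^\adj h$ is just the density extension of the prototype identity to all of $\HKx$, from which Equation \eqref{eq :: reproducing property supp} follows by the very definition of the adjoint.

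For uniqueness, any Hilbert space $\cali H'$ meeting both conditions automatically satisfies $\langle \Kx_x u, h\rangle_{\cali H'} = \langle u, h(x)\rangle_{\cali U}$; specializing $h$ to elements of the form $\Kx_{x'} v$ forces the inner product of $\cali H'$ to coincide with $\langle \cdot,\cdot\rangle_0$ on $\cali H_0 \subset \cali H'$. Completeness of $\cali H'$ then gives $\HKx \subset \cali H'$, and conversely the orthogonal complement of $\cali H_0$ inside $\cali H'$ consists of functions whose every evaluation is orthogonal to every $u \in \cali U$, hence vanishes identically, so $\cali H_0$ is dense in $\cali H'$ and the two spaces coincide. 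The main subtlety I anticipate is in the middle step, namely the careful identification of the abstract completion with a genuine space of $\cali U$-valued functions; once pointwise evaluation is shown to be a bounded, injective map from the completion into $\cali F(\inspace, \cali U)$, the rest is a direct transcription of the scalar Moore--Aronszajn argument to the vector-valued case.
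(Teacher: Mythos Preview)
Your proposal is a correct outline of the standard Moore--Aronszajn construction in the operator-valued setting. Note, however, that the paper does not actually supply a proof of this theorem: it is stated as a known background result with citations to \citet{Senkene73} and \citet{CarmeliAl10}, and is then used without further justification. Your construction is essentially the one found in those references, so there is nothing to compare against in the paper itself.
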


The Hilbert space $\HKx$ is the RKHS associated to the kernel $\Kx$.

The scalar product on $\HKx$ between two functions $h_0 = \sum_{\sampiter=1}^\nsamp \Kx_{x_\sampiter} u_\sampiter$ and $h_1 = \sum_{\sampiterbis=1}^{\nsamp'} \Kx_{x_\sampiterbis'} u_\sampiterbis'$ with $x_\sampiter, x_\sampiterbis' \in \inspace, \, u_\sampiter, u_\sampiterbis' \in \cali U,$ is defined as:

\begin{equation*}
\langle h_0, h_1 \rangle_{\HKx} = \sum_{\sampiter=1}^\nsamp \sum_{\sampiterbis=1}^{\nsamp'} \langle u_\sampiter, \Kx(x_\sampiter,x_\sampiterbis') u_\sampiterbis\rangle_{\cali U}.
\end{equation*}

The corresponding norm  $\| \cdot \|_{\HKx}$ is defined by $\| h \| ^2_{\HKx} = \langle h, h \rangle_{\HKx}$.

This RKHS $\HKx$ can be built by taking the closure of the set $\{\Kx_x u~ \vert x \in \inspace, \, u \in \cali U \}$ with respect to the topology induced by $\| \cdot \|_{\HKx}$.

Finally, we state the following Lemma which we use in the subsequent proofs. We now take $\cali U = \bb R^\ndict$ in accordance with the use we make of vector-valued RKHSs (vv-RKHS) in the main paper.

\begin{lemma} \citep{MicchelliPontil05} \label{lemma :: 2.1f michellipontil}
Let $\cali \HKx \subset \cali F(\inspace, \bb R^\ndict)$ a vv-RKHS associated to a positive matrix-valued kernel $\Kx$. Then we have for all $x \in \inspace $:
\begin{equation*}
\|h(x)\|_{\bb R^\ndict} \leq \| h \|_{\HKx}\ \|\Kx(x, x) \|^{\nicefrac 1 2}_{\boundedops(\bb R^\ndict)}.
\end{equation*}
\end{lemma}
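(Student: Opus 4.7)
The plan is to exploit the reproducing property from Theorem \ref{thm :: def vvrkhs reproducing} twice, interleaved with a Cauchy--Schwarz inequality in $\HKx$ and the operator-norm bound in $\cali L(\bb R^\ndict)$. The strategy rewrites $\|h(x)\|_{\bb R^\ndict}^2$ as an inner product in $\HKx$ involving $\Kx_x$, then transfers back to an inner product in $\bb R^\ndict$ involving $\Kx(x,x)$.

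More concretely, I would proceed in three steps. First, starting from $\|h(x)\|_{\bb R^\ndict}^2 = \langle h(x), h(x) \rangle_{\bb R^\ndict}$, apply the reproducing identity \eqref{eq :: reproducing property supp} with $u = h(x)$ to obtain
\begin{equation*}
\|h(x)\|_{\bb R^\ndict}^2 = \langle \Kx_x h(x), h \rangle_{\HKx}.
\end{equation*}
Second, apply Cauchy--Schwarz in $\HKx$ to get $\|h(x)\|_{\bb R^\ndict}^2 \leq \|\Kx_x h(x)\|_{\HKx} \, \|h\|_{\HKx}$. Third, bound $\|\Kx_x h(x)\|_{\HKx}^2$ by expanding it as $\langle \Kx_x h(x), \Kx_x h(x) \rangle_{\HKx}$ and re-applying \eqref{eq :: reproducing property supp} (using that $\Kx_x h(x) \in \HKx$ and its evaluation at $x$ equals $\Kx(x,x) h(x)$) to recover
\begin{equation*}
\|\Kx_x h(x)\|_{\HKx}^2 = \langle h(x), \Kx(x,x) h(x) \rangle_{\bb R^\ndict} \leq \|\Kx(x,x)\|_{\cali L(\bb R^\ndict)} \, \|h(x)\|_{\bb R^\ndict}^2,
\end{equation*}
where the inequality is the standard operator-norm bound in $\bb R^\ndict$. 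Combining the two inequalities gives $\|h(x)\|_{\bb R^\ndict}^2 \leq \|\Kx(x,x)\|_{\cali L(\bb R^\ndict)}^{1/2} \|h(x)\|_{\bb R^\ndict} \, \|h\|_{\HKx}$, and dividing by $\|h(x)\|_{\bb R^\ndict}$ (when non-zero; the degenerate case $h(x) = 0$ is trivial) yields the claim.

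There is not really a substantive obstacle here: the result is essentially the vector-valued analogue of the classical scalar RKHS pointwise bound, and everything reduces to two applications of the reproducing property plus Cauchy--Schwarz. The only mild care needed is in step three, to verify that $\Kx_x h(x) \in \HKx$ (so that \eqref{eq :: reproducing property supp} applies to it) and that the pairing with $\Kx(x,x)$ comes out correctly; this follows directly from $\Kx_x \in \cali L(\bb R^\ndict, \HKx)$ as stated in Theorem \ref{thm :: def vvrkhs reproducing}.
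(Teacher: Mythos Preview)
Your argument is correct and is the standard proof of this pointwise bound in vector-valued RKHSs: two applications of the reproducing property \eqref{eq :: reproducing property supp} sandwiching a Cauchy--Schwarz step, followed by cancellation of $\|h(x)\|_{\bb R^\ndict}$. Note, however, that the paper does not actually supply its own proof of this lemma---it simply states the result and attributes it to \citet{MicchelliPontil05}---so there is no paper-proof to compare against; your derivation is precisely the one given in that reference.
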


Additionally, since for all $x \in \inspace$, $h(x) = \Kx_x^\adj h$, this implies that 
\begin{equation} \label{eq :: op norm Kx}
\| \Kx_x\|_{\boundedops(\bb R^\ndict, \HKx)} = \| \Kx_x^\adj \|_{\boundedops(\HKx, \bb R^\ndict)} \leq \|\Kx(x, x) \|^{\nicefrac 1 2}_{\boundedops(\bb R^\ndict)}~.
\end{equation}

\section{PROOFS FOR SECTION \ref{sec :: KPL}}\label{sec :: proofs KPL}
\subsection{Proof of Proposition~\ref{prop :: representer primal} from the main paper} \label{subsec :: proof of representer primal}
We recall first the proposition which corresponds to Proposition \ref{prop :: representer primal} of the main paper. Given $\Kx: \inspace \times \inspace \longmapsto \cali L(\bb R^\ndict)$ an OVK with $\HKx \subset \cali F(\inspace, \bb R^\ndict)$ its associated vv-RKHS, we want to solve the following optimization problem 

\begin{equation} \label{prob :: empirical risk dict vvrkhs supp}
\min_{h \in \HKx} \frac 1 \nsamp \sum_{\sampiter=1}^\nsamp \genelossev {y_\sampiter}{\projope h(x_\sampiter)} + \lambda  \|h \|_{\HKx}^2.
\end{equation}

\begin{proposition} \label{prop :: representer primal supp}
\textbf{(Representer theorem.)} For $\geneloss$ continuous and convex with respect to its second argument, Problem \eqref{prob :: empirical risk dict vvrkhs supp} admits a unique minimizer $h_{\prodsamp}^\lambda$. Moreover there exists $\alpha \in \bb R^{\ndict \times \nsamp} $ such that $h_{\prodsamp}^\lambda = \sum_{\sampiterbis=1}^\nsamp \Kx_{x_\sampiterbis} \alpha_\sampiterbis$ .
\end{proposition}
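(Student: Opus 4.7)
The plan splits naturally into (i) existence and uniqueness of a minimizer and (ii) the representer form.

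For existence and uniqueness, I would argue that the objective $J(h) := \frac{1}{n}\sum_{i=1}^n \ell(y_i, \Phi h(x_i)) + \lambda \|h\|_{H_K}^2$ is a strongly convex, continuous, coercive function on the Hilbert space $H_K$. Convexity of the data-fitting term follows because, for each fixed $i$, the map $h \mapsto \Phi h(x_i)$ is linear (the evaluation $h \mapsto h(x_i) = K_{x_i}^{\adj} h$ is a bounded linear operator from $H_K$ into $\mathbb{R}^d$ by Lemma~\ref{lemma :: 2.1f michellipontil}, and $\Phi$ is linear into $L^2(\Theta)$), and $\ell(y_i, \cdot)$ is assumed convex. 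Continuity follows similarly from boundedness of the evaluation operator and continuity of $\ell(y_i, \cdot)$. The quadratic penalty $\lambda\|h\|_{H_K}^2$ with $\lambda > 0$ supplies both strong convexity and coercivity, and the standard result for strongly convex, lower semi-continuous, coercive functionals on a Hilbert space yields the existence of a unique minimizer $h_{\prodsamp}^\lambda$.

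For the representer form, I would decompose $H_K$ orthogonally along the finite-dimensional (hence closed) subspace
\[
V := \mathrm{Span}\{K_{x_j} u : j \in [n],\, u \in \mathbb{R}^d\} \subset H_K,
\]
writing any $h \in H_K$ as $h = h_V + h_{V^\perp}$ with $h_V \in V$ and $h_{V^\perp} \in V^\perp$. The key observation is that for any $i$ and any $u \in \mathbb{R}^d$, the reproducing property gives
\[
\langle u, h_{V^\perp}(x_i) \rangle_{\mathbb{R}^d} = \langle K_{x_i} u, h_{V^\perp} \rangle_{H_K} = 0,
\]
since $K_{x_i} u \in V$. Thus $h_{V^\perp}(x_i) = 0$ for every $i$, so $\Phi h(x_i) = \Phi h_V(x_i)$ and the data-fitting term depends only on $h_V$. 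Combined with the orthogonal Pythagorean identity $\|h\|_{H_K}^2 = \|h_V\|_{H_K}^2 + \|h_{V^\perp}\|_{H_K}^2$, this shows $J(h_V) \leq J(h)$ with equality only when $h_{V^\perp} = 0$. By uniqueness, the minimizer must lie in $V$, and by definition of $V$ there exist vectors $\alpha_1,\ldots,\alpha_n \in \mathbb{R}^d$ (i.e.\ $\alpha \in \mathbb{R}^{d \times n}$) such that $h_{\prodsamp}^\lambda = \sum_{j=1}^n K_{x_j} \alpha_j$.

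The only subtle point I anticipate is ensuring that the data-fitting term is well-behaved enough to apply the existence theorem (in particular, that $h \mapsto \Phi h(x_i)$ is $L^2$-valued and bounded, which follows from $\Phi \in \mathcal{L}(\mathbb{R}^d, L^2(\Theta))$ and the bound in Lemma~\ref{lemma :: 2.1f michellipontil}); beyond that, the argument is a routine adaptation of the classical scalar representer theorem to the vector-valued RKHS with the additional linear projection $\Phi$, and no operator inversion or integral-operator machinery is needed.
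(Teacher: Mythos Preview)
Your proposal is correct and follows essentially the same approach as the paper: both arguments establish existence/uniqueness via strict (or strong) convexity supplied by the $\lambda\|h\|_{\HKx}^2$ term, then obtain the representer form by orthogonally decomposing $\HKx$ along the finite-dimensional span of the $\Kx_{x_j}u$ and using the reproducing property to show the data-fitting term depends only on the projection. The only cosmetic difference is that the paper verifies invariance of the data term pointwise in $\theta$ via $\langle \Kx_{x_i}\dict(\theta), h\rangle_{\HKx}$, whereas you show directly that $h_{V^\perp}(x_i)=0$ by testing against arbitrary $u\in\bb R^\ndict$; your route is slightly more direct but the substance is identical.
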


\begin{proof}
Since the loss is assumed to be continuous and convex with respect to the second argument, the objective $h \longmapsto \emprisk (\projope \circ h, \prodsamp) + \lambda \|h \|_{\HKx}^2 $ is thus a continuous and strictly convex function on $\HKx$ (strictly because $\lambda > 0$). As a consequence, it admits a unique minimizer on $\HKx$ \citep{BauschkeCombettes17}, which we denote by $h_{\prodsamp}^\lambda $. 

Let $\cali U: = \left \{h|~ h = \sum_{\sampiterbis=1}^\nsamp \Kx_{x_\sampiterbis} \alpha_\sampiterbis,~ \alpha \in \bb R^{\ndict \times \nsamp} \right \}$. Since it is a closed
subspace of $\HKx$, $\HKx = \cali U \oplus \cali U^{\perp}$ and we can decompose $h_{\prodsamp}^\lambda$ as $h_{\prodsamp}^\lambda = h_{\prodsamp, \cali U}^\lambda + h_{\prodsamp, \cali U^\perp}^\lambda$ with 
$(h_{\prodsamp, \cali U}^\lambda, h_{\prodsamp, \cali U^\perp}^\lambda) \in \cali U \times \cali U^{\perp}$. We recall that $\dict \in \Lsqr^\ndict = (\dict_\dictiter)_{\dictiter=1}^\ndict$ is the dictionary associated to $\projope$ (see Definition \ref{def :: projection operator} of the main paper) and we take the convention that for $\theta \in \outfuncdom$, $\dict(\theta) = (\dict_\dictiter(\theta))_{\dictiter=1}^\ndict \in \bb R^\ndict$. Now, for all $\sampiter \in [\nsamp]$ and $\theta \in \outfuncdom $,  from Theorem \ref{thm :: def vvrkhs reproducing}, we have:
\begin{equation*}
(\projope h_\prodsamp^\lambda(x_\sampiter))(\theta) = \langle \dict(\theta),  h_{\prodsamp}^\lambda(x_\sampiter)
\rangle_{\bb R^\ndict}  = \langle \Kx_{x_\sampiter} \dict(\theta), h_{\prodsamp}^\lambda \rangle_{\HKx}.
\end{equation*}
Since $\Kx_{x_\sampiter} \phi(\theta)\in \cali U$, we get that
\begin{equation*}
(\projope h_\prodsamp^\lambda(x_\sampiter))(\theta) = \langle\Kx_{x_\sampiter} \dict(\theta), h_{\prodsamp, \cali U}^\lambda \rangle_{\HKx}
 = \langle \dict(\theta), h_{\prodsamp, \cali U}^\lambda (x_\sampiter) \rangle_{\bb R^\ndict} = (\projope h_{\prodsamp, \cali U}^\lambda (x_\sampiter))(\theta) \;.
\end{equation*}
Then, on the one hand the data-attach term in the criterion to minimize is unchanged when replacing $h_\prodsamp^\lambda$ by its projection $h_{\prodsamp, \cali U}^\lambda $ onto $\cali U$. On the other hand, the penalty $\| h_{\prodsamp}^\lambda \|_{\cali H_{\Kx}}^2$ decreases if we replace $h_{\prodsamp}^\lambda$ by $h_{\prodsamp, \cali U}^\lambda$, hence we must have $h_{\prodsamp}^\lambda = h_{\prodsamp, \cali U}^\lambda $.
\end{proof}

\subsection{Proof of Proposition \ref{prop :: closed form ridge} from the main paper} \label{subsec :: proof closed ridge supplementary}
First, we recall the proposition which corresponds to Proposition \ref{prop :: closed form ridge} of the main paper. We want to solve the following (Problem (\ref{prob :: problem in ridge form}) from the main paper):

\begin{align}
\min_{\alpha \in \bb R^{\ndict \times \nsamp}} \frac 1 \nsamp \left \| \outsamp - \projope_{(\nsamp)} \pmb{\Kx} \mattovec(\alpha) \right \|_{\Lsqr^\nsamp}^2 + \lambda \langle \mattovec(\alpha), \pmb {\Kx} \mattovec(\alpha)\rangle_{\bb R^{\ndict \nsamp}}.\label{prob :: problem in ridge form supp}
\end{align}

\begin{proposition} \label{prop :: closed form ridge supp}
\textbf{(Ridge solution)}
The minimum in Problem \eqref{prob :: problem in ridge form supp} is achieved by any $\alpha^* \in \bb R^{\ndict \times \nsamp} $ verifying
\begin{equation} \label{eq :: closed form ridge supp}
\left( \pmb {\Kx} (\projope^\adj \projope)_{(\nsamp)} \pmb {\Kx} + \nsamp \lambda \pmb {\Kx}
\right ) \mattovec (\alpha^*):=  \pmb {\Kx} \projope^\adj_{(\nsamp)} \outsamp.
\end{equation}
Such $\alpha^*$ exists. Moreover if $\pmb {\Kx}$ is full rank then $\left( (\projope^\adj \projope)_{(\nsamp)} \pmb {\Kx} + \nsamp \lambda \pmb {\matop I}
\right )$ is invertible and $\alpha^*$ is such that
\begin{equation}
\mattovec (\alpha^*) =  \left( (\projope^\adj \projope)_{(\nsamp)} \pmb {\Kx} + \nsamp \lambda \pmb {\matop I}
\right )^{-1} \projope^\adj_{(\nsamp)} \outsamp.
\end{equation}
We then define the ridge estimator as $ \regridge := \sum_{\sampiterbis=1}^\nsamp \Kx_{x_\sampiterbis} \alpha^*_\sampiterbis$.
\end{proposition}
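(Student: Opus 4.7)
The plan is to derive the equation via first-order optimality and then handle invertibility in the full-rank case. By Proposition~\ref{prop :: representer primal supp}, the unique minimizer of Problem~\eqref{prob :: empirical risk dict vvrkhs supp} lies in $\cali U = \spanspace\{\Kx_{x_j}\}_{j=1}^\nsamp$, so writing $h = \sum_{j=1}^\nsamp \Kx_{x_j} \alpha_j$ and using the reproducing property reduces the problem to Problem~\eqref{prob :: problem in ridge form supp} in the variable $\mattovec(\alpha) \in \bb R^{\ndict \nsamp}$. The existence of a solution $\alpha^*$ (though not its uniqueness) follows since the minimum over $h$ exists and is attained on $\cali U$.

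Next, I would compute the gradient of the (convex, smooth) objective in~\eqref{prob :: problem in ridge form supp} with respect to $\mattovec(\alpha)$. Using $\projope_{(\nsamp)}^\adj \projope_{(\nsamp)} = (\projope^\adj \projope)_{(\nsamp)}$ (a block diagonal structure that is straightforward to check from Definition~\ref{def :: projection operator}), the gradient reads
\begin{equation*}
\tfrac{2}{\nsamp}\,\pmb{\Kx} (\projope^\adj \projope)_{(\nsamp)} \pmb{\Kx}\,\mattovec(\alpha) - \tfrac{2}{\nsamp}\,\pmb{\Kx} \projope^\adj_{(\nsamp)} \outsamp + 2 \lambda\, \pmb{\Kx}\,\mattovec(\alpha).
\end{equation*}
Setting it to zero yields exactly Equation~\eqref{eq :: closed form ridge supp}. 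Since both terms of the objective are convex (the second because $\pmb{\Kx}$ is positive semi-definite), this stationarity condition is equivalent to global optimality.

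For the full-rank case, the key step is showing that $M := (\projope^\adj \projope)_{(\nsamp)} \pmb{\Kx} + \nsamp \lambda \pmb{\matop I}$ is invertible. The plan is to observe that $(\projope^\adj \projope)_{(\nsamp)} \pmb{\Kx}$ is similar to the symmetric positive semi-definite matrix $\pmb{\Kx}^{1/2} (\projope^\adj \projope)_{(\nsamp)} \pmb{\Kx}^{1/2}$ via conjugation by $\pmb{\Kx}^{1/2}$ (well defined and invertible when $\pmb{\Kx}$ is full rank), so its spectrum is contained in $[0,\infty)$. Hence the spectrum of $M$ is contained in $[\nsamp\lambda, \infty)$ and $M$ is invertible. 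Once this is established, multiplying Equation~\eqref{eq :: closed form ridge supp} on the left by $\pmb{\Kx}^{-1}$ gives $M \mattovec(\alpha^*) = \projope^\adj_{(\nsamp)} \outsamp$, and inverting $M$ yields Equation~\eqref{eq :: closed form ridge inverse}.

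\textbf{Main obstacle.} The main subtlety is not the gradient calculation but verifying invertibility of the non-symmetric product $(\projope^\adj \projope)_{(\nsamp)} \pmb{\Kx} + \nsamp \lambda \pmb{\matop I}$; the similarity argument above is the cleanest route, although one could alternatively argue directly that $M v = 0$ implies $v^\trans \pmb{\Kx}^{1/2} (\projope^\adj \projope)_{(\nsamp)} \pmb{\Kx}^{1/2} v = -\nsamp\lambda \|v\|^2 \leq 0$ after an appropriate change of variables, forcing $v = 0$.
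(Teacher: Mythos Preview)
Your proposal is correct and complete. The route differs from the paper's in a few places, all of which are legitimate alternatives.

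\textbf{Optimality condition.} You compute the gradient of the convex quadratic and set it to zero; the paper instead expands the objective, subtracts the value at a candidate $\pmb\alpha^*$ satisfying Equation~\eqref{eq :: closed form ridge supp}, and recognizes the difference as the nonnegative quadratic form $\langle \pmb\alpha-\pmb\alpha^*,\,\pmb{\Kx}((\projope^\adj\projope)_{(\nsamp)}\pmb{\Kx}+\nsamp\lambda\pmb{\matop I})(\pmb\alpha-\pmb\alpha^*)\rangle$. Both are standard; your gradient route is shorter.

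\textbf{Existence of $\alpha^*$.} You invoke Proposition~\ref{prop :: representer primal supp}: the unique minimizer $h_\prodsamp^\lambda$ lies in $\cali U$, so any $\alpha$ representing it minimizes the finite-dimensional objective and therefore satisfies the first-order condition. The paper instead gives a self-contained linear-algebra argument: $\pmb{\Kx}((\projope^\adj\projope)_{(\nsamp)}\pmb{\Kx}+\nsamp\lambda\pmb{\matop I})$ is symmetric nonnegative with null space equal to that of $\pmb{\Kx}$, hence bijective on $\rangespace(\pmb{\Kx})$, and the right-hand side $\pmb{\Kx}\projope^\adj_{(\nsamp)}\outsamp$ lies in $\rangespace(\pmb{\Kx})$. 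Your argument is cleaner if one is willing to lean on the representer theorem; the paper's is independent of it.

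\textbf{Invertibility when $\pmb{\Kx}$ is full rank.} You use the similarity $\pmb{\Kx}^{1/2} M \pmb{\Kx}^{-1/2}=\pmb{\Kx}^{1/2}(\projope^\adj\projope)_{(\nsamp)}\pmb{\Kx}^{1/2}+\nsamp\lambda\pmb{\matop I}$, which is symmetric positive definite. The paper observes directly that $M=\pmb{\Kx}^{-1}\bigl(\pmb{\Kx}(\projope^\adj\projope)_{(\nsamp)}\pmb{\Kx}+\nsamp\lambda\pmb{\Kx}\bigr)$ is a product of two invertible matrices. Both arguments are one line; yours has the slight bonus of locating the spectrum of $M$ in $[\nsamp\lambda,\infty)$.
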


\begin{proof}
For $\pmb \alpha \in \bb R^{\ndict \nsamp}$ we consider the objective function 
$$
\frac 1 \nsamp \left \| \projope_{(\nsamp)} \pmb{\Kx} \pmb \alpha\right\|_{\Lsqr^\nsamp}^2 - \frac 2 \nsamp \langle  \outsamp,\projope_{(\nsamp)}\pmb{\Kx} \pmb \alpha \rangle_{\Lsqr^\nsamp} +\lambda \langle \pmb \alpha,\pmb{\Kx} \pmb \alpha \rangle_{\bb R^{\ndict \nsamp}}\;.
$$

Up to an additional term not dependant on $\pmb \alpha$, this corresponds to the objective function in Problem \eqref{prob :: problem in ridge form supp} where we have set $\pmb \alpha = \mattovec (\alpha)$ to simplify the exposition. 

Using that
$(\projope_{(\nsamp)})^\adj \projope_{(\nsamp)}=\projope_{(\nsamp)}^\adj \projope_{(\nsamp)} = (\projope^\adj\projope)_{(\nsamp)}$, that $ \pmb{\Kx}^\adj= \pmb{\Kx}$ and multiplying by $\nsamp$, we can consider as objective function
\begin{align*}
V(\pmb \alpha) &:=\langle  \pmb \alpha, \pmb{\Kx}(\projope^\adj\projope)_{(\nsamp)} \pmb{\Kx} \pmb \alpha \rangle_{\bb R^{\ndict \nsamp}} - 2 \langle \projope_{(\nsamp)}^\adj \outsamp,\pmb{\Kx} \pmb \alpha \rangle_{\bb R^{\ndict \nsamp}} + \nsamp \lambda \langle \pmb \alpha, \pmb{\Kx} \pmb \alpha \rangle_{\bb R^{\ndict \nsamp}}\\
& = \langle  \pmb \alpha, \pmb{\Kx} \left((\projope^\adj\projope)_{(\nsamp)} \pmb{\Kx} + \nsamp \lambda\, \pmb {\matop I} \right ) \pmb \alpha \rangle_{\bb R^{\ndict \nsamp}} - 2 \langle \projope_{(\nsamp)}^\adj  \outsamp, \pmb{\Kx} \pmb \alpha\rangle_{\bb R^{\ndict \nsamp}}    \;.
\end{align*}
Let $\pmb \alpha^* \in \bb R^{\ndict \nsamp}$ be such that 
\begin{equation*} 
\left(\pmb {\Kx}(\projope^\adj \projope)_{(\nsamp)} \pmb {\Kx} + \nsamp \lambda \pmb {\Kx} \right )\pmb \alpha^*= \pmb {\Kx}\projope^\adj_{(\nsamp)} \outsamp \;.
\end{equation*}
We want to prove that $\pmb \alpha^*$ is then a solution to Problem \eqref{prob :: problem in ridge form supp}. Observe now that
\begin{align}
\langle  \pmb \alpha^*, \pmb{\Kx} \left((\projope^\adj \projope)_{(\nsamp)} \pmb{\Kx} + \nsamp \lambda\, \pmb {\matop I}\right ) \pmb \alpha \rangle_{\bb R^{\ndict \nsamp}}
&=\langle  \pmb \alpha, \pmb{\Kx} \left((\projope^\adj\projope)_{(\nsamp)}\pmb{\Kx}+ \nsamp \lambda \, \pmb {\matop I} \right ) \pmb \alpha^*\rangle_{\bb R^{\ndict \nsamp}} \nonumber \\
&=\langle \pmb \alpha, \pmb{\Kx} \projope_{(\nsamp)}^\adj  \outsamp\rangle_{\bb R^{\ndict \nsamp}} \nonumber \\
&=\langle \projope_{(\nsamp)}^\adj \outsamp, \pmb{\Kx} \pmb \alpha\rangle_{\bb R^{\ndict \nsamp}}\; \label{eqline :: relation alpha star alpha}.
\end{align}

Using Equation \eqref{eqline :: relation alpha star alpha}, we deduce that
\begin{align*}
V(\pmb \alpha) &= \langle \pmb \alpha, \pmb{\Kx} \left( (\projope^\adj \projope)_{(\nsamp)} \pmb{\Kx} + \nsamp \lambda\, \pmb {\matop I} \right ) \pmb \alpha \rangle_{\bb R^{\ndict \nsamp}} - 2 \langle \projope_{(\nsamp)}^\adj \outsamp, \pmb{\Kx} \pmb \alpha\rangle_{\bb R^{\ndict \nsamp}} \\
&=\langle  \pmb \alpha -\pmb \alpha^*, \pmb{\Kx} \left( (\projope^\adj \projope)_{(\nsamp)} \pmb{\Kx} + \nsamp \lambda\, \pmb {\matop I} \right ) \left(\pmb \alpha - \pmb \alpha^* \right)\rangle_{\bb R^{\ndict \nsamp}}\\
&\phantom{=}+ \langle \pmb \alpha^*, \pmb{\Kx} \left( (\projope^\adj \projope)_{(\nsamp)}\pmb{\Kx} + \nsamp \lambda\, \pmb {\matop I} \right ) \pmb \alpha^*\rangle_{\bb R^{\ndict \nsamp}}.
\end{align*}
Since $\pmb{\Kx} \left( (\projope^\adj \projope)_{(\nsamp)}\pmb{\Kx} + \nsamp \lambda\, \pmb {\matop I} \right )$ is a non-negative symmetric matrix, we conclude that $V(\pmb \alpha)$ is minimal at $\pmb \alpha = \pmb \alpha^*$.

We now show that Equation \eqref{eq :: closed form ridge supp} always has a solution $\pmb \alpha^*$ in $\bb R^{\ndict \nsamp}$ and conclude with the special case where $\pmb {\Kx}$ is full rank.  Note that
$\left(\pmb {\Kx}(\projope^\adj \projope)_{(\nsamp)} \pmb {\Kx} + \nsamp \lambda \pmb {\Kx} \right )$ is a positive symmetric matrix and its null space is exactly that of $\pmb {\Kx}$. Hence it is bijective on the image of $\pmb {\Kx}$, which shows that Equation \eqref{eq :: closed form ridge supp} always has a solution. If $\pmb {\Kx}$ is moreover full rank then
$$
\left( (\projope^\adj \projope)_{(\nsamp)} \pmb {\Kx} + \nsamp\lambda \pmb {\matop I} \right )  = \pmb {\Kx}^{-1} \left( \pmb {\Kx}(\projope^\adj \projope)_{(\nsamp)} \pmb {\Kx} + \nsamp \lambda \pmb {\Kx} \right )
$$
is also invertible and we can simplify by  $\pmb {\Kx}$ on both sides of Equation \eqref{eq :: closed form ridge supp} and obtain the claimed formula for $\pmb \alpha^*$. Taking $\alpha^* \in \bb R^{\ndict \times \nsamp}$ such that $\mattovec(\alpha^*) = \pmb \alpha^*$ yields the desired results.
\end{proof}

\section{LEARNING THEORY AND INTEGRAL OPERATORS} \label{sec :: integral operators}
This section is devoted to the study of Problem \eqref{prob :: empirical risk dict vvrkhs supp} for the functional square loss in the framework of integral operators \citep{CaponettoDevito05, CaponettoDevito07, SmaleZhou07}. In Section \ref{subsec :: excess risk reformulation} the expected risk and the excess risk are reformulated in terms of two operators of interest. In Section \ref{subsec :: empirical and closed-forms}, we introduce empirical approximations of those operators. From there we can reformulate the minimizer of the regularized empirical risk in terms of those empirical operators. 

\subsection{Excess risk reformulation} \label{subsec :: excess risk reformulation}
The first goal is to characterize the minimizer of the expected risk using two operators of interest as in \citep{CaponettoDevito07}. Using this characterization, a closed form for the excess risk of any regressor $\projope \circ h$ is derived. 

Considering the functional square loss, we recall the definition of the expected risk $\exprisk$ of a regressor $\regfunc \in \cali F(\inspace, \Lsqr)$
\begin{equation} \label{eq :: expected risk}
\exprisk (\regfunc):= \bb E_{(\inva, \outva) \sim \funcprob} \left[ \sqrlossev{\outva}{\regfunc(\inva)} \right],
\end{equation} \label{eq :: empirical risk}
as well as that of its empirical risk on a sample $\prodsamp$
\begin{equation}
\emprisk(\regfunc, \prodsamp) := \frac 1 \nsamp \sum_{\sampiter=1}^\nsamp \sqrlossev{y_\sampiter}{\regfunc(x_\sampiter)}.
\end{equation}

Let us introduce $\rmc L^2(\prodspace, \funcprob, \Lsqr)$ the space of square integrable functions from $\prodspace$ to $\Lsqr$ with respect to the measure $\funcprob$ endowed with the scalar product 
$$ \langle \psi_0, \psi_1 \rangle_{\funcprob} = \int_{\prodspace} \langle \psi_0(x, y), \psi_1(x, y) \rangle_{\Lsqr} ~ \mathrm d \funcprob(x, y),$$ 
and its associated norm $\|. \|_{\funcprob}$. Then, the expected risk in Equation \eqref{eq :: expected risk} of a regressor $\regfunc$ can then be equivalently formulated as
 
\begin{equation} \label{eq :: true risk functional :: supp}
\exprisk(\regfunc) = \|\regfunc \circ X - Y \|_{\funcprob}^2,
\end{equation}

where we have defined  $X : (x, y) \in \prodspace \longmapsto x \in \cali X$ and $Y \in \rmc L^2(\prodspace, \funcprob, \Lsqr)$ as $Y : (x, y) \in \prodspace \longmapsto y \in \Lsqr$.

We wish to study the excess risk of any regressor of the form $f = \projope \circ h$. To that end, we define the operator $\matop A_{\projope}: \HKx \longrightarrow \rmc L^2(\prodspace, \funcprob, \Lsqr)$ as 

\begin{equation} \label{eq :: def Aphi}
\matop A_{\projope}: h \longmapsto \matop A_{\projope}h ~~\text{with}~~ (\matop A_{\projope}h): (x, y) \in \prodspace \longmapsto \projope \Kx_x^{\adj} h .
\end{equation}


We can reformulate the expected risk in terms of $\matop A_{\projope}$ for any $h \in \HKx$, 

\begin{equation} \label{eq :: projected risk with Aphi}
\| \matop A_{\projope} h - Y \|_{\funcprob}^2 = \int_{\prodspace} \| \projope \Kx_x^\adj h - y \|_{\Lsqr}^2 ~ \mathrm d \funcprob(x, y) = \int_{\prodspace} \| \projope 
h(x) - y \|_{\Lsqr}^2 ~ \mathrm d \funcprob(x, y) = \exprisk(\projope \circ h).
\end{equation}

We now define $\matop T_{\projope}$ as $\matop T_{\projope}:= \matop A_{\projope}^\adj \matop A_{\projope}$.

\begin{lemma}
Assume that there exists $h_{\HKx} \in \HKx$ such that 
$$h_{\HKx} := \inf_{h \in \HKx} \cali R(\projope \circ h).$$
Then, for all $h \in \HKx$, 
\begin{equation} \label{eq :: condition hHK ortho}
\langle h,  \matop T_{\projope} h_{\HKx} - \matop A_{\projope}^{\adj} Y \rangle_{\HKx} = 0 ;
\end{equation}
or equivalently:
\begin{equation}\label{eq :: condition hHK}
\matop T_{\projope} h_{\HKx} = \matop A_{\projope}^\adj Y,
\end{equation}
with $Y \in \rmc L^2(\prodspace, \funcprob, \Lsqr)$ denoting the function $Y: (x, y) \longmapsto y$. 
\end{lemma}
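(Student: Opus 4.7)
The plan is to obtain the two equivalent conditions as the first-order optimality condition for minimizing the quadratic functional $J(h) := \cali R(\projope \circ h)$ on the Hilbert space $\HKx$, using the reformulation already provided in Equation \eqref{eq :: projected risk with Aphi}, namely $J(h) = \|\matop A_{\projope} h - Y\|_{\funcprob}^2$. The equivalence of the two stated conclusions is immediate: a vector $v \in \HKx$ satisfies $\langle h, v\rangle_{\HKx} = 0$ for all $h \in \HKx$ if and only if $v = 0$.

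First I would expand $J$ as a quadratic form on $\HKx$. Using the definition of the adjoint $\matop A_{\projope}^\adj : \rmc L^2(\prodspace, \funcprob, \Lsqr) \to \HKx$ and $\matop T_{\projope} = \matop A_{\projope}^\adj \matop A_{\projope}$, one obtains
\begin{equation*}
J(h) = \langle h, \matop T_{\projope} h\rangle_{\HKx} - 2 \langle h, \matop A_{\projope}^\adj Y\rangle_{\HKx} + \|Y\|_{\funcprob}^2.
\end{equation*}
Here $\matop A_{\projope}$ is bounded (so that $\matop A_{\projope}^\adj$ exists and the identity $\langle \matop A_{\projope} h, Y\rangle_{\funcprob} = \langle h, \matop A_{\projope}^\adj Y\rangle_{\HKx}$ applies): boundedness follows from Lemma \ref{lemma :: 2.1f michellipontil} and the assumption that $\|\Kx(x,x)\|_{\cali L(\bb R^\ndict)}$ is uniformly bounded, combined with the fact that $\projope$ is a finite-rank (hence bounded) operator $\bb R^\ndict \to \Lsqr$.

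Next, I would use the standard variational argument. For any fixed $h \in \HKx$ and $t \in \bb R$, plug $h_{\HKx} + t h$ into the quadratic expansion to get
\begin{equation*}
J(h_{\HKx} + t h) = J(h_{\HKx}) + 2 t \left\langle h,\, \matop T_{\projope} h_{\HKx} - \matop A_{\projope}^\adj Y \right\rangle_{\HKx} + t^2 \langle h, \matop T_{\projope} h\rangle_{\HKx}.
\end{equation*}
Since $h_{\HKx}$ minimizes $J$ on $\HKx$, the real-valued function $t \mapsto J(h_{\HKx} + t h)$ attains its minimum at $t=0$, which forces the linear coefficient to vanish:
\begin{equation*}
\langle h,\, \matop T_{\projope} h_{\HKx} - \matop A_{\projope}^\adj Y\rangle_{\HKx} = 0.
\end{equation*}
Since $h \in \HKx$ was arbitrary, this is exactly Equation \eqref{eq :: condition hHK ortho}, and by the initial remark it is equivalent to Equation \eqref{eq :: condition hHK}.

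There is no serious obstacle here; the only thing worth being careful about is justifying the use of the adjoint $\matop A_{\projope}^\adj$, which requires checking boundedness of $\matop A_{\projope}$ as indicated above. Everything else is the classical normal equations argument for least-squares minimization in a Hilbert space.
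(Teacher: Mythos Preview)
Your proof is correct and follows essentially the same approach as the paper: both derive the first-order optimality condition from the quadratic reformulation $\cali R(\projope\circ h)=\|\matop A_{\projope}h-Y\|_{\funcprob}^2$. The paper phrases it as setting the Fr\'echet differential $\diffop\exprisk(\projope\circ h_{\HKx})(h)=2\langle h,\matop T_{\projope}h_{\HKx}-\matop A_{\projope}^\adj Y\rangle_{\HKx}$ to zero, while you expand the quadratic and use the one-parameter variational argument $t\mapsto J(h_{\HKx}+th)$; these are the same computation.
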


\begin{proof}
We use the formulation of the expected risk from Equation \eqref{eq :: projected risk with Aphi}.
The function $h \longmapsto \exprisk(\projope \circ h) = \| \matop A_{\projope} h - Y \|_{\funcprob}^2$ is convex as a convex function composed with an affine mapping. Its differential is given by

$$ \diffop \exprisk(\projope \circ h_{\HKx}) (h) = 2 \langle \matop A_{\projope} h, \matop A_{\projope} h_{\HKx} - Y \rangle_{\funcprob} = 2 \langle h,  \matop A_{\projope}^{\adj} \matop A_{\projope} h_{\HKx} - \matop A_{\projope}^{\adj} Y \rangle_{\HKx} = 2 \langle h,  \matop T_{\projope} h_{\HKx} - \matop A_{\projope}^{\adj} Y \rangle_{\HKx}.$$ 

We then must have for all $h \in \HKx$, 
\begin{equation*}
\langle h,  \matop T_{\projope} h_{\HKx} - \matop A_{\projope}^{\adj} Y \rangle_{\HKx} = 0 .
\end{equation*}
\end{proof}

Using the formulation of the expected risk from Equation \eqref{eq :: projected risk with Aphi} as well as the characterization of $h_{\HKx}$ in Equation \eqref{eq :: condition hHK ortho}, for any $h \in \HKx$, we can then reformulate the excess risk of $h$ as a distance in $\HKx$ between $h$ and $h_{\HKx}$ taken through the operator $\rmc T_\projope$.

\begin{lemma}\label{lemma :: key rewriting of the risk}
We have that for any $h \in \HKx$, 
\begin{equation} \label{eq :: key rewriting of the risk}
\exprisk(\projope \circ h ) - \exprisk(\projope \circ h_{\HKx}) = \|\sqrt{\matop T_{\projope}}(h - h_{\HKx}) \|_{\HKx}^2.
\end{equation}
\end{lemma}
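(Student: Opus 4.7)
The plan is to use the Hilbert-space reformulation of the expected risk given by Equation~\eqref{eq :: projected risk with Aphi}, namely $\exprisk(\projope \circ h) = \| \matop A_\projope h - Y \|_\funcprob^2$, and to exploit the first-order optimality condition~\eqref{eq :: condition hHK}, i.e.\ $\matop T_\projope h_{\HKx} = \matop A_\projope^\adj Y$. The argument is then essentially the classical ``projection identity'' for quadratic functionals on Hilbert spaces.

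First, I would expand the difference of squared norms. Writing
$$\exprisk(\projope \circ h) - \exprisk(\projope \circ h_{\HKx}) = \|\matop A_\projope h - Y\|_\funcprob^2 - \|\matop A_\projope h_{\HKx} - Y\|_\funcprob^2,$$
and using bilinearity, this becomes
$$\langle \matop A_\projope h, \matop A_\projope h\rangle_\funcprob - \langle \matop A_\projope h_{\HKx}, \matop A_\projope h_{\HKx}\rangle_\funcprob - 2\langle \matop A_\projope(h - h_{\HKx}), Y\rangle_\funcprob.$$
Next, I would use the adjoint to rewrite each term in $\HKx$: the first two terms give $\langle h, \matop T_\projope h\rangle_\HKx - \langle h_{\HKx}, \matop T_\projope h_{\HKx}\rangle_\HKx$, and the cross term becomes $-2\langle h - h_{\HKx}, \matop A_\projope^\adj Y\rangle_\HKx$.

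Then I would invoke the first-order optimality relation~\eqref{eq :: condition hHK} to substitute $\matop A_\projope^\adj Y = \matop T_\projope h_{\HKx}$ into the cross term, obtaining
$$\langle h, \matop T_\projope h\rangle_\HKx - \langle h_{\HKx}, \matop T_\projope h_{\HKx}\rangle_\HKx - 2\langle h - h_{\HKx}, \matop T_\projope h_{\HKx}\rangle_\HKx.$$
Regrouping using the self-adjointness of $\matop T_\projope$, this telescopes to $\langle h - h_{\HKx}, \matop T_\projope(h - h_{\HKx})\rangle_\HKx$.

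Finally, since $\matop T_\projope = \matop A_\projope^\adj \matop A_\projope$ is a bounded, self-adjoint, positive operator on $\HKx$, it admits a unique positive square root $\sqrt{\matop T_\projope}$ satisfying $\sqrt{\matop T_\projope}^\adj \sqrt{\matop T_\projope} = \matop T_\projope$, so $\langle h - h_{\HKx}, \matop T_\projope(h - h_{\HKx})\rangle_\HKx = \|\sqrt{\matop T_\projope}(h - h_{\HKx})\|_\HKx^2$, concluding the proof. No real obstacle is expected: the only delicate point is making sure the boundedness of $\matop A_\projope$ (which is needed for $\matop T_\projope$ to be a well-defined bounded operator admitting a square root) is in place, but this follows from Assumption~\ref{ass :: conditions on kernel} together with Lemma~\ref{lemma :: 2.1f michellipontil} and the fact that $\projope$ has finite-rank range, hence is bounded on $\bb R^\ndict$.
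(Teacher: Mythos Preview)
Your proof is correct and follows essentially the same approach as the paper: both use the first-order optimality condition~\eqref{eq :: condition hHK} to kill the cross term and then identify the remaining quadratic form with $\|\sqrt{\matop T_\projope}(\cdot)\|_{\HKx}^2$. The only cosmetic difference is that the paper carries out the cancellation in $\rmc L^2(\prodspace,\funcprob,\Lsqr)$ (writing the difference as $\|\matop A_\projope(h-h_{\HKx})\|_\funcprob^2$ plus a cross term that vanishes) and then passes to $\HKx$ via the polar decomposition $\matop A_\projope = \matop U\sqrt{\matop T_\projope}$ with $\matop U$ a partial isometry, whereas you move to $\HKx$ first via adjoints and invoke the positive square root of $\matop T_\projope$ directly; these two final steps are equivalent.
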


\begin{proof}
\begin{align*}
\exprisk (\projope \circ h ) - \exprisk (\projope \circ h_{\HKx}) & = \|\matop A_{\projope} h - Y \|_{\funcprob}^2 - \|\matop A_{\projope} h_{\HKx} - Y \|_{\funcprob}^2  \\
& = \|\matop A_{\projope} (h - h_{\HKx}) \|_{\funcprob} ^2 + 2 \langle \matop A_{\projope} (h - h_{\HKx}), \matop A_{\projope} h_{\HKx} - Y \rangle_{\funcprob}  \\
& = \| \matop A_{\projope}(h - h_{\HKx}) \|_{\funcprob}^2,
\end{align*}

where we have used Equation \eqref{eq :: condition hHK ortho}. 
Since we have the following polar decomposition $\matop A_{\projope} = \matop U \sqrt{\matop A_{\projope}^\adj \matop A_{\projope}} = \matop U \sqrt{\matop T_{\projope}}$ with $\matop U$ a partial isometry from the closure of $\rangespace(\sqrt{\matop T_\projope})$ onto the closure of $\rangespace(\matop A_\projope)$, 

$$\|\matop A_{\projope}(h - h_{\HKx}) \|_{\funcprob} = \|\matop  U \sqrt{\matop T_{\projope}}(h - h_{\HKx}) \|_{\funcprob} = \|\sqrt{\matop T_{\projope}}(h - h_{\HKx}) \|_{\HKx}. $$
\end{proof}

Such reformulation enables us to decompose the excess risk in terms that we can easily control using concentration inequalities in Hilbert spaces. 

\subsection{Empirical approximations and closed form solutions} \label{subsec :: empirical and closed-forms}
We now define empirical approximations of the operators $\matop A_{\projope}$ and $\matop T_{\projope}$. Using those approximations, we can derive a closed-form for the minimizer of the regularized expected risk. We utilize that closed-form to bound the excess risk in the subsequent proof. 

To define those approximations, we need to precise the integral expressions of $\matop A_{\projope}^\adj$ and $\matop T_{\projope}$. This is the object of the following lemma, which is almost a restatement of Proposition 1 from \citet{CaponettoDevito05}, as a consequence, we do not re-write the proof here.

Let us define for all $x \in \inspace$ the operators $\Kx_{x, \projope}: = \Kx_x \projope^\adj$ and $\matop T_{x, \projope}:= \Kx_{x, \projope} \Kx_{x, \projope}^\adj$.

\begin{lemma} \label{lemma :: A Phi adjoint and T Phi}
For $\psi \in \rmc L^2(\prodspace, \funcprob, \Lsqr)$, the adjoint of $\matop A_{\projope}$ applied to $\psi$ is given by 
\begin{equation} \label{eq :: adj of Aphi}
\matop A_{\projope}^{\adj} \psi = \int_{\prodspace} \Kx_{x, \Phi} \psi(x, y) ~ \mathrm d \funcprob(x, y),
\end{equation}
with the integral converging in $\HKx$.
And $\matop A_{\projope}^{\adj} \matop A_{\projope}$ is the Hilbert Schmidt operator on $\HKx $ given by
\begin{equation} \label{eq :: Tphi from Aphi}
\matop A_{\projope}^{\adj} \matop A_{\projope} = \matop T_{\projope} = \int_{\cali X} \matop T_{x, \projope} ~ \mathrm d \funcprob_{\rmc X}(x),
\end{equation}
with the integral converging in $\hsops(\HKx)$.
\end{lemma}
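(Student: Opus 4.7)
The plan is to follow the structure of Proposition~1 in \citet{CaponettoDevito05}, with $\projope$ inserted at the appropriate places, and to reduce each statement to a direct computation involving the reproducing property of $\HKx$ combined with Bochner integrability. The two main ingredients I will need are Equation~\eqref{eq :: reproducing property supp} and the operator norm bound $\|\Kx_x\|_{\boundedops(\bb R^\ndict,\HKx)} \leq \sqrt{\|\Kx(x,x)\|_{\boundedops(\bb R^\ndict)}}$ from Equation~\eqref{eq :: op norm Kx}, plus the boundedness of $\projope^\adj$ as an operator $\Lsqr \to \bb R^\ndict$.

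For the adjoint formula, I would take arbitrary $h \in \HKx$ and $\psi \in \rmc L^2(\prodspace,\funcprob,\Lsqr)$ and expand
\begin{align*}
\langle \matop A_{\projope} h, \psi \rangle_{\funcprob}
&= \int_{\prodspace} \langle \projope \Kx_x^\adj h, \psi(x,y) \rangle_{\Lsqr}\, \mathrm d\funcprob(x,y) \\
&= \int_{\prodspace} \langle \Kx_x^\adj h, \projope^\adj \psi(x,y) \rangle_{\bb R^\ndict}\, \mathrm d\funcprob(x,y) \\
&= \int_{\prodspace} \langle h, \Kx_x \projope^\adj \psi(x,y) \rangle_{\HKx}\, \mathrm d\funcprob(x,y),
\end{align*}
where the last step uses Equation~\eqref{eq :: reproducing property supp}. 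I would then swap the inner product with the integral provided the integrand $x,y \mapsto \Kx_{x,\projope}\psi(x,y)$ is Bochner integrable in $\HKx$, which I would verify via the pointwise bound $\|\Kx_{x,\projope}\psi(x,y)\|_{\HKx} \leq \|\Kx_x\|\,\|\projope^\adj\|\,\|\psi(x,y)\|_{\Lsqr}$ together with Cauchy--Schwarz and the fact that $\psi \in \rmc L^2$. Identifying the result with $\langle h, \matop A_\projope^\adj \psi\rangle_{\HKx}$ for every $h$ gives the claimed formula.

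For the second statement, I would compose: for $h \in \HKx$, plug $\psi = \matop A_\projope h$ into the formula just obtained to get
\begin{equation*}
\matop A_\projope^\adj \matop A_\projope h = \int_{\prodspace} \Kx_x \projope^\adj \projope \Kx_x^\adj h\, \mathrm d\funcprob(x,y).
\end{equation*}
Recognizing that $\Kx_x \projope^\adj \projope \Kx_x^\adj = \Kx_{x,\projope}\Kx_{x,\projope}^\adj = \matop T_{x,\projope}$ and that the integrand does not depend on $y$, Fubini yields $\matop T_\projope h = \int_{\cali X} \matop T_{x,\projope}h\, \mathrm d\funcprob_{\rmc X}(x)$. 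Upgrading pointwise equality to convergence in $\hsops(\HKx)$ amounts to showing that $x \mapsto \matop T_{x,\projope}$ is Bochner integrable with values in $\hsops(\HKx)$. This follows from the observation that each $\matop T_{x,\projope}$ has rank at most $\ndict$ and satisfies $\|\matop T_{x,\projope}\|_{\hsops(\HKx)} \leq \sqrt{\ndict}\,\|\matop T_{x,\projope}\|_{\boundedops(\HKx)} \leq \sqrt \ndict\, \|\Kx_x\|^2\,\|\projope^\adj\|^2$, which is uniformly bounded in $x$, hence integrable against $\funcprob_{\rmc X}$.

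The only real obstacle is the HS-convergence in the second claim: one has to argue that the Bochner integral in $\hsops(\HKx)$ actually coincides with the operator obtained by integrating $\matop T_{x,\projope} h$ in $\HKx$ for each fixed $h$. This is a standard consequence of the fact that, for a Bochner integrable operator-valued map $x \mapsto \matop S_x$ in $\hsops(\HKx)$, the evaluation $h \mapsto \int \matop S_x h\, \mathrm d\funcprob_{\rmc X}(x)$ is continuous in $h$ and equals $\bigl(\int \matop S_x\, \mathrm d\funcprob_{\rmc X}(x)\bigr)h$, an exchange justified by the continuity of the evaluation map $\hsops(\HKx)\to \HKx$. Once this is in place, the identification with $\matop T_\projope$ is immediate.
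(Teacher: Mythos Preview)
Your proposal is correct and follows exactly the route the paper indicates: the paper does not write out a proof but states that the lemma ``is almost a restatement of Proposition~1 from \citet{CaponettoDevito05}'' and omits the details, which is precisely what you do---adapt that argument with $\projope$ inserted. Your write-up in fact supplies the Bochner-integrability and HS-convergence details the paper leaves implicit, so there is nothing to add.
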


Empirical approximations of the operators $\matop A_{\projope}$ and $\matop T_{\projope}$ can then straightforwardly be set as

\begin{align*}
\matop A_{\insamp, \projope}^{\adj} \mathbf w &= \frac 1 \nsamp \sum_{\sampiter=1}^{\nsamp} \Kx_{x_\sampiter, \projope} w_\sampiter,~~ \mathbf w = (w_\sampiter)_{\sampiter=1}^\nsamp \in \Lsqr^\nsamp. \\
(\matop A_{\insamp, \projope} h)_\sampiter & = \Kx_{x_\sampiter, \projope}^{\adj} h = \projope h(x_\sampiter),~~ h \in \HKx,~~ \forall \sampiter \in [\nsamp]. \\
\matop T_{\insamp, \projope} & = \matop A_{\insamp, \projope}^{\adj} \matop A_{\insamp, \projope} = \frac 1 \nsamp \sum_{\sampiter=1}^{\nsamp} \matop T_{x_\sampiter, \projope}.
\end{align*}

Defining the regularized empirical risk of $\projope \circ h$ for any $h \in \HKx$ as 
\begin{equation*}
\emprisk^\lambda (\projope \circ h, \prodsamp):= \emprisk(\projope \circ h, \prodsamp) + \lambda \|h \|_{\HKx}^2 = \frac 1 \nsamp \sum_{\sampiter=1}^{\nsamp} \|\Kx_{x_\sampiter, \projope}^\adj h - y_\sampiter \|^2_{\Lsqr} + \lambda \|h \|_{\HKx}^2, 
\end{equation*}
the following closed form for its minimizer can be derived.
\begin{lemma} \label{lemma :: h z lambda close form}
There exists a unique minimizer $\regridge$ of $h \in \HKx \longmapsto \emprisk^\lambda (\projope \circ h, \prodsamp)$ which is given by 
\begin{equation} \label{eq :: h z lambda close form}
\regridge := (\matop T_{\insamp, \projope} + \lambda \matop I)^{-1} \matop A_{\insamp, \projope}^\adj \outsamp~.
\end{equation}
\end{lemma}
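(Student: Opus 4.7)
The plan is to rewrite the regularized empirical risk as a squared-norm problem in $\HKx$ using the sampling operator $\matop A_{\insamp, \projope}$, then invoke strict convexity and the first-order optimality condition.

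First, I observe that with the normalized inner product $\frac{1}{\nsamp}\langle\cdot,\cdot\rangle_{\Lsqr^{\nsamp}}$ on $\Lsqr^\nsamp$, the formulas defining $\matop A_{\insamp,\projope}$ and its adjoint $\matop A_{\insamp,\projope}^{\adj}$ are genuinely adjoint to one another, since for $h\in\HKx$ and $\mathbf w=(w_\sampiter)_{\sampiter=1}^\nsamp\in\Lsqr^{\nsamp}$,
\begin{equation*}
\tfrac{1}{\nsamp}\sum_{\sampiter=1}^{\nsamp}\langle\projope h(x_\sampiter),w_\sampiter\rangle_{\Lsqr}
=\tfrac{1}{\nsamp}\sum_{\sampiter=1}^{\nsamp}\langle h,\Kx_{x_\sampiter,\projope}w_\sampiter\rangle_{\HKx}
=\langle h,\matop A_{\insamp,\projope}^{\adj}\mathbf w\rangle_{\HKx}.
\end{equation*}
In particular, $\matop A_{\insamp,\projope}^{\adj}\matop A_{\insamp,\projope}=\matop T_{\insamp,\projope}$. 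Under this convention, the data-fitting term rewrites as
\begin{equation*}
\emprisk(\projope\circ h,\prodsamp)=\tfrac{1}{\nsamp}\sum_{\sampiter=1}^{\nsamp}\|\Kx_{x_\sampiter,\projope}^{\adj}h-y_\sampiter\|_{\Lsqr}^{2}
=\tfrac{1}{\nsamp}\|\matop A_{\insamp,\projope}h-\outsamp\|_{\Lsqr^{\nsamp}}^{2},
\end{equation*}
so that $\emprisk^{\lambda}(\projope\circ h,\prodsamp)$ is, up to a constant depending only on $\outsamp$, the quadratic form
\begin{equation*}
J(h):=\langle h,(\matop T_{\insamp,\projope}+\lambda\matop I)h\rangle_{\HKx}-2\langle h,\matop A_{\insamp,\projope}^{\adj}\outsamp\rangle_{\HKx}.
\end{equation*}

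Next, I note that $\matop T_{\insamp,\projope}$ is a non-negative self-adjoint bounded operator on $\HKx$ (being a finite sum of operators of the form $\matop T_{x_\sampiter,\projope}=\Kx_{x_\sampiter,\projope}\Kx_{x_\sampiter,\projope}^{\adj}$), so for $\lambda>0$ the operator $\matop T_{\insamp,\projope}+\lambda\matop I$ is bounded below by $\lambda\matop I$, hence boundedly invertible. Consequently $J$ is strictly convex and coercive on $\HKx$, so it admits a unique minimizer $\regridge$ characterized by the first-order condition
\begin{equation*}
(\matop T_{\insamp,\projope}+\lambda\matop I)\regridge=\matop A_{\insamp,\projope}^{\adj}\outsamp,
\end{equation*}
which rearranges into the announced expression \eqref{eq :: h z lambda close form}.

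There is no real obstacle here: the only point requiring care is the choice of inner product on $\Lsqr^{\nsamp}$ that makes the definitions of $\matop A_{\insamp,\projope}$ and $\matop A_{\insamp,\projope}^{\adj}$ mutually adjoint and reproduces the $1/\nsamp$ factor in the empirical risk; once this is fixed, the result follows from the standard Hilbert-space quadratic minimization argument.
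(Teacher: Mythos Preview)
Your proof is correct and follows essentially the same route as the paper: both arguments rely on strict convexity of the regularized empirical risk for $\lambda>0$ and characterize the unique minimizer via the first-order optimality condition $(\matop T_{\insamp,\projope}+\lambda\matop I)h=\matop A_{\insamp,\projope}^{\adj}\outsamp$, then invert using positivity of $\matop T_{\insamp,\projope}$. Your explicit verification that the $1/\nsamp$-weighted inner product on $\Lsqr^{\nsamp}$ makes $\matop A_{\insamp,\projope}$ and $\matop A_{\insamp,\projope}^{\adj}$ genuinely adjoint is a nice piece of care the paper leaves implicit.
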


\begin{proof}
Since $\lambda > 0$, $h \longmapsto \emprisk^\lambda (\projope \circ h, \prodsamp)$ is strictly convex. As it is continuous, there exist a unique minimizer which can be found by setting the differential to zero. 

\begin{align*}
\diffop \emprisk^\lambda (\projope \circ h_0, \prodsamp) (h_1) &= \frac 2 \nsamp \sum_{\sampiter=1}^\nsamp \langle \Kx_{x_\sampiter, \projope}^\adj h_0 - y_i, \Kx_{x_\sampiter, \projope}^\adj h_1 \rangle_{\Lsqr} + 2 \lambda \langle h_0, h_1 \rangle_{\HKx} \\
& = 2 \left \langle \left ( \frac 1 \nsamp \sum_{\sampiter=1}^\nsamp \matop T_{x_\sampiter, \projope} + \lambda \right ) h_0 - \frac 1 \nsamp \sum_{\sampiter=1}^\nsamp \Kx_{x_\sampiter, \projope} y_i, h_1 \right \rangle_{\HKx} \\
& = 2 \langle (\matop T_{\insamp, \projope} + \lambda \matop I) h_0 - \matop A_{\insamp, \projope}^\adj \outsamp, h_1 \rangle_{\HKx}.
\end{align*}

As a consequence, $\regridge $ is characterized by 
$$ (\matop T_{\insamp, \projope} + \lambda \matop I) \regridge - \matop A_{\insamp, \projope}^\adj \outsamp = 0.$$

Since $\matop T_{\insamp, \projope}$ is positive and $\lambda > 0$, $ (\matop T_{\insamp, \projope} + \lambda \matop I)$ is invertible and thus 
$$ \regridge = (\matop T_{\insamp, \projope} + \lambda \matop I)^{-1} \matop A_{\insamp, \projope}^\adj \outsamp.$$
\end{proof}

Importantly, $ \regridge $ is the same object as the ridge estimator from Proposition \ref{prop :: closed form ridge supp} which is why we have used the same notation. The representation in terms of operators introduced above is however needed to carry out an excess risk analysis.

\section{SUPPORTING RESULTS FOR SECTION \ref{sec :: proofs theory}} \label{sec :: supporting results}
This section is dedicated to technical results on which the proofs in Section \ref{sec :: proofs theory} rely.

\subsection{Riesz families and projection operator} \label{subsec :: riesz and projection}
The proofs in the next section strongly relies on general inequalities on Riesz families and on the associated
projection operator $\projope$, that we state and prove in this section.

Using the definition of a Riesz family we have

\begin{lemma}\label{lem:riesz-supp}
Let $\dict:= (\dict_1,..., \dict_\ndict)$ be a Riesz family, let $\projope$ be its associated projection operator (see Definition \ref{def :: projection operator} from the main paper). Then
\begin{align}
\| \projope \|_{\boundedops(\bb R^\ndict, \Lsqr)} \leq C_\dict \label{eq:riesz:supp}\\
\| \projope^\adj \|_{\boundedops(\Lsqr, \bb R^\ndict)} \leq C_\dict \label{eq:riesz adj:supp} \\
\| \projope^\adj \projope \|_{\boundedops (\bb R^\ndict)} \leq C_\dict ^2 ~ .\label{eq :: ope norm Phi adj Phi}
\end{align}
\end{lemma}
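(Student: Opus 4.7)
The plan is to prove the three inequalities sequentially, each following almost immediately from the previous one together with standard facts about adjoints and operator norms.

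First I would establish \eqref{eq:riesz:supp}. For any $u \in \bb R^\ndict$, the definition of $\projope$ gives $\projope u = \sum_{\dictiter=1}^\ndict u_\dictiter \dict_\dictiter$, so the upper Riesz bound from Definition~\ref{def :: riesz family} directly yields $\|\projope u\|_{\Lsqr} \leq C_\dict \|u\|_{\bb R^\ndict}$. Taking the supremum over unit vectors $u$ gives the operator norm bound $\|\projope\|_{\boundedops(\bb R^\ndict, \Lsqr)} \leq C_\dict$.

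Next, \eqref{eq:riesz adj:supp} follows from the standard fact that a bounded linear operator and its adjoint have the same operator norm, so $\|\projope^\adj\|_{\boundedops(\Lsqr, \bb R^\ndict)} = \|\projope\|_{\boundedops(\bb R^\ndict, \Lsqr)} \leq C_\dict$.

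Finally, \eqref{eq :: ope norm Phi adj Phi} is obtained by submultiplicativity of the operator norm under composition: $\|\projope^\adj \projope\|_{\boundedops(\bb R^\ndict)} \leq \|\projope^\adj\|_{\boundedops(\Lsqr, \bb R^\ndict)} \, \|\projope\|_{\boundedops(\bb R^\ndict, \Lsqr)} \leq C_\dict^2$.

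There is no real obstacle here since each inequality is a direct consequence of the Riesz upper bound combined with basic properties of operator norms; the lemma is essentially a bookkeeping statement extracted from Definition~\ref{def :: riesz family}.
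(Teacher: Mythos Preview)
Your proof is correct and follows exactly the same approach as the paper: the first bound is read off directly from the Riesz upper inequality, the second from the equality of operator norms of an operator and its adjoint, and the third by combining the two via submultiplicativity. The paper's proof is just a terser version of what you wrote.
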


\begin{proof}
Equation \eqref{eq:riesz:supp} is a direct consequence of the definition of a Riesz family (Definition \ref{def :: riesz family} from the main paper). 
Since the operator $\projope$ is bounded, $\| \projope^\adj \|_{\boundedops(\Lsqr, \bb R^\ndict)} = \| \projope \|_{\boundedops(\bb R^\ndict, \Lsqr)}$ implying Equation \eqref{eq:riesz adj:supp}. Finally combining the two inequalities yields Equation \eqref{eq :: ope norm Phi adj Phi}.
\end{proof}

\subsection{Bound on Hilbert-Schmidt norm of $\matop T_{x, \projope}$}

In the subsequent proof, we need to derive concentration results on $\matop T_{\insamp, \projope}$. To that end, we need to bound the Hilbert-Schimdt norm of $\matop T_{x, \projope}$.

For all $x \in \inspace$, we recall the definition of the following operators
\begin{itemize} 
\item $\Kx_{x, \projope}: \Lsqr \longrightarrow \HKx$ is defined by $\Kx_{x, \projope} := \Kx_x \projope^{\adj}$ with $\Kx_x$ as defined in Equation \eqref{eq :: def Kx}.
\item $\matop T_{x, \projope}: \HKx \longrightarrow \HKx$ is defined as $\matop T_{x, \projope} := \Kx_{x, \projope} \Kx_{x, \projope}^\adj$.
\end{itemize}

Observe that $\matop T_{x, \projope}$ is of finite rank and positive. We can then deduce the following bound on its Hilbert-Schmidt norm. 

\begin{lemma}
Assume that there exists $\kappa \geq 0$ such that for all $x \in \inspace$,
\begin{equation} \label{eq :: anticipated bounded kernel}
\|\Kx (x, x) \|_{\boundedops(\bb R^\ndict)} \leq \kappa, 
\end{equation}
then for all $x \in \inspace$,
\begin{equation} \label{eq :: bound HS norm Tx Phi}
\| \matop T_{x, \projope} \|_{\hsops(\HKx)} \leq \sqrt{\ndict} \kappa C_{\dict}^2.
\end{equation}
\end{lemma}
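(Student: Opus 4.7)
The plan is to exploit that $\matop T_{x, \projope}$ is a finite-rank positive operator on $\HKx$ and bound its Hilbert--Schmidt norm through the easy inequality $\|T\|_{\hsops} \leq \sqrt{\rank(T)}\,\|T\|_{\boundedops}$ (which follows immediately from $\|T\|_{\hsops}^2 = \sum_i \sigma_i^2 \leq \rank(T)\,\|T\|_{\boundedops}^2$). So I would split the argument into two routine steps: (i) bound the rank, and (ii) bound the operator norm.

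First, I would observe that $\Kx_{x, \projope} = \Kx_x \projope^\adj$ factors through $\bb R^\ndict$, hence $\rank(\Kx_{x,\projope}) \leq \ndict$, and consequently $\rank(\matop T_{x,\projope}) = \rank(\Kx_{x,\projope}\Kx_{x,\projope}^\adj) \leq \ndict$ as well. Next, for the operator norm, I would use the identity $\|\matop T_{x,\projope}\|_{\boundedops(\HKx)} = \|\Kx_{x,\projope}\|_{\boundedops(\Lsqr,\HKx)}^2$ valid for any operator of the form $SS^\adj$, combined with submultiplicativity:
\begin{equation*}
\|\Kx_{x,\projope}\|_{\boundedops(\Lsqr,\HKx)} \leq \|\Kx_x\|_{\boundedops(\bb R^\ndict,\HKx)}\,\|\projope^\adj\|_{\boundedops(\Lsqr,\bb R^\ndict)}.
\end{equation*}
The first factor is bounded by $\sqrt{\kappa}$ thanks to Equation \eqref{eq :: op norm Kx} together with assumption \eqref{eq :: anticipated bounded kernel}, and the second factor is bounded by $C_\dict$ by Equation \eqref{eq:riesz adj:supp} of Lemma \ref{lem:riesz-supp}. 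Hence $\|\matop T_{x,\projope}\|_{\boundedops(\HKx)} \leq \kappa C_\dict^2$.

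Combining the two bounds via $\|\matop T_{x,\projope}\|_{\hsops(\HKx)} \leq \sqrt{\rank(\matop T_{x,\projope})}\,\|\matop T_{x,\projope}\|_{\boundedops(\HKx)}$ yields the desired estimate $\|\matop T_{x,\projope}\|_{\hsops(\HKx)} \leq \sqrt{\ndict}\,\kappa\,C_\dict^2$. There is no real obstacle here: the proof is essentially a bookkeeping argument linking three already-established bounds (rank $\leq \ndict$, kernel norm controlled by $\kappa$, Riesz upper constant $C_\dict$), and the only small thing to be careful about is using the factorization $\matop T_{x,\projope} = \Kx_{x,\projope}\Kx_{x,\projope}^\adj$ so that the operator norm conversion to $\|\Kx_{x,\projope}\|^2$ is sharp.
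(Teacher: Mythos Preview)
Your proof is correct and follows essentially the same approach as the paper's: both exploit that $\matop T_{x,\projope}$ has rank at most $\ndict$ together with the operator-norm bounds $\|\Kx_x\|\leq\sqrt{\kappa}$ and $\|\projope^\adj\|\leq C_\dict$. The only cosmetic difference is that you invoke the clean inequality $\|T\|_{\hsops}\leq\sqrt{\rank(T)}\,\|T\|_{\boundedops}$ directly, whereas the paper unrolls it by picking an orthonormal basis of $\rangespace(\matop T_{x,\projope})$ and bounding each term $\langle \Kx_x^\adj e_\dictiter,\projope^\adj\projope\,\Kx(x,x)\,\projope^\adj\projope\,\Kx_x^\adj e_\dictiter\rangle$ by hand; the resulting constants are identical.
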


\begin{proof}
For all $x \in \inspace$, $\rank(\matop T_{x, \projope}) \leq \ndict$. Let $(e_\dictiter)_{\dictiter=1}^{\rank(\matop T_{x, \projope})}$ be an orthonormal basis of $\rangespace(\matop T_{x, \projope})$. We complete it to $(e_\dictiter)_{\dictiter \in \bb N^*}$ to be an orthonormal basis of $\HKx$. Since $\rangespace(\matop T_{x, \projope})$ is a finite dimensional subspace of $\HKx$ and $\matop T_{x, \projope}$ is self adjoint, we have that $\rangespace(\matop T_{x, \projope}) = \nullspace(\matop T_{x, \projope})^{\perp}$. As a consequence, for all $\dictiter > \rank(\matop T_{x, \projope})$, $\matop T_{x, \projope} e_{\dictiter} = 0$, which implies
$$ 
\| \matop T_{x, \projope} \|_{\hsops(\HKx)}^2 = \sum_{\dictiter=1}^{ \rank(\matop T_{x, \projope})} \langle \matop T_{x, \projope} e_\dictiter, \matop T_{x, \projope} e_\dictiter \rangle_{\HKx} = \sum_{\dictiter=1}^{ \rank(\matop T_{x, \projope})} \langle \Kx_x^{\adj} e_\dictiter, \projope^\adj \projope \Kx(x, x) \projope^\adj \projope \Kx_x^{\adj} e_\dictiter \rangle_{\bb R^\ndict}.
$$

Using Cauchy-Schwartz in the previous expression along with Equation \eqref{eq :: ope norm Phi adj Phi}, Equation \eqref{eq :: anticipated bounded kernel} and Equation \eqref{eq :: op norm Kx} we have that 

$$
\| \matop T_{x, \projope} \|_{\hsops(\HKx)}^2 \leq C_{\dict}^4 \kappa\sum_{\dictiter=1}^{ \rank(\matop T_{x, \projope})} \| \Kx_x^{\adj} e_\dictiter \|_{\bb R^\ndict}^2 \leq C_{\dict}^4 \kappa^2 \rank(\matop T_{x, \projope}) \leq \ndict C_{\dict}^4 \kappa^2,
$$
which achieves the proof.
\end{proof}

\subsection{Concentration results} \label{subsubsec :: concentration results}
We now state two concentration inequalities that we use to control the different terms in our decomposition of the excess risk in Section \ref{sec :: proofs theory}. We also introduce Lemma \ref{lemma :: diff square root norm} which we use to deduce concentration properties of $\sqrt{\matop T_{\insamp, \projope}}$ from  concentration properties of $\matop T_{\insamp, \projope}$.

The following is a direct consequence of a Bernstein inequality for independent random variables in a separable Hilbert space---see Proposition 3.3.1 in \citep{Yurinsky95} or Theorem 3 in \citep{PinelisSakhanenko86}. It corresponds to Proposition 2 in \citep{CaponettoDevito07}.

\begin{lemma} \label{lemma :: vv bernstein}
Let $\xi$ be a random variable taking its values in a real separable Hilbert space $\cali K$ such that there exist $H \geq 0$ and $\sigma \geq 0$ such that
\begin{align*}
\| \xi \|_{\cali K} &\leq \frac H 2~\text{almost surely, and} \\
\bb E[\| \xi \|_{\cali K}^2] &\leq \sigma^2 .
\end{align*}
Let $\nsamp \in \bb N$ and $(\xi_1,...,\xi_\nsamp)$ be i.i.d. realizations of $\xi$. Let $0 < \eta < 1$, then
\begin{equation*}
\bb P \left [\left \| \frac 1 \nsamp \sum_{\sampiter=1}^\nsamp \xi_\sampiter - \bb E[\xi] \right \|_{\cali K} \leq 2 \left (\frac H \nsamp + \frac \sigma {\sqrt \nsamp} \right ) \log \frac 2 \eta \right ] \geq 1 - \eta.
\end{equation*}
\end{lemma}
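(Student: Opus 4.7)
The plan is to reduce the statement to a standard Bernstein-type concentration inequality for sums of independent Hilbert-space-valued random variables, namely Theorem 3 of \citet{PinelisSakhanenko86}, and then invert the resulting exponential tail bound to express it as a confidence statement.

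First I would center the variables: set $\zeta_\sampiter := \xi_\sampiter - \bb E[\xi]$, so the $\zeta_\sampiter$ are i.i.d., centered, and take values in the separable Hilbert space $\cali K$. The almost-sure bound $\|\xi\|_{\cali K}\leq H/2$ together with the triangle inequality $\|\bb E[\xi]\|_{\cali K}\leq \bb E[\|\xi\|_{\cali K}]\leq H/2$ gives $\|\zeta_\sampiter\|_{\cali K}\leq H$ almost surely. For the variance, Jensen's inequality (or the Pythagorean identity in $\cali K$) yields $\bb E[\|\zeta_\sampiter\|_{\cali K}^2]=\bb E[\|\xi\|_{\cali K}^2]-\|\bb E[\xi]\|_{\cali K}^2\leq \sigma^2$. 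The Pinelis--Sakhanenko inequality then gives, for all $t>0$,
\begin{equation*}
\bb P\!\left(\left\|\sum_{\sampiter=1}^\nsamp \zeta_\sampiter\right\|_{\cali K}\geq t\right)\leq 2\exp\!\left(-\frac{t^2/2}{\nsamp\sigma^2+Ht/3}\right).
\end{equation*}

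Next I would invert this bound. Setting the right-hand side equal to $\eta$ and dividing through by $\nsamp$, the resulting quadratic in $t/\nsamp$ has solution bounded (via $\sqrt{a+b}\leq\sqrt a+\sqrt b$) by a sum of a linear term in $H\log(2/\eta)/\nsamp$ and a square-root term in $\sigma\sqrt{\log(2/\eta)/\nsamp}$. To match the form in the statement, I would use the crude but sufficient bound $\sqrt{\log(2/\eta)}\leq \log(2/\eta)$ (valid as soon as $\eta\leq 2/\mathrm e$, which is essentially the regime of interest; the small-$\log$ regime is handled by adjusting the numerical constants to cover both cases, and the constant $2$ in front is generous enough). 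This delivers precisely
\begin{equation*}
\left\|\frac{1}{\nsamp}\sum_{\sampiter=1}^\nsamp\xi_\sampiter-\bb E[\xi]\right\|_{\cali K}\leq 2\left(\frac{H}{\nsamp}+\frac{\sigma}{\sqrt\nsamp}\right)\log\frac{2}{\eta}
\end{equation*}
with probability at least $1-\eta$.

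The only non-routine step is the first one, invoking the Pinelis--Sakhanenko bound; everything else is bookkeeping. No obstacle is expected since both \citet{Yurinsky95} (Proposition 3.3.1) and \citet{PinelisSakhanenko86} (Theorem 3) provide the required Hilbert-space Bernstein inequality off the shelf, and the centering step preserves the bounds on $\|\cdot\|_{\cali K}$ and on the second moment up to the explicit constants used above. This is the same line of argument used to obtain Proposition~2 of \citet{CaponettoDevito07}, which is the exact statement we are reproducing here.
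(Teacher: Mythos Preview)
Your proposal is correct and matches the paper's approach exactly: the paper does not give a proof either, but simply states that the lemma is a direct consequence of the Hilbert-space Bernstein inequality of \citet{PinelisSakhanenko86} (Theorem~3) or \citet{Yurinsky95} (Proposition~3.3.1), and that it coincides with Proposition~2 of \citet{CaponettoDevito07}. One small remark: your side condition $\eta\leq 2/\mathrm e$ for the step $\sqrt{\log(2/\eta)}\leq\log(2/\eta)$ is unnecessarily restrictive, since $\log(2/\eta)\geq\log 2>1/2$ for all $\eta\in(0,1)$, which already suffices for the constant $2$ to absorb the $\sqrt 2$ factor.
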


We introduce a variant of the previous Lemma for independent variables that are not necessarily identically distributed. It stems from the same Bernstein inequality \citep{PinelisSakhanenko86, Yurinsky95}. We need it to treat the case where the output functions are partially observed in Section \ref{sec :: proofs theory}. The proof is almost similar to that of Lemma \ref{lemma :: vv bernstein} which can be found in \citet{CaponettoDevito07}, so we do not rewrite it here.

\begin{lemma} \label{lemma :: vv bernstein inde}
Let $(\rmc U_\sampiter)_{\sampiter=1}^\nsamp$ be independent random variables taking their values in a real separable Hilbert space $\cali K$ such that for all $\sampiter \in [\nsamp]$
\begin{align*}
\bb E[\rmc U_\sampiter] = 0,
\end{align*}
and there exist $H \geq 0$ and $\sigma \geq 0$ such that for all $\sampiter \in [\nsamp]$
\begin{align*}
\| \rmc U_\sampiter \|_{\cali K} & \leq \frac H 2~\text{almost surely, and }\\
\bb E[\| \rmc U_{\sampiter} \|_{\cali K}^2] &\leq \sigma^2.
\end{align*}
Let $0 < \eta < 1$, then
\begin{equation*}
\bb P \left [\left \| \frac 1 \nsamp \sum_{\sampiter=1}^\nsamp \rmc U_\sampiter \right \|_{\cali K} \leq 2 \left (\frac H \nsamp + \frac \sigma {\sqrt \nsamp} \right ) \log \frac 2 \eta \right ] \geq 1 - \eta.
\end{equation*}
\end{lemma}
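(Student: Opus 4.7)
The plan is to invoke directly the Bernstein-type inequality of Pinelis and Sakhanenko for sums of independent Hilbert-space-valued random variables (Theorem~3 of \citep{PinelisSakhanenko86}; see also Proposition~3.3.1 of \citep{Yurinsky95}). The crucial observation is that the hypotheses of this inequality demand only independence, centering, an almost-sure norm bound, and a summable variance; identical distribution is \emph{not} required. Consequently, the argument used by \citet{CaponettoDevito07} to establish Lemma~\ref{lemma :: vv bernstein} transfers essentially verbatim, with $n\,\mathbb{E}[\|\xi\|_{\cali K}^2]$ replaced by $\sum_i \mathbb{E}[\|\rmc U_i\|_{\cali K}^2]\le n\sigma^2$ in the variance term.

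Concretely, I would first rescale and set $\xi_i := \rmc U_i/\nsamp$ for $\sampiter \in [\nsamp]$. These remain independent and centered in $\cali K$, satisfying $\|\xi_\sampiter\|_{\cali K} \le H/(2\nsamp)$ almost surely and $\sum_{\sampiter=1}^\nsamp \mathbb{E}[\|\xi_\sampiter\|_{\cali K}^2] \le \sigma^2/\nsamp$. The Pinelis--Sakhanenko inequality then yields, for every $r \ge 0$,
\[
\bb P\!\left[\left\|\tfrac{1}{\nsamp}\sum_{\sampiter=1}^\nsamp \rmc U_\sampiter\right\|_{\cali K} > r\right] \le 2\exp\!\left(-\frac{r^2}{2\bigl(\sigma^2/\nsamp + Hr/(6\nsamp)\bigr)}\right).
\]

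Next I would invert this tail bound to make it explicit in $\eta$. Setting the right-hand side equal to $\eta$ and solving the resulting quadratic inequality in $r$ (equivalently, bounding the subgaussian regime, dominated by $\sigma/\sqrt{\nsamp}$, and the subexponential regime, dominated by $H/\nsamp$, separately and using $\sqrt{a+b}\le\sqrt a+\sqrt b$) gives the claimed value $r = 2(H/\nsamp + \sigma/\sqrt{\nsamp})\log(2/\eta)$. A direct verification is easy: for this choice of $r$, each of the two terms on the right-hand side of the quadratic inequality $r^2 \ge 2\log(2/\eta)\sigma^2/\nsamp + \log(2/\eta)Hr/(3\nsamp)$ is dominated by its counterpart on the left-hand side (the factor $2L := 2\log(2/\eta) \ge 2\log 2 > 1$ takes care of the variance term, and the factor $6 > 1$ takes care of the linear term).

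There is essentially no obstacle here: the entire novelty relative to Lemma~\ref{lemma :: vv bernstein} lies in dropping the ``identically distributed'' hypothesis, which the underlying Pinelis--Sakhanenko inequality never used in the first place. The only work is the routine bookkeeping in the final inversion step, identical to the one in \citep{CaponettoDevito07}, which is why the authors omit the full proof.
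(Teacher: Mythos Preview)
Your proposal is correct and follows exactly the approach indicated in the paper: the authors explicitly state that the proof stems from the same Bernstein inequality of \citet{PinelisSakhanenko86} and \citet{Yurinsky95}, is almost identical to that of Lemma~\ref{lemma :: vv bernstein} as given in \citet{CaponettoDevito07}, and they therefore omit it. Your write-up simply fills in the routine details of that omitted argument, including the inversion step, and your observation that the underlying inequality never required identical distribution is precisely the point.
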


Finally, we need the following result to state concentration results on the square root of Hilbert-Schmidt operators. It corresponds to Theorem X.1.1 in \citet{Bhatia97} where it is stated for positive symmetric matrices. Their proof remains however fully valid for positive bounded operators defined on real separable Hilbert spaces.

\begin{lemma} \label{lemma :: diff square root norm}
Let $\cali K$ be a real separable Hilbert space, let $\matop A, \matop B \in \boundedops(\cali K)$ be two positive operators. Then, we have
\begin{equation*}
\| \sqrt{\matop A} - \sqrt{\matop B} \|_{\boundedops(\cali K)} \leq \sqrt { \| \matop A - \matop B \|}_{\boundedops(\cali K)}.
\end{equation*}
\end{lemma}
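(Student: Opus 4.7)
\textbf{Proof plan for Lemma \ref{lemma :: diff square root norm}.}

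The plan is to follow the classical argument that reduces the square-root perturbation inequality to the operator monotonicity of $\sqrt{\cdot}$ on the cone of positive bounded operators. Set $r := \| \matop A - \matop B \|_{\boundedops(\cali K)}$. Since $\matop A - \matop B$ is self-adjoint, its spectral radius equals $r$ and therefore $-r \matop I \preceq \matop A - \matop B \preceq r \matop I$ in the Löwner order. In particular $\matop A \preceq \matop B + r \matop I$ and, by symmetry, $\matop B \preceq \matop A + r \matop I$. The whole argument reduces to combining this with two operator inequalities: (i) the Löwner-Heinz operator monotonicity of the square root, and (ii) the elementary estimate $\sqrt{\matop C + r \matop I} \preceq \sqrt{\matop C} + \sqrt{r}\, \matop I$ for any positive $\matop C$.

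First I would establish (ii), which is the easier of the two. Since $\sqrt{\matop C}$ and $\sqrt{r} \matop I$ commute, one can square $\sqrt{\matop C} + \sqrt{r} \matop I$ directly to obtain
\begin{equation*}
\left(\sqrt{\matop C} + \sqrt{r}\, \matop I\right)^2 = \matop C + 2\sqrt{r}\,\sqrt{\matop C} + r\,\matop I \;\succeq\; \matop C + r\,\matop I,
\end{equation*}
because $\sqrt{\matop C} \succeq 0$. Applying operator monotonicity of the square root to the two sides $\matop C + r\matop I \preceq (\sqrt{\matop C} + \sqrt{r}\matop I)^2$ gives $\sqrt{\matop C + r\matop I} \preceq \sqrt{\matop C} + \sqrt{r}\,\matop I$, which is (ii).

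Next, applying (i) to $\matop A \preceq \matop B + r \matop I$ yields $\sqrt{\matop A} \preceq \sqrt{\matop B + r \matop I}$, and chaining with (ii) gives $\sqrt{\matop A} - \sqrt{\matop B} \preceq \sqrt{r}\, \matop I$. Swapping the roles of $\matop A$ and $\matop B$ yields the reverse inequality $\sqrt{\matop B} - \sqrt{\matop A} \preceq \sqrt{r}\, \matop I$, so the self-adjoint operator $\sqrt{\matop A} - \sqrt{\matop B}$ satisfies $-\sqrt{r}\,\matop I \preceq \sqrt{\matop A} - \sqrt{\matop B} \preceq \sqrt{r}\,\matop I$; for a self-adjoint operator this is equivalent to $\|\sqrt{\matop A} - \sqrt{\matop B}\|_{\boundedops(\cali K)} \leq \sqrt{r}$, which is the desired inequality.

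The one non-trivial ingredient is point (i), i.e., operator monotonicity of the square root in the infinite-dimensional setting. The cleanest way I would handle this is via the integral representation $\sqrt{x} = \frac{1}{\pi}\int_0^\infty \lambda^{-1/2} \frac{x}{\lambda + x}\,\mathrm{d}\lambda$ for $x \geq 0$, which extends to positive bounded operators by the continuous functional calculus as
\begin{equation*}
\sqrt{\matop T} \;=\; \frac{1}{\pi}\int_0^\infty \lambda^{-1/2}\left(\matop I - \lambda(\lambda \matop I + \matop T)^{-1}\right)\mathrm{d}\lambda.
\end{equation*}
If $\matop C \preceq \matop D$ are positive, then for each $\lambda > 0$ one has $(\lambda \matop I + \matop D)^{-1} \preceq (\lambda \matop I + \matop C)^{-1}$ (a standard consequence of $\matop X \preceq \matop Y$ invertible positive $\Rightarrow$ $\matop Y^{-1}\preceq \matop X^{-1}$, via conjugation by $\matop X^{-1/2}$), so $\matop I - \lambda(\lambda \matop I + \matop C)^{-1} \preceq \matop I - \lambda(\lambda \matop I + \matop D)^{-1}$, and integrating against the positive weight $\lambda^{-1/2}$ gives $\sqrt{\matop C} \preceq \sqrt{\matop D}$. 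This is the main technical step; once it is in place the remainder of the argument is the short two-paragraph chain above, and Bhatia's matrix proof (Theorem X.1.1) carries over verbatim to bounded positive operators on a real separable Hilbert space.
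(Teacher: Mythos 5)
Your proposal is correct and is essentially the argument of Bhatia's Theorem X.1.1, which is exactly what the paper invokes (the paper gives no independent proof, only the remark that Bhatia's matrix proof carries over to positive bounded operators on a separable Hilbert space): bound $\matop A \preceq \matop B + r\matop I$, apply operator monotonicity of the square root, and use $\sqrt{\matop C + r\matop I} \preceq \sqrt{\matop C} + \sqrt{r}\,\matop I$. The only addition is that you also prove the Löwner--Heinz ingredient via the integral representation of $\sqrt{\cdot}$ rather than citing it, which makes the argument self-contained but does not change the route.
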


\section{PROOFS FOR SECTION \ref{sec :: theory}} \label{sec :: proofs theory}
\subsection{Proof of Proposition~\ref{prop :: excess risk bound} from the main paper} \label{subsec :: proof excess risk bound}

We recall the assumptions, as well as the proposition itself which corresponds to Proposition \ref{prop :: excess risk bound} of the main paper. 

\begin{assumption} \label{ass :: conditions on kernel supp}
$\Kx$ is a vector-valued continuous kernel and there exists $\kappa > 0$ such that for $x \in \inspace$, $\| \Kx(x, x) \|_{\cali L(\bb R^\ndict)} \leq \kappa$.
\end{assumption}
\begin{remark}
We suppose that $\kappa$ is independant from $\ndict$. This is for instance the case if for $x \in \inspace$, $\Kx(x, x)$ is diagonal or block diagonal with bounded coefficients. More generally, we can rely on the fact that $\kappa$ is bounded by the maximal $\|\cdot\|_1$-norm of the columns of $\Kx(x, x)$, which can easily be imposed to be be independent of $\ndict$.
\end{remark}
\begin{assumption} \label{ass :: dictionary orthonormality :: supp}
The dictionary $\phi$ is a normed Riesz family in $\Lsqr$ with upper constant $C_{\phi}.$
\end{assumption}
\begin{remark}
We do not use the lower constant $c_\dict$.
\end{remark}
\begin{assumption} \label{ass :: existence of risk minimizer :: supp}
There exist $h_{\HKx} \in \HKx$ such that $h_{\HKx} = \inf_{h \in \HKx} \cali R(\projope \circ h).$
\end{assumption}
\begin{remark}
This is a standard assumption \citep{CaponettoDevito07, BaldassarreRosascoBarla12, LiAl19}, it implies the existence of a ball of radius $R>0$ in $\HKx$ containing $h_{\HKx}$, as a consequence
\begin{equation} \label{eq :: definition of R}
\|h_{\HKx} \|_{\HKx} \leq R.
\end{equation}
\end{remark}
\begin{assumption} \label{ass :: value bounded Y :: supp}
There exists $L \geq 0$ such that for all $\theta \in \outfuncdom$, almost surely $|\outva(\theta)| \leq L$.
\end{assumption}
\begin{remark}
This implies that almost surely $\| \outva \|_{\Lsqr} \leq L$.
\end{remark}

We now state Proposition \ref{prop :: excess risk bound} of the main paper.
\begin{proposition} \label{prop :: excess risk bound :: supp}
Let $0 < \eta < 1$, taking
\begin{equation*}
\lambda = \lambda_\nsamp^*( \nicefrac \eta 2):= 6 \kappa C_{\phi}^2 \frac {\log \left (\nicefrac 4 \eta \right )\sqrt \ndict}{\sqrt \nsamp},
\end{equation*}
with probability at least $1 - \eta$
\begin{equation*}
\cali R(\projope \circ \regridge) - \cali R(\projope \circ h_{\HKx}) \leq 27 \left ( \frac {B_0}{\sqrt \ndict} + B_1 \sqrt \ndict \right ) \frac {\log \left (\nicefrac 4 \eta \right )} {\sqrt \nsamp},
\end{equation*}
with $B_0: = (L + \sqrt \kappa C_{\phi} R)^2 $ and $B_1:= \kappa C_\dict^2 R^2$.
\end{proposition}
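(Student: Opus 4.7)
The plan is to use the integral-operator machinery introduced in Section \ref{sec :: integral operators}, exploiting the reformulation of the ridge estimator as $\regridge = (\matop T_{\insamp, \projope} + \lambda \matop I)^{-1} \matop A_{\insamp, \projope}^\adj \outsamp$ (Lemma \ref{lemma :: h z lambda close form}) and the identity $\cali R(\projope\circ h) - \cali R(\projope\circ h_{\HKx}) = \|\sqrt{\matop T_\projope}(h-h_{\HKx})\|_{\HKx}^2$ (Lemma \ref{lemma :: key rewriting of the risk}). The first step is algebraic: using the normal equation $\matop T_\projope h_{\HKx} = \matop A_\projope^\adj Y$ characterizing $h_{\HKx}$, I would rewrite
\begin{equation*}
\regridge - h_{\HKx} = (\matop T_{\insamp,\projope}+\lambda \matop I)^{-1}\bigl[\matop A_{\insamp,\projope}^\adj \outsamp - \matop T_{\insamp,\projope} h_{\HKx}\bigr] - \lambda (\matop T_{\insamp,\projope}+\lambda \matop I)^{-1}h_{\HKx},
\end{equation*}
so that, applying $\sqrt{\matop T_\projope}$, the triangle inequality splits the error into a sample term and a regularization term.

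Next I would obtain the key deterministic bound $\|\sqrt{\matop T_\projope}(\matop T_{\insamp,\projope}+\lambda \matop I)^{-1}\|_{\boundedops(\HKx)} \le \tfrac{3}{2\sqrt{\lambda}}$ on the favorable event $\|\matop T_\projope - \matop T_{\insamp,\projope}\|_{\boundedops(\HKx)} \le \lambda$. This follows by writing $\sqrt{\matop T_\projope} = \sqrt{\matop T_{\insamp,\projope}} + (\sqrt{\matop T_\projope}-\sqrt{\matop T_{\insamp,\projope}})$, using the standard spectral inequality $\|\sqrt{\matop T_{\insamp,\projope}}(\matop T_{\insamp,\projope}+\lambda \matop I)^{-1}\| \le 1/(2\sqrt{\lambda})$, together with $\|(\matop T_{\insamp,\projope}+\lambda \matop I)^{-1}\| \le 1/\lambda$, and invoking Lemma \ref{lemma :: diff square root norm} so that $\|\sqrt{\matop T_\projope}-\sqrt{\matop T_{\insamp,\projope}}\| \le \sqrt{\lambda}$.

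The stochastic content then comes from two Bernstein-type concentration bounds (Lemma \ref{lemma :: vv bernstein}) applied in the separable Hilbert spaces $\hsops(\HKx)$ and $\HKx$. For the operator concentration, using $\|\matop T_{x,\projope}\|_{\hsops(\HKx)} \le \sqrt{\ndict}\,\kappa C_\dict^2$ (Equation \eqref{eq :: bound HS norm Tx Phi}) yields, with probability $\ge 1-\eta/2$,
\begin{equation*}
\|\matop T_\projope - \matop T_{\insamp,\projope}\|_{\boundedops(\HKx)} \le \|\matop T_\projope - \matop T_{\insamp,\projope}\|_{\hsops(\HKx)} \le 6\kappa C_\dict^2 \sqrt{\ndict}\,\tfrac{\log(4/\eta)}{\sqrt{\nsamp}} = \lambda_\nsamp^*(\eta/2),
\end{equation*}
which forces the choice of $\lambda$ in the statement. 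For the residual term, I would apply Bernstein to $\xi_\sampiter := \Kx_{x_\sampiter,\projope}(y_\sampiter - \projope h_{\HKx}(x_\sampiter))$, which has zero mean by the normal equation, and bound $\|\xi_\sampiter\|_{\HKx} \le \sqrt{\kappa} C_\dict (L + \sqrt{\kappa} C_\dict R) = \sqrt{\kappa}C_\dict\sqrt{B_0}$ using Assumptions \ref{ass :: conditions on kernel supp}--\ref{ass :: value bounded Y :: supp}, Equation \eqref{eq :: definition of R}, Lemma \ref{lemma :: 2.1f michellipontil}, and $\|\projope^\adj\|\le C_\dict$.

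Finally I would intersect the two favorable events (each of probability $1-\eta/2$), combine the two bounds with $(a+b)^2 \le 2(a^2+b^2)$ applied to the sample and regularization contributions, and simplify: the sample term scales like $\tfrac{1}{\lambda}\cdot\tfrac{\kappa C_\dict^2 B_0 \log^2}{\nsamp}$ which, with the chosen $\lambda$, yields $B_0/\sqrt{\ndict}\cdot \log(4/\eta)/\sqrt{\nsamp}$, while the regularization term $\lambda R^2$ yields $B_1\sqrt{\ndict}\cdot\log(4/\eta)/\sqrt{\nsamp}$. The expected obstacle is purely bookkeeping: tracking absolute constants carefully enough to obtain the announced factor $27$, and verifying that the dominant $\nicefrac{1}{\sqrt{\nsamp}}$ terms absorb the lower-order $\nicefrac{1}{\nsamp}$ contributions of Bernstein; the conceptual structure of the proof, however, is the standard integral-operator decomposition adapted here to the projected operator $\matop A_\projope$.
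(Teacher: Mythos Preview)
Your proposal is correct and follows essentially the same route as the paper. The paper introduces the intermediate point $h^{\lambda}:=(\matop T_{\insamp,\projope}+\lambda \matop I)^{-1}\matop T_{\insamp,\projope}h_{\HKx}$ and splits $\regridge-h_{\HKx}$ as $(\regridge-h^{\lambda})+(h^{\lambda}-h_{\HKx})$, but this is exactly your algebraic decomposition (your two summands are $\regridge-h^{\lambda}$ and $h^{\lambda}-h_{\HKx}$ respectively); the operator-norm bound via $\sqrt{\matop T_{\insamp,\projope}}+(\sqrt{\matop T_\projope}-\sqrt{\matop T_{\insamp,\projope}})$ together with Lemma~\ref{lemma :: diff square root norm}, the two Bernstein concentrations, the union bound, and the absorption $4/n+2/\sqrt{n}\le 6/\sqrt{n}$ are all carried out precisely as you describe.
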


\subsubsection{Concentration results}

\begin{lemma} \label{lemma :: concentration result 1}
Let $ 0 < \eta < 1$, then with probability at least $1 - \eta$
\begin{align*}
\| \matop A_{\insamp, \projope}^{\adj} \outsamp - \matop T_{\insamp, \projope} h_{\HKx} \|_{\HKx} & \leq \delta_1(\nsamp, \eta),
\end{align*}
with $\delta_1$ defined as
\begin{equation} \label{eq :: def delta1}
\delta_1(\nsamp, \eta) := 6 (\sqrt{\kappa} C_{\phi} L + \kappa C_{\phi}^2 R) \frac {\log \left (\nicefrac 2 \eta \right )}{\sqrt{\nsamp}}.
\end{equation}
\end{lemma}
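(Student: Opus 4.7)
The plan is to rewrite the quantity of interest as an empirical mean of i.i.d.~random variables in $\HKx$ and then apply the vector-valued Bernstein inequality from Lemma~\ref{lemma :: vv bernstein}. Using the definitions of $\matop A_{\insamp, \projope}^\adj$ and $\matop T_{\insamp, \projope}$ from Section~\ref{subsec :: empirical and closed-forms}, and writing $\Kx_{x_\sampiter, \projope}^\adj h_{\HKx} = \projope h_{\HKx}(x_\sampiter)$, one gets
\begin{equation*}
\matop A_{\insamp, \projope}^{\adj} \outsamp - \matop T_{\insamp, \projope} h_{\HKx} = \frac 1 \nsamp \sum_{\sampiter=1}^\nsamp \xi_\sampiter, \qquad \xi_\sampiter := \Kx_{x_\sampiter, \projope}\left(y_\sampiter - \projope h_{\HKx}(x_\sampiter)\right),
\end{equation*}
which are i.i.d.\ random variables in $\HKx$ since the sample $\prodsamp$ is i.i.d.

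The next step is to check that $\bb E[\xi_\sampiter] = 0$. Using the integral representations of Lemma~\ref{lemma :: A Phi adjoint and T Phi}, one obtains $\bb E[\Kx_{X,\projope} Y] = \matop A_\projope^\adj Y$ and $\bb E[\matop T_{X, \projope} h_{\HKx}] = \matop T_\projope h_{\HKx}$, so $\bb E[\xi_\sampiter] = \matop A_\projope^\adj Y - \matop T_\projope h_{\HKx}$, which vanishes by the optimality characterization of $h_{\HKx}$ in Equation~\eqref{eq :: condition hHK}. This step is precisely where the assumption that $h_{\HKx}$ exists is used.

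It then remains to produce almost-sure and $L^2$ bounds on $\|\xi_\sampiter\|_{\HKx}$. Submultiplicativity gives $\|\xi_\sampiter\|_{\HKx} \leq \|\Kx_{x_\sampiter,\projope}\|_{\boundedops(\Lsqr,\HKx)}\, \|y_\sampiter - \projope h_{\HKx}(x_\sampiter)\|_{\Lsqr}$. The operator norm is bounded by $\sqrt \kappa\, C_\phi$, combining Equation~\eqref{eq :: op norm Kx} (Assumption~\ref{ass :: conditions on kernel supp}) with Equation~\eqref{eq:riesz adj:supp} of Lemma~\ref{lem:riesz-supp}. For the residual, Assumption~\ref{ass :: value bounded Y :: supp} yields $\|y_\sampiter\|_{\Lsqr} \leq L$, while $\|\projope h_{\HKx}(x_\sampiter)\|_{\Lsqr} \leq C_\phi \|h_{\HKx}(x_\sampiter)\|_{\bb R^\ndict} \leq \sqrt \kappa\, C_\phi R$ thanks to Lemma~\ref{lemma :: 2.1f michellipontil}, Assumption~\ref{ass :: conditions on kernel supp}, and Equation~\eqref{eq :: definition of R}. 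Altogether, almost surely $\|\xi_\sampiter\|_{\HKx} \leq \sqrt \kappa\, C_\phi L + \kappa\, C_\phi^2 R$, which serves both as half the supremum bound $H/2$ and as the variance bound $\sigma$ in Lemma~\ref{lemma :: vv bernstein}.

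Applying Lemma~\ref{lemma :: vv bernstein} then produces, with probability at least $1-\eta$, the upper bound $2(H/\nsamp + \sigma/\sqrt \nsamp)\log(2/\eta)$. Since $H = 2\sigma$, this equals $2\sigma(2/\nsamp + 1/\sqrt \nsamp)\log(2/\eta) \leq 6\sigma\,\log(2/\eta)/\sqrt \nsamp$ for $\nsamp \geq 1$, which is exactly $\delta_1(\nsamp,\eta)$. There is no real obstacle here; the main point to get right is identifying the correct centering, which relies essentially on the first-order optimality condition for $h_{\HKx}$.
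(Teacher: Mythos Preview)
Your proof is correct and follows essentially the same approach as the paper: define $\xi_\sampiter = \Kx_{x_\sampiter,\projope}(y_\sampiter - \projope h_{\HKx}(x_\sampiter))$, check that $\bb E[\xi_\sampiter]=0$ via the optimality condition~\eqref{eq :: condition hHK}, bound $\|\xi_\sampiter\|_{\HKx}$ almost surely by $\sqrt\kappa\,C_\phi L + \kappa\,C_\phi^2 R$ using Assumptions~\ref{ass :: conditions on kernel supp}, \ref{ass :: value bounded Y :: supp} and~\eqref{eq :: definition of R}, and then apply Lemma~\ref{lemma :: vv bernstein} together with $4/\nsamp + 2/\sqrt\nsamp \leq 6/\sqrt\nsamp$. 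The argument and the constants match the paper's exactly.
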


\begin{proof}
Let us define the function $\xi_1: \prodspace \longrightarrow \HKx$ as $\xi_1: (x, y) \longmapsto \Kx_{x, \projope}(y - \projope h_{\HKx}(x)) = \Kx_{x, \projope}(y - \Kx_{x, \projope}^{\adj} h_{\HKx}).$

Observe that 
$$
\frac 1 \nsamp \sum_{\sampiter=1}^{\nsamp} \xi_1(x_\sampiter, y_\sampiter) = \matop A_{\insamp, \projope}^{\adj} \outsamp - \matop T_{\insamp, \projope}h_{\HKx},
$$

and using Equation \eqref{eq :: condition hHK}, that 
$$ \bb E_{\inva, \outva \sim \funcprob}\left [\xi_1 (\inva, \outva) \right ] = \int_{\prodspace} \Kx_{x, \projope} y ~ \mathrm d \funcprob(x, y) - \left ( \int_{\prodspace} \Kx_{x, \projope} \Kx_{x, \projope}^{\adj} ~ \mathrm d \funcprob(x, y) \right ) h_{\cali H_{\HKx}} = \matop A_{\projope}^{\adj}Y - \matop T_{\projope}h_{\HKx} = 0. $$

The aim is now to apply the Bernstein inequality of Lemma \ref{lemma :: vv bernstein} to the random variable (RV) $\xi_1(\inva, \outva)$. First, we have almost surely

\begin{align}
\| \xi_1(\inva, \outva) \|_{\HKx} = \| \Kx_{\inva, \projope} (\outva - \projope h_{\HKx}(\inva)) \|_{\HKx} & \leq \| \Kx_{\inva, \projope} \|_{\cali L(\Lsqr, \HKx)} \| \outva - \projope h_{\HKx}(\inva)) \|_{\Lsqr} \nonumber \\
& \leq \sqrt{\kappa} C_{\phi} (\|\outva \|_{\Lsqr} + \| \Kx_{\inva, \projope}^\adj h\|_{\Lsqr}) \nonumber \\
& \leq \sqrt{\kappa} C_{\dict} (L + \sqrt{\kappa}  C_{\dict} R), \label{eqline :: upper bound xi1 2}
\end{align}

where we have used the inequality $\| \Kx_{x, \projope} \|_{\cali L(\Lsqr, \HKx)} = \| \Kx_{x, \projope}^\adj \|_{\cali L(\Lsqr, \HKx)} \leq  \sqrt \kappa C_\dict$ (immediate consequence of Equations \eqref{eq:riesz:supp} and \eqref{eq :: op norm Kx}), as well as Assumptions \ref{ass :: value bounded Y :: supp} and \ref{ass :: existence of risk minimizer :: supp}. 

Equation \eqref{eqline :: upper bound xi1 2} also implies

\begin{align*}
\bb E_{\inva, \outva \sim \funcprob}[\| \xi_1(\inva, \outva)\|_{\HKx}^2 ] \leq \kappa C_{\dict}  (L + \sqrt \kappa C_{\dict} R)^2.
\end{align*}

Hence we can apply Lemma \ref{lemma :: vv bernstein}, yielding that with probability at least $1 - \eta$,

\begin{align*}
\| \matop A_{\insamp, \projope}^{\adj} \outsamp - \matop T_{\insamp, \projope} h_{\HKx} \|_{\HKx} & \leq (\sqrt{\kappa} C_{\dict} L + \kappa C_{\dict}^2  R) \log \left (\nicefrac 2 \eta \right ) \left ( \frac 4 \nsamp + \frac 2 {\sqrt{\nsamp}} \right ) \\
& \leq 6 (\sqrt{\kappa} C_{\dict} L + \kappa C_{\dict}^2 R) \frac {\log \left (\nicefrac 2 \eta \right )} {\sqrt{\nsamp}}.
\end{align*}
\end{proof}

\begin{lemma} \label{lemma :: concentration result 2}
Let $ 0 < \eta < 1$, then with probability at least $1 - \eta$
\begin{align*}
\|\matop T_{\insamp, \projope} - \matop T_{\projope} \|_{\hsops(\HKx)} & \leq  \delta_2(\nsamp, \ndict, \eta),
\end{align*}
with $\delta_2$ defined as
\begin{equation} \label{eq :: def delta2}
\delta_2(\nsamp, \ndict, \eta) := 6 \kappa C_{\phi}^2 \frac {\log \left (\nicefrac 2 \eta \right ) \sqrt \ndict} {\sqrt{\nsamp}}.
\end{equation}
\end{lemma}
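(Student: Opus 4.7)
\textbf{Proof plan for Lemma~\ref{lemma :: concentration result 2}.} The plan is to apply the Hilbert-space Bernstein inequality (Lemma~\ref{lemma :: vv bernstein}) in the separable Hilbert space $\hsops(\HKx)$ of Hilbert--Schmidt operators on $\HKx$, following exactly the same pattern as in the proof of Lemma~\ref{lemma :: concentration result 1}. Concretely, I would introduce the random variable $\xi_2 : \prodspace \longrightarrow \hsops(\HKx)$ defined by $\xi_2 : (x,y) \longmapsto \matop T_{x,\projope}$. By construction $\frac{1}{\nsamp} \sum_{\sampiter=1}^{\nsamp} \xi_2(x_\sampiter,y_\sampiter) = \matop T_{\insamp,\projope}$, and by Equation~\eqref{eq :: Tphi from Aphi} of Lemma~\ref{lemma :: A Phi adjoint and T Phi} (which gives the integral representation $\matop T_{\projope} = \int_{\cali X} \matop T_{x,\projope}\, \mathrm d \funcprob_{\rmc X}(x)$ in $\hsops(\HKx)$), we have $\bb E_{(\inva,\outva) \sim \funcprob}[\xi_2(\inva,\outva)] = \matop T_{\projope}$. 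Thus the empirical deviation $\matop T_{\insamp,\projope} - \matop T_{\projope}$ is exactly the average deviation of $\nsamp$ i.i.d.\ copies of $\xi_2$ from its mean, which is what Lemma~\ref{lemma :: vv bernstein} controls.

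Next I would verify the two moment conditions needed to invoke Lemma~\ref{lemma :: vv bernstein}. The almost-sure bound is the key input and is supplied by Equation~\eqref{eq :: bound HS norm Tx Phi}, which gives $\|\xi_2(\inva,\outva)\|_{\hsops(\HKx)} = \|\matop T_{\inva,\projope}\|_{\hsops(\HKx)} \leq \sqrt{\ndict}\,\kappa\, C_{\dict}^2$ almost surely. Setting $\frac{H}{2} = \sqrt{\ndict}\,\kappa\, C_{\dict}^2$ gives $H = 2\sqrt{\ndict}\,\kappa\, C_{\dict}^2$, and this same bound at once yields $\bb E[\|\xi_2(\inva,\outva)\|_{\hsops(\HKx)}^2] \leq \ndict\, \kappa^2\, C_{\dict}^4$, so we may take $\sigma = \sqrt{\ndict}\,\kappa\, C_{\dict}^2$.

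Finally I would plug these into Lemma~\ref{lemma :: vv bernstein}: with probability at least $1-\eta$,
\begin{equation*}
\|\matop T_{\insamp,\projope} - \matop T_{\projope}\|_{\hsops(\HKx)} \leq 2\left(\frac{2\sqrt{\ndict}\,\kappa\, C_{\dict}^2}{\nsamp} + \frac{\sqrt{\ndict}\,\kappa\, C_{\dict}^2}{\sqrt{\nsamp}}\right) \log \frac{2}{\eta} = \sqrt{\ndict}\,\kappa\, C_{\dict}^2 \log \frac{2}{\eta}\left(\frac{4}{\nsamp} + \frac{2}{\sqrt{\nsamp}}\right).
\end{equation*}
Using $4/\nsamp + 2/\sqrt{\nsamp} \leq 6/\sqrt{\nsamp}$ for $\nsamp \geq 1$, this simplifies to the claimed bound $\delta_2(\nsamp,\ndict,\eta) = 6 \kappa C_{\dict}^2 \frac{\log(2/\eta)\sqrt{\ndict}}{\sqrt{\nsamp}}$.

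There is essentially no obstacle: the only nontrivial input is the Hilbert--Schmidt norm bound~\eqref{eq :: bound HS norm Tx Phi} on the rank-$\leq \ndict$ positive operator $\matop T_{x,\projope}$, which was already established in the supporting results section. Everything else is a direct application of Bernstein in $\hsops(\HKx)$ and constant bookkeeping, mirroring Lemma~\ref{lemma :: concentration result 1}.
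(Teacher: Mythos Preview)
Your proposal is correct and follows essentially the same approach as the paper: define $\xi_2(x,y)=\matop T_{x,\projope}$, use the Hilbert--Schmidt norm bound~\eqref{eq :: bound HS norm Tx Phi} to get the almost-sure and second-moment controls, and apply the Bernstein inequality of Lemma~\ref{lemma :: vv bernstein} in $\hsops(\HKx)$ followed by the simplification $4/\nsamp+2/\sqrt{\nsamp}\leq 6/\sqrt{\nsamp}$. The only detail the paper makes explicit that you leave implicit is the justification that $\hsops(\HKx)$ is separable (since $\Kx$ is continuous and $\inspace$ is separable, $\HKx$ is separable, hence so is $\hsops(\HKx)$), which is needed to invoke Lemma~\ref{lemma :: vv bernstein}.
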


\begin{proof}
We introduce the $\xi_2: \prodspace \longrightarrow \hsops(\HKx)$ as $\xi_2: x, y \longmapsto \matop T_{x, \projope}.$

We have that
$$ \bb E_{\inva, \outva \sim \funcprob}[\xi_2(\inva, \outva)] = \int_{\cali X} \matop T_{x, \projope} ~ \mathrm d \funcprob_{\rmc X}(x) = \matop T_{\projope}.$$

And from Equation \eqref{eq :: bound HS norm Tx Phi}, we have almost surely

\begin{equation*}
\|\xi_2(\inva, \outva) \|_{\hsops(\HKx)} \leq \kappa C_{\dict}^2 \sqrt \ndict,
\end{equation*}

which implies as well

\begin{equation*}
\bb E_{\inva, \outva \sim \funcprob} [\| \xi_2(\inva, \outva) \|_{\hsops(\HKx)}^2] \leq \kappa^2 C_{\dict}^4  \ndict.
\end{equation*}

Since $\Kx$ is continuous and $\inspace$ is separable, $\HKx$ is separable. As a consequence the space $\hsops(\HKx)$ is also separable, we can thus apply Lemma \ref{lemma :: vv bernstein}, yielding that with probability at least $1 - \eta$, 

\begin{align*}
\|\matop T_{\insamp, \projope} -  \matop T_{\projope} \|_{\hsops(\HKx)} & \leq \kappa C_{\dict}^2 \sqrt \ndict \log \left (\nicefrac 4 \eta \right ) \left ( \frac 4 \nsamp + \frac 2 {\sqrt{\nsamp}} \right ) \\
& \leq 6 \kappa C_{\dict}^2 \sqrt \ndict \frac {\log \left (\nicefrac 2 \eta \right )} {\sqrt{\nsamp}}. 
\end{align*}
\end{proof}

\begin{lemma} \label{lemma :: concentration results}
Let $ 0 < \eta < 1$, then with probability at least $1 - \eta$ the two following inequalities hold:
\begin{align*}
\| \matop A_{\insamp, \projope}^{\adj} \outsamp - \matop T_{\insamp, \projope} h_{\HKx} \|_{\HKx} & \leq \delta_1(\nsamp, \nicefrac \eta 2)\\
\|\matop T_{\insamp, \projope} - \matop T_{\projope} \|_{\hsops(\HKx)} & \leq  \delta_2(\nsamp, \ndict, \nicefrac \eta 2),
\end{align*}
with $\delta_1$ and $\delta_2$ defined respectively in Equations \eqref{eq :: def delta1} and \eqref{eq :: def delta2}.
\end{lemma}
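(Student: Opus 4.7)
The statement is a simultaneous version of the two preceding lemmas (Lemma~\ref{lemma :: concentration result 1} and Lemma~\ref{lemma :: concentration result 2}), which each control one of the two quantities with probability $1-\eta$. The natural plan is therefore a union bound: apply each of those lemmas with the halved failure probability $\eta/2$, then combine.

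Concretely, I would first invoke Lemma~\ref{lemma :: concentration result 1} with parameter $\eta/2$ in place of $\eta$. This yields that the event
\[
E_1 := \left\{ \| \matop A_{\insamp, \projope}^{\adj} \outsamp - \matop T_{\insamp, \projope} h_{\HKx} \|_{\HKx} \leq \delta_1(\nsamp, \nicefrac \eta 2) \right\}
\]
has probability at least $1-\eta/2$. Next, I would apply Lemma~\ref{lemma :: concentration result 2}, again with parameter $\eta/2$, obtaining that the event
\[
E_2 := \left\{ \| \matop T_{\insamp, \projope} - \matop T_{\projope} \|_{\hsops(\HKx)} \leq \delta_2(\nsamp, \ndict, \nicefrac \eta 2) \right\}
\]
has probability at least $1-\eta/2$.

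To conclude, I would use the elementary inequality
\[
\bb P(E_1 \cap E_2) \geq 1 - \bb P(E_1^c) - \bb P(E_2^c) \geq 1 - \tfrac{\eta}{2} - \tfrac{\eta}{2} = 1 - \eta,
\]
which gives exactly the claimed joint control. There is no real obstacle here: the two source lemmas have already done all the probabilistic work (each being a Bernstein-type application of Lemma~\ref{lemma :: vv bernstein} to a well-chosen Hilbert-space valued random variable), and the only subtlety is the bookkeeping of the failure probability, which the factor of two inside $\delta_1$ and $\delta_2$ absorbs transparently. Note in particular that no joint distributional assumption on the two random quantities is needed since the union bound is distribution-free.
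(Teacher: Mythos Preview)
Your proposal is correct and is exactly the approach taken in the paper: the proof there is simply stated as a union bound combining Lemma~\ref{lemma :: concentration result 1} and Lemma~\ref{lemma :: concentration result 2}, each applied with failure probability $\eta/2$. Your write-up just makes the events $E_1$, $E_2$ and the inequality $\bb P(E_1\cap E_2)\geq 1-\bb P(E_1^c)-\bb P(E_2^c)$ explicit, which is fine.
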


\begin{proof}
This is a union bound using Lemma \ref{lemma :: concentration result 1} and Lemma \ref{lemma :: concentration result 2}.
\end{proof}

\subsubsection{Proof} \label{subsubsection :: proof core}
We are now ready to prove Proposition \ref{prop :: excess risk bound :: supp}. We follow the same proof strategy as \citep{BaldassarreRosascoBarla12}. To that end, we first prove the following intermediate proposition of which Proposition \ref{prop :: excess risk bound :: supp} is a direct consequence. 

\begin{proposition} \label{prop :: intermediate bound with lambda}
Let $0 < \eta < 1$, provided $\lambda$ is taken such that 
\begin{equation} \label{eq :: condition on lambda :: supp}
\lambda \geq 6 \kappa C_{\phi}^2 \frac {\log \left (\nicefrac 4 \eta \right )\sqrt \ndict}{\sqrt \nsamp} = \delta_2(\nsamp, \ndict, \nicefrac \eta 2),
\end{equation}
we have with probability at least $1 - \eta$ that
\begin{equation} \label{eq :: intermediate bound with lambda}
\exprisk(\projope \circ \regridge) - \exprisk(\projope \circ h_{\HKx}) \leq \frac 9 2 \left (\frac {36(\sqrt \kappa C_{\phi} L + \kappa C_{\phi}^2 R)^2 \log \left(\nicefrac 4 \eta \right)^2}{\lambda \nsamp} + \lambda R^2 \right ).
\end{equation}
\end{proposition}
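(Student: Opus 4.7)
The starting point is Lemma~\ref{lemma :: key rewriting of the risk}, which lets us write
\begin{equation*}
\exprisk(\projope \circ \regridge) - \exprisk(\projope \circ h_{\HKx}) = \|\sqrt{\matop T_{\projope}}(\regridge - h_{\HKx})\|_{\HKx}^2 \;.
\end{equation*}
So I want a deterministic bound on $\sqrt{\matop T_{\projope}}(\regridge - h_{\HKx})$ that I can later square and take expectation of under the concentration events from Lemma~\ref{lemma :: concentration results}. Using the closed form $\regridge = (\matop T_{\insamp, \projope} + \lambda \matop I)^{-1} \matop A_{\insamp, \projope}^\adj \outsamp$ from Lemma~\ref{lemma :: h z lambda close form}, together with the trivial identity $h_{\HKx} = (\matop T_{\insamp, \projope} + \lambda \matop I)^{-1}(\matop T_{\insamp, \projope} + \lambda \matop I)h_{\HKx}$, I will split
\begin{equation*}
\regridge - h_{\HKx} = (\matop T_{\insamp, \projope} + \lambda \matop I)^{-1}\bigl[\matop A_{\insamp, \projope}^\adj \outsamp - \matop T_{\insamp, \projope} h_{\HKx}\bigr] - \lambda (\matop T_{\insamp, \projope} + \lambda \matop I)^{-1} h_{\HKx} \;,
\end{equation*}
so that applying $\sqrt{\matop T_\projope}$ and the triangle inequality yields a sample term (estimated via $\delta_1$) and a bias term (controlled by $\|h_{\HKx}\|_{\HKx} \leq R$).

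\textbf{Controlling the key operator.} The main work is to bound the operator norm of $\sqrt{\matop T_{\projope}} (\matop T_{\insamp, \projope} + \lambda \matop I)^{-1}$ in terms of the empirical quantities. I will write
\begin{equation*}
\sqrt{\matop T_{\projope}} (\matop T_{\insamp, \projope} + \lambda \matop I)^{-1} = \bigl(\sqrt{\matop T_{\projope}} - \sqrt{\matop T_{\insamp, \projope}}\bigr)(\matop T_{\insamp, \projope} + \lambda \matop I)^{-1} + \sqrt{\matop T_{\insamp, \projope}} (\matop T_{\insamp, \projope} + \lambda \matop I)^{-1} \;.
\end{equation*}
The second term has operator norm at most $1/(2\sqrt{\lambda})$ by the spectral bound $\sup_{t\geq 0} \sqrt{t}/(t+\lambda) = 1/(2\sqrt\lambda)$. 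For the first term, I combine $\|(\matop T_{\insamp, \projope} + \lambda \matop I)^{-1}\|_{\boundedops(\HKx)} \leq 1/\lambda$ with Lemma~\ref{lemma :: diff square root norm}, which gives $\|\sqrt{\matop T_{\projope}} - \sqrt{\matop T_{\insamp, \projope}}\|_{\boundedops(\HKx)} \leq \sqrt{\|\matop T_{\projope} - \matop T_{\insamp, \projope}\|_{\boundedops(\HKx)}} \leq \sqrt{\delta_2(\nsamp,\ndict,\eta/2)}$ on the event provided by Lemma~\ref{lemma :: concentration results} (using that the Hilbert--Schmidt norm dominates the operator norm). The assumption $\lambda \geq \delta_2(\nsamp,\ndict,\eta/2)$ is precisely what converts this into $\sqrt{\delta_2}/\lambda \leq 1/\sqrt{\lambda}$, giving an overall bound of $3/(2\sqrt\lambda)$ for the displayed operator.

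\textbf{Assembly.} Plugging the operator bound into the two pieces of the decomposition yields, on the intersection of the concentration events,
\begin{equation*}
\|\sqrt{\matop T_\projope}(\regridge - h_{\HKx})\|_{\HKx} \leq \frac{3}{2\sqrt{\lambda}}\,\delta_1(\nsamp,\eta/2) + \frac{3 \sqrt{\lambda}}{2}\,R \;,
\end{equation*}
where I used $\|\lambda \sqrt{\matop T_\projope}(\matop T_{\insamp,\projope}+\lambda\matop I)^{-1}\|_{\boundedops(\HKx)} \leq \lambda \cdot 3/(2\sqrt\lambda) = 3\sqrt\lambda/2$ for the bias term. Squaring via $(a+b)^2 \leq 2a^2+2b^2$ and substituting the explicit form of $\delta_1$ from Equation~\eqref{eq :: def delta1} produces the stated bound \eqref{eq :: intermediate bound with lambda}.

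\textbf{Expected obstacle.} The only nontrivial step is the operator-norm control of $\sqrt{\matop T_{\projope}}(\matop T_{\insamp,\projope}+\lambda\matop I)^{-1}$; everything else is algebraic rearrangement, triangle inequalities, and direct application of the two concentration bounds already proved. The role of the threshold $\lambda \geq \delta_2$ is exactly to tame the square-root perturbation $\sqrt{\matop T_\projope} - \sqrt{\matop T_{\insamp,\projope}}$ so that the overall contribution remains of order $1/\sqrt{\lambda}$.
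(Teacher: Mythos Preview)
Your proof is correct and follows essentially the same route as the paper: the decomposition you write for $\regridge - h_{\HKx}$ is exactly the paper's split via the intermediate point $h^\lambda := (\matop T_{\insamp,\projope}+\lambda\matop I)^{-1}\matop T_{\insamp,\projope} h_{\HKx}$ (your two pieces are precisely $\regridge - h^\lambda$ and $h^\lambda - h_{\HKx}$), and the operator-norm control of $\sqrt{\matop T_\projope}(\matop T_{\insamp,\projope}+\lambda\matop I)^{-1}$ via Lemma~\ref{lemma :: diff square root norm} together with the spectral bound $\sqrt t/(t+\lambda)\le 1/(2\sqrt\lambda)$ and the condition $\lambda\ge\delta_2$ is identical. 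The only cosmetic difference is that you reuse the single bound $\|\sqrt{\matop T_\projope}(\matop T_{\insamp,\projope}+\lambda\matop I)^{-1}\|\le 3/(2\sqrt\lambda)$ for the bias term as well, whereas the paper rederives separate spectral estimates for $\sqrt{\matop T_{\insamp,\projope}}(\matop I-(\matop T_{\insamp,\projope}+\lambda\matop I)^{-1}\matop T_{\insamp,\projope})$ and $\matop I-(\matop T_{\insamp,\projope}+\lambda\matop I)^{-1}\matop T_{\insamp,\projope}$; both routes land on $3R\sqrt\lambda/2$ and hence the same final constant $9/2$.
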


\begin{proof}

We introduce $h^{\lambda}$ as
\begin{equation} \label{eq :: definition h lambda}
h^{\lambda}:= (\matop T_{\insamp, \projope} + \lambda \matop I)^{-1} \matop T_{\insamp, \projope} h_{\HKx}.
\end{equation}

We consider the following decomposition of the risk using Equation \eqref{eq :: key rewriting of the risk}, 
\begin{align}
\exprisk(\projope \circ \regridge) - \exprisk(\projope \circ h_{\HKx}) & = \| \sqrt{\matop T_{\projope}}(\regridge - h_{\HKx}) \|_{\HKx}^2 \nonumber \\
& \leq 2 \| \sqrt{\matop T_{\projope}}(\regridge - h^{\lambda}) \|_{\HKx}^2 + 2 \| \sqrt{\matop T_{\projope}}(h^{\lambda} - h_{\HKx}) \|_{\HKx}^2 . \label{eq :: risk decomposition}
\end{align}

We first bound the term $ \| \sqrt{\matop T_{\projope}}(\regridge - h^{\lambda}) \|_{\HKx}$. Using the expression of $\regridge$ from Lemma \ref{lemma :: h z lambda close form}, we have that

\begin{align}
\sqrt{\matop T_{\projope}}(\regridge - h^{\lambda}) & = \sqrt{\matop T_{\insamp, \projope}}(\matop T_{\insamp, \projope} + \lambda \matop I)^{-1}(\matop A_{\insamp, \projope}^{\adj} \outsamp - \matop T_{\insamp, \projope} h_{\HKx}) \label{eq :: decomposition hzlambda - hlambda} \\ 
& + (\sqrt{\matop T_{\projope}} - \sqrt{\matop T_{\insamp, \projope}})(\matop T_{\insamp, \projope} + \lambda \matop I)^{-1}(\matop A_{\insamp, \projope}^{\adj} \outsamp - \matop T_{\insamp, \projope}h_{\HKx}).\nonumber
\end{align}

Since for all $a \geq 0$, $\frac {\sqrt a}{a + \lambda} \leq \frac 1 {2 \sqrt{\lambda}}$, since $\matop T_{\insamp, \projope}$ is positive, by spectral theorem we have that

\begin{equation} \label{eq :: spectral majoration strategy}
\| \sqrt{\matop T_{\insamp, \projope}}(\matop T_{\insamp, \projope} + \lambda \matop I)^{-1} \|_{\boundedops(\HKx)} \leq \max_{a \in \spectrum(\matop T_{\insamp, \projope})} \frac {\sqrt a}{a + \lambda} \leq  \max_{a \in \bb R_+} \frac {\sqrt a}{a + \lambda} \leq \frac 1 {2 \sqrt{\lambda}},
\end{equation}
where $\spectrum(\matop T_{\insamp, \projope})$ denotes the spectrum of $\matop T_{\insamp, \projope}$.

Similarily, since for all $a \geq 0$, $\frac 1 {a + \lambda} \leq \frac 1 \lambda$, we have as well  
\begin{equation*}
\|(\matop T_{\insamp, \projope} + \lambda \matop I)^{-1} \|_{\cali L(\HKx)} \leq \frac 1 \lambda.
\end{equation*}

Taking the norm in Equation \eqref{eq :: decomposition hzlambda - hlambda}, applying Minkowski's inequality and using Lemma \ref{lemma :: diff square root norm} as well as the last two displays yields

\begin{equation} \label{eq :: majoration operator norm 1st term}
\| \sqrt{\matop T_{\projope}}(\regridge - h^{\lambda}) \|_{\HKx} \leq  \| \matop A_{\insamp, \projope}^{\adj} \outsamp - \matop T_{\insamp, \projope} h_{\HKx} \|_{\HKx} \left ( \frac 1 {2 \sqrt \lambda} + \frac {\sqrt{\| \matop T_{\projope} - \matop T_{\insamp, \projope} \|}_{\cali L(\HKx)}}{\lambda} \right ).
\end{equation}

Now dealing with the term on the right-hand side in Equation \eqref{eq :: risk decomposition}, using the definition of $h^\lambda$ in Equation \eqref{eq :: definition h lambda}, we have that

\begin{align}
\sqrt{\matop T_{\projope}}(h_{\HKx} - h^{\lambda}) & = \sqrt{\matop T_{\projope}} (\matop I - (\matop T_{\insamp, \projope} + \lambda \matop I)^{-1}\matop T_{\insamp , \projope}) h_{\HKx} \nonumber \\
& = (\sqrt{\matop T_{\projope}} - \sqrt{\matop T_{\insamp, \projope}})(\matop I - (\matop T_{\insamp, \projope} + \lambda \matop I)^{-1} \matop T_{\insamp, \projope}) h_{\HKx} \label{eq :: rewriting second term} \\ 
& + \sqrt{\matop T_{\insamp, \projope}}(\matop I - (\matop T_{\insamp, \projope} + \lambda \matop I)^{-1} \matop T_{\insamp, \projope}) h_{\HKx} \nonumber .
\end{align}

Since for all $a \geq 0$, $ \sqrt a \left( 1 - \frac{a}{a + \lambda} \right ) = \frac {\sqrt a \lambda} {a + \lambda} \leq \frac 1 2 \sqrt \lambda$, using the same arguments as in Equation  \eqref{eq :: spectral majoration strategy} yields

\begin{equation*}
\| \sqrt{\matop T_{\insamp, \projope}}(\matop I - (\matop T_{\insamp, \projope} + \lambda \matop I)^{-1}\matop T_{\insamp, \projope}) \|_{\cali L(\HKx)} \leq \frac 1 2 \sqrt \lambda.
\end{equation*}

Moreover, since for all $a \geq 0$, $1 - \frac{a}{a + \lambda} = \frac \lambda {a + \lambda} \leq 1$, similarly we have that

\begin{align*}
\| \matop I - (\matop T_{\insamp, \projope} + \lambda \matop I)^{-1}\matop T_{\insamp, \projope} \|_{\cali L(\HKx)} \leq 1 .
\end{align*}

Thus, taking the norm in Equation \eqref{eq :: rewriting second term}, using Minkowski's inequality, Lemma \ref{lemma :: diff square root norm} and Equation \eqref{eq :: definition of R} yields 

\begin{equation} \label{eq :: majoration operator norm 2nd term}
\| \sqrt{\matop T_{\projope}}(h_{\HKx} - h^{\lambda}) \|_{\HKx} \leq R \sqrt{ \| \matop T_{\projope} - \matop T_{\insamp, \projope} \|}_{\cali L(\HKx)} + \frac R 2 \sqrt \lambda.
\end{equation}

Combining Equations \eqref{eq :: majoration operator norm 1st term} and \eqref{eq :: majoration operator norm 2nd term} with Lemma \ref{lemma :: concentration results}, for $0 < \eta < 1$, we have with probability at least $1 - \eta$

\begin{align*}
\| \sqrt{\matop T_{\projope}}(\regridge - h^{\lambda}) \|_{\HKx} & \leq \delta_1(\nsamp, \nicefrac \eta 2) \left ( \frac 1 {2 \sqrt \lambda} + \frac{\sqrt{\delta_2(\nsamp, \ndict, \nicefrac \eta 2)}}{\lambda} \right ) \\
\| \sqrt{\matop T_{\projope}}(h_{\HKx} - h^{\lambda}) \|_{\HKx} & \leq R \sqrt{\delta_2(\nsamp, \ndict, \nicefrac \eta 2)} + \frac R 2 \sqrt \lambda .
\end{align*}

Using the condition on $\lambda$ given by Equation \eqref{eq :: condition on lambda :: supp}, still with probability at least $1 - \eta$, we have

\begin{align}
\| \sqrt{\matop T_{\projope}}(\regridge - h^{\lambda}) \|_{\HKx} & \leq \frac{3}{2 \sqrt \lambda} \delta_1(\nsamp, \nicefrac \eta 2), \label{eq :: majoration operator norm 1st term final} \\
\| \sqrt{\matop T_{\projope}}(h_{\HKx} - h^{\lambda}) \|_{\HKx} & \leq \frac{3R}{2} \sqrt \lambda . \label{eq :: majoration operator norm 2nd term final}
\end{align}

Combining Equations \eqref{eq :: majoration operator norm 1st term final} and \eqref{eq :: majoration operator norm 2nd term final} into Equation \eqref{eq :: risk decomposition} yields that with probability at least $1 - \eta$, 

\begin{equation*}
\exprisk(\projope \circ \regridge) - \exprisk(\projope \circ h_{\HKx}) \leq \frac 9 2 \left ( \frac {\delta_1(\nsamp,\nicefrac \eta 2)^2}{\lambda} + R^2 \lambda \right ) .
\end{equation*}
\end{proof}

In Proposition \ref{prop :: intermediate bound with lambda}, we have a compromise in $\lambda$ in the two terms. Taking $\lambda = \cali O (\sqrt \nsamp)$ yields the best compromise. So as to satisfy the condition from Equation \eqref{eq :: condition on lambda :: supp}, we take $\lambda = 6 \kappa C_{\phi}^2 \frac {\log \left (\nicefrac 4 \eta \right )\sqrt \ndict}{\sqrt \nsamp}$, which after simplifications in the constants yields Proposition \ref{prop :: excess risk bound :: supp}. 

\subsection{Proof of Proposition \ref{prop :: excess risk bound partial} from the main paper} \label{subsec :: proof excess risk bound partial}

We recall the additional assumption made on the dictionary, as well as the proposition itself which corresponds to Proposition \ref{prop :: excess risk bound partial} from the main paper. 

\begin{assumption} \label{ass :: value bounded dict :: supp}
There exists $M(\ndict) \geq 0$ such that for all $\theta \in \outfuncdom$ and for all $\dictiter \in [\ndict]$, $|\dict_\dictiter(\theta) | \leq M(\ndict)$.
\end{assumption}
\begin{remark}
The dependence in $\ndict$ is specific to the family to which $\dict$ belongs. For instance for wavelets, we have $M(\ndict) = 2^{\nicefrac {r(\outfuncdom, \ndict)} 2} \max_{\theta \in \outfuncdom} |\psi(\theta)|$ with $\psi$ the mother wavelet and $r(\outfuncdom, \ndict) \in \bb N$ the number of dilatations that are included in $\dict$, whereas for a Fourier dictionary we have $M(\ndict)=1$. 
\end{remark}

\begin{proposition} \label{prop :: excess risk bound partial :: supp}
Let $0 < \eta < 1$, taking 
\begin{equation*}
\lambda = \lambda_\nsamp^*(\nicefrac \eta 3) := 6 \kappa C_{\phi}^2 \frac {\log \left (\nicefrac 6 \eta \right )\sqrt \ndict}{\sqrt \nsamp},
\end{equation*}
with probability at least $1 - \eta$, 
\begin{align*}
\cali R(\projope \circ \regridgepartial) - \cali R(\projope \circ h_{\HKx}) \leq \left (\frac{B_2(\ndict)\sqrt \nsamp}{\nobsf^2} + \frac {B_3(\ndict)} {\nobsf^{\nicefrac 32}} + \frac {9C(\ndict)^2} {2 \sqrt \nsamp \nobsf} + \frac {B_4(\ndict)} {\sqrt \nsamp} \right ) \log \left ( \nicefrac 6 \eta \right ),
\end{align*}
with  $ C(\ndict):=\frac {LM(\ndict)}{C_{\dict}}$, $B_2(\ndict):=18 \sqrt \ndict \left (C(\ndict) + \frac R {\sqrt \ndict} \right )^2$, $B_3(\ndict): = B_2(d) - 18 \frac{R^2}{\sqrt \ndict}$, $B_4(\ndict):= \frac {81} 2 \left (\frac {B_0} {\sqrt \ndict} + B_1 \sqrt \ndict \right )$ and $B_0$ and $B_1$ are defined as in Proposition \ref{prop :: excess risk bound :: supp}.
\end{proposition}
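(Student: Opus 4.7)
The strategy mirrors the proof of Proposition~\ref{prop :: excess risk bound :: supp}, adding a concentration step for the plug-in error. Working in the operator framework of Section~\ref{sec :: integral operators}, the plug-in ridge estimator writes
\[
\regridgepartial = (\matop T_{\insamp, \projope} + \lambda \matop I)^{-1}\,\obsfun{\matop A}^\adj_{\insamp, \projope}\outsamp, \qquad \obsfun{\matop A}^\adj_{\insamp, \projope}\outsamp := \tfrac{1}{\nsamp}\sum_\sampiter \Kx_{x_\sampiter}\obsfun{\nu}_\sampiter,
\]
and I would introduce the plug-in error $W := \obsfun{\matop A}^\adj_{\insamp, \projope}\outsamp - \matop A^\adj_{\insamp, \projope}\outsamp = \tfrac{1}{\nsamp\nobsf}\sum_{\sampiter,\obsfiter}\Kx_{x_\sampiter}\bigl(\obsfun{y}_{\sampiter\obsfiter}\dict(\obslocs_{\sampiter\obsfiter}) - \projope^\adj y_\sampiter\bigr)$, which is centered in $\HKx$ conditional on $\prodsamp$ by the uniform sampling of $\obslocs_{\sampiter\obsfiter}$ on $\outfuncdom$.

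Reusing $h^\lambda := (\matop T_{\insamp, \projope} + \lambda \matop I)^{-1}\matop T_{\insamp, \projope}h_\HKx$ and decomposing $\obsfun{\matop A}^\adj_{\insamp, \projope}\outsamp - \matop T_{\insamp, \projope}h_\HKx = W + (\matop A^\adj_{\insamp, \projope}\outsamp - \matop T_{\insamp, \projope}h_\HKx)$, the spectral bounds of the fully observed proof together with Lemma~\ref{lemma :: diff square root norm} give, on the event $\|\matop T_\projope - \matop T_{\insamp, \projope}\|\leq\lambda$,
\[
\|\sqrt{\matop T_\projope}(\regridgepartial - h^\lambda)\|_\HKx \leq \tfrac{3}{2\sqrt\lambda}\bigl(\|W\|_\HKx + \|\matop A^\adj_{\insamp, \projope}\outsamp - \matop T_{\insamp, \projope}h_\HKx\|_\HKx\bigr), \quad \|\sqrt{\matop T_\projope}(h^\lambda - h_\HKx)\|_\HKx \leq \tfrac{3R\sqrt\lambda}{2}.
\]
Choosing $\lambda = \delta_2(\nsamp, \ndict, \eta/3)$ secures this event with probability $\geq 1 - \eta/3$ by Lemma~\ref{lemma :: concentration result 2}, and the second summand on the right-hand side of the first bound is controlled by $\delta_1(\nsamp, \eta/3)$ with probability $\geq 1 - \eta/3$ via Lemma~\ref{lemma :: concentration result 1}.

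The main new ingredient is a high-probability bound on $\|W\|_\HKx$, obtained by applying Lemma~\ref{lemma :: vv bernstein inde} to the $\nsamp\nobsf$ conditionally independent, centered summands of $W$ in $\HKx$. Under Assumptions~\ref{ass :: conditions on kernel supp}--\ref{ass :: value bounded dict :: supp} and Lemma~\ref{lem:riesz-supp}, each summand admits an almost-sure $\HKx$-norm bound of order $\sqrt\kappa L(\sqrt\ndict M(\ndict) + C_\dict)$, and its variance is controlled via the normed-Riesz identity $\int \sum_\dictiter \dict_\dictiter^2 \,\mathrm d\theta = \ndict$ together with $\|\outva\|_\infty \leq L$. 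To surface the $R$-dependence appearing in $B_2$ and $B_3$, I would rely on the refined decomposition
\[
\obsfun{y}_{\sampiter\obsfiter}\dict(\obslocs_{\sampiter\obsfiter}) - \projope^\adj y_\sampiter = \bigl(\dict(\obslocs_{\sampiter\obsfiter})r_\sampiter(\obslocs_{\sampiter\obsfiter}) - \projope^\adj r_\sampiter\bigr) + \bigl(\dict(\obslocs_{\sampiter\obsfiter})\dict(\obslocs_{\sampiter\obsfiter})^\trans - \projope^\adj\projope\bigr)h_\HKx(x_\sampiter),
\]
with residual $r_\sampiter := y_\sampiter - \projope h_\HKx(x_\sampiter)$, splitting each summand into two centered pieces whose a.s.\ and variance bounds naturally involve $\|h_\HKx\|_\HKx \leq R$ instead of just $L$. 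Bernstein then produces $\|W\|_\HKx \leq c_1 \log(6/\eta)/\nobsf + c_2 \log(6/\eta)/\sqrt{\nsamp\nobsf}$ with probability $\geq 1 - \eta/3$, with explicit constants $c_1, c_2$ depending on $\ndict, M(\ndict), C_\dict, R, L, \kappa$.

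A union bound over the three high-probability events yields probability at least $1 - \eta$ for all to hold simultaneously. Applying Lemma~\ref{lemma :: key rewriting of the risk} together with $(a+b+c)^2 \leq 3(a^2+b^2+c^2)$ gives
\[
\cali R(\projope \circ \regridgepartial) - \cali R(\projope \circ h_\HKx) \leq \tfrac{27\|W\|^2}{4\lambda} + \tfrac{27\delta_1(\nsamp, \eta/3)^2}{4\lambda} + \tfrac{27R^2\lambda}{4}.
\]
The last two terms assemble into the $B_4(\ndict)/\sqrt\nsamp$ contribution exactly as in Proposition~\ref{prop :: excess risk bound :: supp} (the $3/2$ factor with respect to the fully observed bound being absorbed in the definition $B_4 = \tfrac{81}{2}(B_0/\sqrt\ndict + B_1\sqrt\ndict)$). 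Expanding $(c_1/\nobsf + c_2/\sqrt{\nsamp\nobsf})^2$ via $(a+b)^2 \leq a^2 + 2ab + b^2$ in the first term, dividing by $\lambda \asymp \sqrt{\ndict/\nsamp}$ and rewriting the resulting constants in terms of $C(\ndict) = LM(\ndict)/C_\dict$ produces respectively the $B_2(\ndict)\sqrt\nsamp/\nobsf^2$, $B_3(\ndict)/\nobsf^{3/2}$ and $\tfrac{9C(\ndict)^2}{2\sqrt\nsamp\nobsf}$ terms of the statement. The main technical obstacle is the Bernstein step: calibrating $c_1$ and $c_2$ via the refined decomposition above so that the expansion collapses to precisely these closed forms involving $C(\ndict)$ and $R$.
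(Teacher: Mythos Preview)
Your plan is correct and matches the paper's architecture: a three-term excess-risk decomposition, a new Bernstein concentration for the plug-in error $W$, a union bound over the three events with confidence $\eta/3$ each, and the substitution $\lambda=\delta_2(\nsamp,\ndict,\eta/3)$. Two technical choices differ from the paper and are worth flagging. First, the paper does not apply Lemma~\ref{lemma :: vv bernstein inde} to the $\nsamp\nobsf$ summands at once; it first averages over $\sampiter$ to form $\overline{\rmc W}_\obsfiter:=\tfrac{1}{\nsamp}\sum_\sampiter \rmc W_{\sampiter\obsfiter}$, uses conditional independence over $\sampiter$ to get $\bb E[\|\overline{\rmc W}_\obsfiter\|^2\mid\prodsamp]\leq \tfrac{1}{\nsamp}L^2\kappa\ndict M(\ndict)^2$, and then applies Bernstein over $\obsfiter\in[\nobsf]$. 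This produces exactly $\delta_3(\nsamp,\nobsf,\ndict,\eta)=\bigl(\tfrac{c_1}{\nobsf}+\tfrac{c_2}{\sqrt{\nsamp\nobsf}}\bigr)\log(2/\eta)$; your all-at-once route is also valid and in fact sharper on the $H$-term. Second, and more importantly, the paper does \emph{not} use your residual decomposition via $r_\sampiter=y_\sampiter-\projope h_{\HKx}(x_\sampiter)$: it simply bounds $\|\xi_3(x,y,\theta)\|_{\HKx}=\|y(\theta)\Kx_x\dict(\theta)-\Kx_x\projope^\adj y\|_{\HKx}$ by the triangle inequality, getting the $R$-term directly from the second summand. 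So the ``refined decomposition'' you outline to surface $R$ is unnecessary, and the calibration you flag as the main obstacle is avoided altogether; the stated constants $B_2,B_3,C(\ndict)$ fall out from expanding $\tfrac{27}{4}\delta_3^2/\lambda$ with $c_1=4(L\sqrt\kappa\sqrt\ndict M(\ndict)+\sqrt\kappa C_\dict R)$ and $c_2=2L\sqrt\kappa\sqrt\ndict M(\ndict)$.
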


\subsubsection{Approximated solution for partially observed functions}

We recall the notion of partially observed functional output sample:
\begin{equation*}
 \obsfun{\prodsamp} :=(x_i, (\obslocsvec_\sampiter, \obsfun y_\sampiter) )_{\sampiter=1}^\nsamp,
 \end{equation*} 
 where for all $\sampiter \in [\nsamp]$, $\obslocsvec_\sampiter \in \outfuncdom^{\nobsf_\sampiter}$, $\obsfun{y}_\sampiter \in \bb R^{\nobsf_\sampiter}$ with $\nobsf_\sampiter \in \bb N^*$  the number of observations available for the $\sampiter$-th function, and for all $\obsfiter \in [\nobsf_\sampiter]$, $\obslocs_{\sampiter \obsfiter} \in \outfuncdom$ and $\obsfun {y}_{\sampiter \obsfiter} \in \bb R$. We remind the reader as well that to simplify, we have supposed in Section \ref{sec :: theory} from the main paper that for all $\sampiter \in [\nsamp]$, $\nobsf_\sampiter = \nobsf$. 

We introduce the notation $\obsfun{\mathbf y}:=(\obsfun{y}_\sampiter)_{\sampiter=1}^\nsamp$ and highlight that since there is no added noise, we have for all $\sampiter \in [\nsamp]$

$$ \obsfun{y}_\sampiter = (y_\sampiter(\theta_{\sampiter \obsfiter}))_{\obsfiter=1}^\nobsf.$$ 

We recall that $\locsprob$ is the uniform probability measure over $\outfuncdom$ which governs the draws of the locations of sampling.

For $\sampiter \in [\nsamp]$, we define $\obsfun{\projope}_\sampiter \in \bb R^{\nobsf \times \ndict}$ the approximation of $\projope$ using the locations $\obslocsvec_\sampiter $ as 

\begin{equation*}
\obsfun{\projope}_\sampiter : = (\dict_1(\obslocsvec_\sampiter),.., \dict_\ndict(\obslocsvec_\sampiter)),
\end{equation*}

where for $\sampiter \in [\nsamp]$ and for $\dictiter \in [\ndict]$, $\dict_\dictiter(\obslocsvec_\sampiter) = (\dict_\dictiter(\theta_{\sampiter \obsfiter}))_{\obsfiter=1}^\nobsf \in \bb R^\nobsf$.

Let us recall that the solution when the output functions are fully observed (Equation \eqref{eq :: h z lambda close form}) reads:

\begin{equation*}
\regridge = (\matop T_{\insamp, \projope} + \lambda \matop I)^{-1} \matop A_{\insamp, \projope}^{\adj} \outsamp,
\end{equation*}

with 
\begin{equation*}
\matop A_{\insamp, \projope}^{\adj} \mathbf w = \frac 1 \nsamp \sum_{\sampiter=1}^{\nsamp} \Kx_{x_i} \projope^{\adj} w_i ~~ \text{for} ~~ \mathbf w \in \Lsqr^\nsamp.
\end{equation*}

We now consider of partially observed output functions with observed locations $ (\obslocsvec_\sampiter)_{\sampiter=1}^\nsamp $ and define an estimator in this setting. We first define

$$ \matop A_{\insamp, \obsfun{\projope}}^{\adj} \obsfun{\mathbf w} =  \frac 1 \nsamp \sum_{\sampiter=1}^{\nsamp} \Kx_{x_i} \frac{\obsfun{\projope}_\sampiter^{\adj}} \nobsf \obsfun{w}_i ~~ \text{with} ~~ \obsfun{\mathbf w} \in \bb R^{\nsamp \times \nobsf}, $$ 
%

The solution we consider when dealing with partially observed functions is then the following 
\begin{equation*}
\regridgepartial:= (\matop T_{\insamp, \projope} + \lambda \matop I)^{-1}  \matop A_{\insamp, \obsfun{\projope}}^{\adj} \obsfun{\mathbf y}.
\end{equation*}
It is another equivalent expression for the plug-in ridge estimator from Definition \ref{def :: plug-in ridge estimator} from the main paper.

\subsubsection{Concentration results}

\begin{lemma} \label{lemma :: concentration result 3}
Let $ 0 < \eta < 1$, then with probability at least $1 - \eta$
\begin{equation*}
\| \matop A_{\insamp, \obsfun{\projope}}^{\adj} \obsfun{\mathbf y} - \matop A_{\insamp, \projope}^{\adj} \mathbf y \|_{\HKx}  \leq \delta_3(\nsamp, \nobsf, \ndict, \eta),
\end{equation*}
with $\delta_3$ defined as 
\begin{equation} \label{eq :: def delta3}
 \delta_3(\nsamp, \nobsf, \ndict, \eta) := \left( \frac{4(L \sqrt \kappa \sqrt \ndict M(\ndict) + \sqrt \kappa C_\dict R)} \nobsf + \frac {2 L \sqrt \kappa \sqrt \ndict M(\ndict)} {\sqrt \nsamp \sqrt \nobsf} \right ) \log\left( \nicefrac 2 \eta \right ).
\end{equation}
\end{lemma}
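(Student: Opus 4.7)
The strategy is to recognise the difference as a Monte-Carlo--type error in $\HKx$ and then apply the vector-valued Bernstein inequality (Lemma \ref{lemma :: vv bernstein inde}). Since $\frac{\obsfun{\projope}_i^\adj}{\nobsf}\obsfun{y}_i = \frac{1}{\nobsf}\sum_{p=1}^{\nobsf}\dict(\obslocs_{ip}) y_i(\obslocs_{ip})$ and, because $|\outfuncdom|=1$, $\projope^\adj y_i = \bb E_{\theta\sim\locsprob}[\dict(\theta) y_i(\theta)]$, I first rewrite
\begin{equation*}
\matop A_{\insamp, \obsfun{\projope}}^{\adj} \obsfun{\mathbf y} - \matop A_{\insamp, \projope}^{\adj} \mathbf y = \frac{1}{\nsamp\nobsf}\sum_{i=1}^{\nsamp}\sum_{p=1}^{\nobsf}\xi_{ip}, \quad \xi_{ip}:=\Kx_{x_i}\bigl(\dict(\obslocs_{ip})y_i(\obslocs_{ip}) - \projope^\adj y_i\bigr).
\end{equation*}
Conditioning on the sample $\prodsamp$, the $\nsamp\nobsf$ variables $\{\xi_{ip}\}$ are independent and centered in $\HKx$, because the locations $\obslocs_{ip}$ are drawn i.i.d.\ from $\locsprob$ and independently of $\prodsamp$.

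Two uniform controls must be established. For the almost sure bound, combining Equation \eqref{eq :: op norm Kx}, Lemma \ref{lem:riesz-supp}, and Assumptions \ref{ass :: value bounded Y :: supp}--\ref{ass :: value bounded dict :: supp}, the triangle inequality gives $\|\xi_{ip}\|_{\HKx} \leq \sqrt{\kappa}(\sqrt{\ndict}M(\ndict) L + C_\dict \|y_i\|_{\Lsqr})$. The factor $\sqrt{\kappa}C_\dict R$ appearing in $\delta_3$ (rather than $\sqrt{\kappa}C_\dict L$) should emerge after pre-centering $y_i$ around $\projope h_{\HKx}(x_i)$, whose $\Lsqr$-norm is controlled by $\sqrt{\kappa}C_\dict R$ via Lemma \ref{lemma :: 2.1f michellipontil} combined with Lemma \ref{lem:riesz-supp}, and noting that this re-centering leaves $\xi_{ip}$ centered. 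For the variance, the coordinate-wise Monte-Carlo computation gives
\begin{equation*}
\bb E\bigl[\|\xi_{ip}\|_{\HKx}^2 \mid \prodsamp\bigr] \leq \kappa \sum_{\ell=1}^{\ndict}\bb E_\theta[\dict_\ell(\theta)^2 y_i(\theta)^2] \leq \kappa \ndict M(\ndict)^2 L^2,
\end{equation*}
using Assumptions \ref{ass :: value bounded Y :: supp} and \ref{ass :: value bounded dict :: supp}.

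Applying Lemma \ref{lemma :: vv bernstein inde} conditionally on $\prodsamp$ to these $\nsamp\nobsf$ independent centered summands, with $H = 2\sqrt{\kappa}(\sqrt{\ndict}M(\ndict) L + C_\dict R)$ and $\sigma = \sqrt{\kappa}\sqrt{\ndict}M(\ndict) L$, yields with conditional probability at least $1-\eta$
\begin{equation*}
\Bigl\|\tfrac{1}{\nsamp\nobsf}\sum_{i,p}\xi_{ip}\Bigr\|_{\HKx} \leq 2\Bigl(\tfrac{H}{\nsamp\nobsf} + \tfrac{\sigma}{\sqrt{\nsamp\nobsf}}\Bigr)\log(\nicefrac{2}{\eta}).
\end{equation*}
The right-hand side is deterministic, so the bound lifts to an unconditional one by integrating out $\prodsamp$. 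Bounding $\frac{1}{\nsamp\nobsf}\leq\frac{1}{\nobsf}$ in the first Bernstein term and matching the second delivers exactly $\delta_3(\nsamp,\nobsf,\ndict,\eta)$. The main technical obstacle is arranging the centering so that the sharper constant $\sqrt{\kappa}C_\dict R$ is obtained in the a.s.\ bound rather than the naive $\sqrt{\kappa}C_\dict L$, while preserving the zero-mean property needed for Bernstein; once this is set up the remainder is routine given the machinery of Section \ref{sec :: supporting results}.
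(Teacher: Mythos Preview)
Your approach matches the paper's: express the difference as an average of centered $\HKx$-valued variables, apply Lemma~\ref{lemma :: vv bernstein inde} conditionally on $\prodsamp$, then integrate out. The only structural difference is in the grouping: the paper first averages over $i$, setting $\overline{\rmc W}_p := \frac{1}{\nsamp}\sum_{i=1}^\nsamp \xi_{ip}$, and applies Bernstein to these $\nobsf$ (conditionally i.i.d.) summands. Exploiting conditional independence of the $\xi_{ip}$ across $i$ for fixed $p$ gives $\bb E[\|\overline{\rmc W}_p\|_{\HKx}^2\mid\prodsamp]\leq \frac{1}{\nsamp}\kappa\ndict M(\ndict)^2 L^2$, which produces the $1/(\sqrt{\nsamp}\sqrt{\nobsf})$ variance term directly and the $1/\nobsf$ term without the loosening $1/(\nsamp\nobsf)\leq 1/\nobsf$ that you perform. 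Both routes reach the stated $\delta_3$; yours is in fact slightly sharper before that loosening.

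On the constant: your suspicion that $\sqrt{\kappa}C_\dict L$ is the natural outcome is correct, but the pre-centering you propose does not work. Replacing $y_i$ by $y_i-\projope h_{\HKx}(x_i)$ inside $\xi_{ip}$ changes the quantity being bounded, because the Monte-Carlo error for $\projope h_{\HKx}(x_i)$ does not cancel between the discrete $\obsfun{\projope}_i^\adj/\nobsf$ and the continuous $\projope^\adj$; one is left with an extra nonzero term $\frac{1}{\nsamp}\sum_i \Kx_{x_i}\bigl(\frac{1}{\nobsf}\sum_p \dict(\obslocs_{ip})v_i(\obslocs_{ip})-\projope^\adj v_i\bigr)$. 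The paper's proof in fact performs no such centering: it bounds $\|\Kx_{x_i}\projope^\adj y_i\|_{\HKx}\leq\sqrt{\kappa}C_\dict\|y_i\|_{\Lsqr}$ using only Assumptions~\ref{ass :: value bounded Y :: supp}, \ref{ass :: value bounded dict :: supp} and Equation~\eqref{eq:riesz adj:supp}, and then writes $R$ in the final display. The $R$ in $\delta_3$ thus appears to be a slip in the paper (or tacitly assumes $L\leq R$); your argument is complete with $L$ in place of $R$, and no additional trick is required.
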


\begin{proof}
Let us define the function $\xi_3: \inspace \times \Lsqr \times \outfuncdom \longrightarrow \HKx$ as $\xi_3 : (x, y, \theta) \longmapsto y (\theta) \Kx_x \dict(\theta) - \Kx_x \projope ^\adj y$

The proof relies on the fact that
\begin{align*}
\frac 1 \nsamp \sum_{\sampiter=1}^\nsamp \frac 1 \nobsf \sum_{\obsfiter=1}^\nobsf \xi_3(x_\sampiter, y_\sampiter, \theta_{\sampiter \obsfiter}) & = \frac 1 \nsamp \sum_{\sampiter=1}^\nsamp \Kx_{x_\sampiter} \frac{\obsfun{\projope}^\adj_\sampiter} \nobsf \obsfun{y}_\sampiter - \Kx_{x_\sampiter} \projope^\adj y_\sampiter \\
& = \matop A_{\insamp, \obsfun{\projope}}^{\adj} \obsfun{\mathbf y} - \matop A_{\insamp, \projope}^{\adj} \mathbf y.
\end{align*}

Let $(\inva_\sampiter, \outva_\sampiter)_{\sampiter=1}^\nsamp$ be $\nsamp$ i.i.d. RVs distributed according to the distribution $\funcprob$. Let $(\vartheta_{\sampiter \obsfiter})_{\sampiter=1, \obsfiter=1}^{\nsamp, \nobsf}$ be $\nsamp \nobsf$ i.i.d. RVs distributed according to the distribution $\locsprob$. For all $\sampiter \in [\nsamp]$ and for all $\obsfiter \in [\nobsf]$ we then define the RVs $\rmc W_{\sampiter \obsfiter}$ as 

\begin{align}
\rmc W_{\sampiter \obsfiter}: & = \xi_3(\inva_\sampiter, \outva_\sampiter, \vartheta_{\sampiter \obsfiter}) \nonumber \\
& = \outva_\sampiter(\vartheta_{\sampiter \obsfiter}) \Kx_{\inva_\sampiter} \dict(\vartheta_{\sampiter \obsfiter}) - \Kx_{\inva_\sampiter} \projope^\adj \outva_\sampiter \nonumber \\
& =  \outva_\sampiter(\vartheta_{\sampiter \obsfiter}) \Kx_{\inva_\sampiter} \dict(\vartheta_{\sampiter \obsfiter}) - \bb E[ \outva_\sampiter(\vartheta) \Kx_{\inva_\sampiter} \dict(\vartheta)| \inva_\sampiter, \outva_\sampiter], \label{eqline :: conditionally centered}
\end{align}

where the last line holds because $\locsprob$ is the uniform distribution and because we have assumed that $|\outfuncdom| = \int_{\outfuncdom} 1 \mathrm{d} \theta = 1$ (see the notation and context paragraph at the end of Section \ref{sec :: introduction} from the main paper).

We denote by $\bb P[. |\prodsamp]$ the probability conditional on the realization of the sample $\prodsamp$, thus $$\bb P[. |\prodsamp] = \bb P[. |\inva_\sampiter=x_\sampiter, \outva_\sampiter = y_\sampiter,~~ \sampiter \in [\nsamp]]$$

Then, Equation \eqref{eqline :: conditionally centered} implies that $\bb E[\rmc W_{\sampiter \obsfiter} | \prodsamp] = 0$. 

We define as well for all $\obsfiter \in [\nobsf]$, $\overline{\rmc W}_\obsfiter := \frac 1 \nsamp \sum_{\sampiter=1}^\nsamp \rmc W_{\sampiter \obsfiter}$.

We have almost surely that

\begin{align*}
\| \overline{\rmc W}_\obsfiter \|_{\HKx} \leq \frac 1 n \sum_{\sampiter=1}^\nsamp \|\rmc W_{\sampiter \obsfiter}\|_{\HKx} & \leq \frac 1 \nsamp \sum_{\sampiter=1}^\nsamp \left (| \outva_\sampiter(\vartheta_{\sampiter \obsfiter}) | \| \Kx_{\inva_\sampiter} \dict(\vartheta_{\sampiter \obsfiter}) \|_{\HKx} + \| \Kx_{\inva_\sampiter} \projope^\adj \outva_\sampiter \|_{\HKx} \right ) \\
& \leq L \sqrt{\kappa} \sqrt {\ndict} M(\ndict) + \sqrt {\kappa} C_{\dict} R.
\end{align*}

We have used Assumptions \ref{ass :: value bounded Y :: supp} and \ref{ass :: value bounded dict :: supp} as well as Equation \eqref{eq:riesz adj:supp}.

Since for all $\obsfiter \in [\nobsf]$, the RVs $(\rmc W_{\sampiter \obsfiter})_{\sampiter=1}^\nsamp$ are independent conditionally on $\prodsamp$, we have that 

\begin{equation}
\bb E[\| \overline{\rmc W}_\obsfiter \|_{\HKx}^2 | \prodsamp] = \frac 1 {\nsamp^2} \sum_{\sampiter=1}^\nsamp \bb E[ \| \rmc W_{\sampiter \obsfiter} \|_{\HKx}^2 | \prodsamp]. \label{eqline:sum norm inde:supp}
\end{equation}

Using the fact that $ \bb E[\outva_\sampiter(\vartheta_{\sampiter \obsfiter}) \Kx_{\inva_\sampiter} \dict(\vartheta_{\sampiter \obsfiter})|\prodsamp ] = \Kx_{x_\sampiter} \projope^\adj y_\sampiter $, the identity $\bb E[\| \rmc U - \bb E[\rmc U] \|_{\HKx}^2] = \bb E[\| \rmc U \|_{\HKx}^2]$ gives us

\begin{equation} \label{eqline:expe norm center:supp}
\bb E[ \| \rmc W_{\sampiter \obsfiter} \|_{\HKx}^2 | \prodsamp] = \bb E[\| \outva_\sampiter(\vartheta_{\sampiter \obsfiter}) \Kx_{\inva_\sampiter} \dict(\vartheta_{\sampiter \obsfiter}) \|_{\HKx}^2 | \prodsamp ].
\end{equation}

Then using Equation \eqref{eqline:expe norm center:supp} into Equation \eqref{eqline:sum norm inde:supp} along with Assumptions \ref{ass :: value bounded Y :: supp} and \ref{ass :: value bounded dict :: supp} yields
\begin{align*}
\bb E[\| \overline{\rmc W}_\obsfiter \|_{\HKx}^2 | \prodsamp] \leq \frac 1 \nsamp L^2 \kappa \ndict M(\ndict)^2.
\end{align*}
%

We can then apply Lemma \ref{lemma :: vv bernstein inde} to obtain that 

\begin{equation*}
\bb P \left [ \left \| \frac 1 \nobsf \sum_{\obsfiter=1}^\nobsf \overline{\rmc W}_\obsfiter \right \|_{\HKx} \leq \left( \frac{4(L \sqrt \kappa \sqrt \ndict M(\ndict) + \sqrt \kappa C_\dict R)} \nobsf + \frac {2 L \sqrt \kappa \sqrt \ndict M(\ndict)} {\sqrt \nsamp \sqrt \nobsf} \right ) \log\left( \nicefrac 2 \eta \right )\middle | \prodsamp \right ] \geq 1 - \eta.
\end{equation*}

Multiplying the above inequality by $\bb P[\prodsamp]$ and integrating over $\prodsamp \in \prodspace^\nsamp$, yields that 

\begin{equation*}
\bb P \left [ \left \| \matop A_{\insamp, \obsfun{\projope}}^{\adj} \obsfun{\mathbf y} - \matop A_{\insamp, \projope}^{\adj} \mathbf y \right \|_{\HKx} \leq \left( \frac{4(L \sqrt \kappa \sqrt \ndict M(\ndict) + \sqrt \kappa C_\dict R)} \nobsf + \frac {2 L \sqrt \kappa \sqrt \ndict M(\ndict)} {\sqrt \nsamp \sqrt \nobsf} \right ) \log\left( \nicefrac 2 \eta \right ) \right ] \geq 1 - \eta.
\end{equation*}
\end{proof}

\begin{lemma} \label{lemma :: concentration results partially observed}
Let $ 0 < \eta < 1$, then with probability at least $1 - \eta$ the three following inequalities hold:

\begin{align}
\| \matop A_{\insamp, \projope}^{\adj} \outsamp - \matop T_{\insamp, \projope} h_{\HKx} \|_{\HKx} & \leq \delta_1(\nsamp, \nicefrac \eta 3)\\
\|\matop T_{\insamp, \projope} - \matop T_{\projope} \|_{\hsops(\HKx)} & \leq  \delta_2(\nsamp, \ndict, \nicefrac \eta 3) \\
\| \matop A_{\insamp, \obsfun{\projope}}^{\adj} \obsfun{\mathbf y} - \matop A_{\insamp, \projope}^{\adj} \mathbf y \|_{\HKx} & \leq \delta_3(\nsamp, \nobsf, \ndict, \nicefrac \eta 3),
\end{align}

with $\delta_1$, $\delta_2$ and $\delta_3$ respectively defined as in Equations \eqref{eq :: def delta1}, \eqref{eq :: def delta2} and \eqref{eq :: def delta3}. 
\end{lemma}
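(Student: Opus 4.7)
The plan is straightforward: this lemma is a direct union-bound consequence of the three preceding concentration results. Specifically, Lemma \ref{lemma :: concentration result 1} bounds $\|\matop A_{\insamp, \projope}^\adj \outsamp - \matop T_{\insamp, \projope} h_{\HKx}\|_{\HKx}$ by $\delta_1(\nsamp, \eta)$ with probability at least $1-\eta$, Lemma \ref{lemma :: concentration result 2} bounds $\|\matop T_{\insamp, \projope} - \matop T_\projope\|_{\hsops(\HKx)}$ by $\delta_2(\nsamp, \ndict, \eta)$ with probability at least $1-\eta$, and Lemma \ref{lemma :: concentration result 3} bounds $\|\matop A_{\insamp, \obsfun{\projope}}^\adj \obsfun{\mathbf y} - \matop A_{\insamp, \projope}^\adj \mathbf y\|_{\HKx}$ by $\delta_3(\nsamp, \nobsf, \ndict, \eta)$ with probability at least $1-\eta$.

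I would apply each of these lemmas with confidence parameter $\eta/3$ in place of $\eta$. Let $E_1$, $E_2$, $E_3$ denote the three corresponding ``bad'' events (the complements of the inequalities being satisfied). By construction, for each $k \in \{1,2,3\}$, $\bb P[E_k] \leq \eta/3$. A standard union bound then yields
\begin{equation*}
\bb P[E_1 \cup E_2 \cup E_3] \leq \bb P[E_1] + \bb P[E_2] + \bb P[E_3] \leq \eta.
\end{equation*}
Taking complements, the intersection of the three good events holds with probability at least $1 - \eta$, which is exactly the claim.

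There is no real obstacle here: the lemma is a bookkeeping step that packages the three ingredients needed for the proof of Proposition \ref{prop :: excess risk bound partial :: supp}. The only thing to be mindful of is that the three events are not assumed independent, so one cannot take a product bound, but the union bound requires no independence assumption. The choice of splitting the budget uniformly (as $\eta/3$ each) is the simplest; a non-uniform allocation would yield a similar statement with different constants but is unnecessary for the final proposition, since the subsequent analysis only needs the three bounds to hold simultaneously with probability at least $1-\eta$.
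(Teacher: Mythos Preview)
Your proposal is correct and matches the paper's own proof exactly: the lemma is stated there as ``a union bound using Lemma \ref{lemma :: concentration result 1}, Lemma \ref{lemma :: concentration result 2} and Lemma \ref{lemma :: concentration result 3}.'' Your explicit unpacking of the argument (applying each lemma with budget $\eta/3$ and bounding $\bb P[E_1 \cup E_2 \cup E_3]$) is precisely what is intended.
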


\begin{proof}
This Lemma is an union bound using Lemma \ref{lemma :: concentration result 1}, Lemma \ref{lemma :: concentration result 2} and Lemma \ref{lemma :: concentration result 3}.
\end{proof}

\subsubsection{Proof}

We are now ready to prove Proposition \ref{prop :: excess risk bound partial :: supp}. To do so we prove the following intermediate result of which Proposition \ref{prop :: excess risk bound partial :: supp} is a direct consequence.

\begin{proposition} \label{prop :: intermediate bound partial}
Let $0 < \eta < 1$, provided $\lambda$ is taken such that 
\begin{equation} \label{eq :: condition on lambda 2}
\lambda \geq 6 \kappa C_{\phi}^2 \frac {\log \left (\nicefrac 6 \eta \right )\sqrt \ndict}{\sqrt \nsamp} = \delta_2(\nsamp, \ndict, \nicefrac \eta 3),
\end{equation}
we have with probability at least $1 - \eta$ that
\begin{equation} \label{eq:intermediate bound sparse:supp}
\cali R(\projope \circ \regridgepartial) - \cali R(\projope \circ h_{\HKx}) \leq \frac{27} 4 \left ( \left ( \frac {A_0(\ndict)^2}{\lambda \nobsf^2} + \frac {2 A_0(\ndict) A_1(\ndict)}{\lambda \sqrt{\nsamp} \nobsf^{\nicefrac 32}} + \frac{A_1(\ndict)^2}{\lambda \nsamp \nobsf} + \frac{A_2^2}{\lambda \nsamp} \right ) \log \left (\nicefrac 6 \eta \right )^2 + \lambda R^2 \right ),
\end{equation}
with
\begin{align*}
 A_0(\ndict)& : = 4(L \sqrt \kappa \sqrt \ndict M(\ndict) + \sqrt \kappa C_\dict R) \\
 A_1(\ndict)& :=2 L \sqrt \kappa \sqrt \ndict M(\ndict) \\
 A_2 & := 6 (\sqrt \kappa C_\dict L + \kappa C_\dict^2 R).
 \end{align*}
\end{proposition}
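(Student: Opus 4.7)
The plan is to mirror the structure of the proof of Proposition \ref{prop :: excess risk bound :: supp}: introduce the same auxiliary element $h^\lambda := (\matop T_{\insamp,\projope} + \lambda \matop I)^{-1} \matop T_{\insamp,\projope} h_{\HKx}$ and decompose the excess risk via Lemma \ref{lemma :: key rewriting of the risk} as
\begin{equation*}
\exprisk(\projope \circ \regridgepartial) - \exprisk(\projope \circ h_{\HKx}) \leq 2 \|\sqrt{\matop T_\projope}(\regridgepartial - h^\lambda)\|_{\HKx}^2 + 2 \|\sqrt{\matop T_\projope}(h^\lambda - h_{\HKx})\|_{\HKx}^2.
\end{equation*}
The second summand does not involve the partial observations, so the estimate $\|\sqrt{\matop T_\projope}(h^\lambda - h_{\HKx})\|_{\HKx} \leq \tfrac{3R}{2}\sqrt\lambda$ obtained in Equation \eqref{eq :: majoration operator norm 2nd term final} carries over verbatim as soon as $\lambda \geq \delta_2(\nsamp, \ndict, \nicefrac{\eta}{3})$.

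The new work lies in the first summand. I would write $\regridgepartial - h^\lambda = (\matop T_{\insamp,\projope} + \lambda \matop I)^{-1}(\matop A^\adj_{\insamp, \obsfun\projope} \obsfun{\mathbf y} - \matop T_{\insamp,\projope} h_{\HKx})$ and telescope the numerator so as to isolate the plug-in error from the classical one:
\begin{equation*}
\matop A^\adj_{\insamp, \obsfun\projope} \obsfun{\mathbf y} - \matop T_{\insamp,\projope} h_{\HKx} = (\matop A^\adj_{\insamp, \obsfun\projope} \obsfun{\mathbf y} - \matop A^\adj_{\insamp, \projope} \outsamp) + (\matop A^\adj_{\insamp, \projope} \outsamp - \matop T_{\insamp,\projope} h_{\HKx}).
\end{equation*}
Splitting $\sqrt{\matop T_\projope} = \sqrt{\matop T_{\insamp,\projope}} + (\sqrt{\matop T_\projope} - \sqrt{\matop T_{\insamp,\projope}})$ and invoking the spectral bounds $\|\sqrt{\matop T_{\insamp,\projope}} (\matop T_{\insamp,\projope} + \lambda \matop I)^{-1}\|_{\boundedops(\HKx)} \leq 1/(2\sqrt\lambda)$ and $\|(\matop T_{\insamp,\projope} + \lambda \matop I)^{-1}\|_{\boundedops(\HKx)} \leq 1/\lambda$ from the fully observed proof, together with Lemma \ref{lemma :: diff square root norm}, a union bound over the three events of Lemma \ref{lemma :: concentration results partially observed} (and again $\lambda \geq \delta_2$ to absorb the square-root-difference term) yields, with probability at least $1-\eta$,
\begin{equation*}
\|\sqrt{\matop T_\projope}(\regridgepartial - h^\lambda)\|_{\HKx} \leq \frac{3}{2\sqrt\lambda}\bigl(\delta_1(\nsamp, \nicefrac{\eta}{3}) + \delta_3(\nsamp, \nobsf, \ndict, \nicefrac{\eta}{3})\bigr).
\end{equation*}
Squaring, expanding $(\delta_1 + \delta_3)^2$ into the four monomials $A_2^2/\nsamp$, $A_0(\ndict)^2/\nobsf^2$, $2A_0(\ndict)A_1(\ndict)/(\sqrt\nsamp \nobsf^{3/2})$ and $A_1(\ndict)^2/(\nsamp \nobsf)$, and combining with the $\lambda R^2$ contribution of the second summand, produces the intermediate Proposition \ref{prop :: intermediate bound partial}.

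Proposition \ref{prop :: excess risk bound partial :: supp} then follows by saturating the condition on $\lambda$ with $\lambda = \lambda_\nsamp^*(\nicefrac{\eta}{3}) = 6\kappa C_\dict^2 \sqrt\ndict \log(\nicefrac{6}{\eta})/\sqrt\nsamp$ in that intermediate bound. Division by $\lambda \propto \sqrt\ndict/\sqrt\nsamp$ converts every $1/\lambda$ factor into a factor of $\sqrt\nsamp/\sqrt\ndict$ (up to explicit constants), while $\lambda R^2$ absorbs into the fully observed rate $B_4(\ndict)/\sqrt\nsamp$. Plugging in $A_0(\ndict) = 4\sqrt\kappa C_\dict \sqrt\ndict(C(\ndict) + R/\sqrt\ndict)$ and $A_1(\ndict) = 2\sqrt\kappa C_\dict \sqrt\ndict\, C(\ndict)$ (which are algebraic restatements of the definitions, with $C(\ndict) = L M(\ndict)/C_\dict$) matches the four claimed coefficients. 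The main obstacle is not conceptual but purely algebraic bookkeeping: the six scalar terms produced by $(\delta_1 + \delta_3)^2/\lambda$ must be regrouped into the four terms of the statement after the substitution; in particular the identity $B_3(\ndict) = B_2(\ndict) - 18R^2/\sqrt\ndict$ reflects the fact that the cross term $2A_0 A_1$ shares its $\sqrt\ndict$-scaling with only the $C(\ndict)^2$ and $C(\ndict)R$ pieces of $A_0^2$, the pure $R^2/\sqrt\ndict$ piece of $A_0^2$ being kept in the $B_2/\nobsf^2$ term.
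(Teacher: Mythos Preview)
Your overall strategy is right, but the two-term decomposition you chose does not reproduce the bound as stated. The paper uses a \emph{three}-term split
\[
\exprisk(\projope \circ \regridgepartial) - \exprisk(\projope \circ h_{\HKx}) \leq 3\|\sqrt{\matop T_\projope}(\regridgepartial - \regridge)\|_{\HKx}^2 + 3\|\sqrt{\matop T_\projope}(\regridge - h^{\lambda})\|_{\HKx}^2 + 3\|\sqrt{\matop T_\projope}(h^{\lambda} - h_{\HKx})\|_{\HKx}^2,
\]
bounding the three pieces separately by $\tfrac{3}{2\sqrt\lambda}\delta_3$, $\tfrac{3}{2\sqrt\lambda}\delta_1$ and $\tfrac{3R}{2}\sqrt\lambda$. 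This gives $\tfrac{27}{4}\bigl(\delta_3^2/\lambda + \delta_1^2/\lambda + R^2\lambda\bigr)$, and since $\delta_3 = (A_0/\nobsf + A_1/\sqrt{\nsamp\nobsf})\log(\nicefrac{6}{\eta})$ and $\delta_1 = (A_2/\sqrt\nsamp)\log(\nicefrac{6}{\eta})$, expanding $\delta_3^2$ alone produces exactly the three $A_0,A_1$ monomials and $\delta_1^2$ gives the $A_2^2/\nsamp$ term: four terms in total, with the constant $27/4$.

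Your two-term split instead yields $\tfrac{9}{2\lambda}(\delta_1+\delta_3)^2 + \tfrac{9}{2}R^2\lambda$. Expanding $(\delta_1+\delta_3)^2$ gives \emph{six} monomials, not four: you pick up the extra cross terms $2A_2A_0/(\sqrt\nsamp\,\nobsf)$ and $2A_2A_1/(\nsamp\sqrt\nobsf)$, and the leading constant is $9/2$ rather than $27/4$. So your claim that the expansion ``produces the intermediate Proposition \ref{prop :: intermediate bound partial}'' is incorrect as written. The fix is simply to insert $\regridge$ as an intermediate point and split into three pieces before squaring, which keeps $\delta_1$ and $\delta_3$ decoupled.
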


\begin{proof}
Taking $h^{\lambda}$ as in Equation \eqref{eq :: definition h lambda}, we consider the following decomposition of the risk using Equation \eqref{eq :: key rewriting of the risk}

\begin{align}
\cali R(\projope \circ \regridgepartial) - \cali R(\projope \circ h_{\HKx}) & = \| \sqrt{T_{\projope}}(\regridgepartial - h_{\HKx}) \|_{\HKx}^2 \nonumber \\
& \leq 3 \| \sqrt{T_{\projope}} (\regridgepartial - \regridge )\|_{\HKx}^2 + 3 \| \sqrt{T_{\projope}}(\regridge - h^{\lambda}) \|_{\HKx}^2 + 3 \| \sqrt{T_{\projope}}(h^{\lambda} - h_{\HKx}) \|_{\HKx}^2. \label{eq:risk decomposition sparse:supp}
\end{align}

We focus on the term on the left as we have already controlled the two others in the proof of Lemma \ref{prop :: intermediate bound with lambda} . Using the same strategy as for proving Equation \eqref{eq :: majoration operator norm 1st term}, we get that 

\begin{equation} \label{eq:maj sparse term:supp}
\| \sqrt{T_{\projope}}(\regridgepartial - \regridge ) \|_{\HKx} \leq  \| \matop A_{\insamp, \obsfun{\projope}}^{\adj} \obsfun{\mathbf y} - \matop A_{\insamp, \projope}^{\adj} \mathbf y \|_{\HKx} \left ( \frac 1 {2 \sqrt \lambda} + \frac {\sqrt{\| \matop T_{\projope} - \matop T_{\insamp, \projope} \|}_{\cali L(\HKx)}}{\lambda} \right ).
\end{equation}

Combining Equations \eqref{eq :: majoration operator norm 1st term} , \eqref{eq :: majoration operator norm 2nd term} and \eqref{eq:maj sparse term:supp} with Lemma \ref{lemma :: concentration results partially observed}, for $0 < \eta < 1$, the three following inequalities are verified with probability at least $1 - \eta$

\begin{align*}
\| \sqrt{T_{\projope}}(\regridgepartial - \regridge ) \|_{\HKx} & \leq \delta_3(\nsamp, \nobsf, \ndict, \nicefrac \eta 3) \left ( \frac 1 {2 \sqrt \lambda} + \frac{\sqrt{\delta_2(\nsamp, \ndict, \nicefrac \eta 3)}}{\lambda} \right ) \\
\| \sqrt{\matop T_{\projope}}(\regridge - h^{\lambda}) \|_{\HKx} & \leq \delta_1(\nsamp, \nicefrac \eta 3) \left ( \frac 1 {2 \sqrt \lambda} + \frac{\sqrt{\delta_2(\nsamp, \ndict, \nicefrac \eta 3)}}{\lambda} \right ) \\
\| \sqrt{\matop T_{\projope}}(h_{\HKx} - h^{\lambda}) \|_{\HKx} & \leq R \sqrt{\delta_2(\nsamp, \ndict, \nicefrac \eta 3)} + \frac R 2 \sqrt \lambda .
\end{align*}

Using the condition on $\lambda$ given by Equation \eqref{eq :: condition on lambda 2}, still with probability at least $1 - \eta$, we have

\begin{align}
\| \sqrt{T_{\projope}}( \regridgepartial - \regridge) \|_{\HKx} & \leq  \frac{3}{2 \sqrt \lambda} \delta_3(\nsamp, \nobsf, \ndict, \nicefrac \eta 3) \label{eqline:majoration sparse:supp}\\
\| \sqrt{\matop T_{\projope}}(\regridge - h^{\lambda}) \|_{\HKx} & \leq \frac{3}{2 \sqrt \lambda} \delta_1(\nsamp, \nicefrac \eta 3) \label{eqline:majoration esti:supp} \\
\| \sqrt{\matop T_{\projope}}(h_{\HKx} - h^{\lambda}) \|_{\HKx} & \leq \frac{3R}{2} \sqrt \lambda. \label{eqline:majoration approx:supp}
\end{align}

Combining Equation \eqref{eqline:majoration sparse:supp}, \eqref{eqline:majoration esti:supp} and \eqref{eqline:majoration approx:supp} into Equation \eqref{eq:risk decomposition sparse:supp} yields that with probability at least $1 - \eta$, 

\begin{equation*}
\cali R(\projope \circ \regridgepartial) - \cali R(\projope \circ h_{\HKx}) \leq \frac {27} 4 \left ( \frac{\delta_3(\nsamp, \nobsf, \ndict, \nicefrac \eta 3)^2}{\lambda} + \frac {\delta_1(\nsamp, \nicefrac \eta 3)^2}{\lambda} + R^2 \lambda \right ) .
\end{equation*}

In Proposition \ref{prop :: intermediate bound partial}, we have a compromise in $\lambda$. Taking $\lambda = \cali O (\sqrt \nsamp)$ yields the best one. So as to satisfy the condition on $\lambda$ (Equation \eqref{eq :: condition on lambda 2}), we take $\lambda = 6 \kappa C_{\phi}^2 \frac {\log \left (\nicefrac 6 \eta \right )\sqrt \ndict}{\sqrt \nsamp}$. After simplifications in the constants we get Proposition \ref{prop :: excess risk bound partial :: supp}.
\end{proof}

\section{ADDITIONAL PL AND KPL RESULTS} \label{sec :: additional}
\subsection{Gradient-based optimization for partially observed functions in the general case} \label{subsec :: gradient projection general}

An interesting property of PL (not only when considering vv-RKHSs as hypothesis class as in Section \ref{sec :: KPL} of the main paper) is that the gradient of the data-fitting term can be estimated straightforwardly from partially observed functions. Let us consider the general PL problem (Problem (\ref{prob :: empirical risk approx}) from the main paper):

\begin{equation} 
\min_{h \in \cali H} \emprisk(\projope \circ h, \prodsamp)  + \lambda \generalregu_{\cali H}(h),
\end{equation}

We recall the definition of a partially observed functional output sample (Equation (\ref{eq :: partial sample}) from the main paper):

\begin{equation*}
 \obsfun{\prodsamp} :=(x_i, (\obslocsvec_\sampiter, \obsfun y_\sampiter) )_{\sampiter=1}^\nsamp,
 \end{equation*} 

Let us now compute the gradient for the data-fitting term considering a parametric hypothesis class of the form $\{ h_{\mathbf{w}}, \mathbf{w} \in \bb R^p \}$; such that for $x \in \inspace $, $\mathbf w \longmapsto h_{\mathbf{w}}$ is differentiable. The gradient is given by  
$$\sum_{\sampiter=1}^\nsamp (\nabla h_{\mathbf w}(x_\sampiter))^\trans \projope^\adj \nabla \geneloss_{y_\sampiter}( \projope h_{\mathbf w}(x_\sampiter)),$$

with $\nabla h_{\mathbf w}(x_\sampiter) \in \bb R^{\ndict \times p}$ the Jacobian of $h_{\mathbf w}(x)$ and $\nabla \geneloss(y_\sampiter, \projope h_{\mathbf w}(x_\sampiter)) \in \Lsqr$ the gradient of the loss $\geneloss$ with respect to its second argument. For integral losses (Equation (\ref{eq :: int loss}) from the main paper), this gradient is $\nabla \geneloss(y_\sampiter, .): v \longmapsto (\theta \longmapsto \groundloss(y_\sampiter(\theta), v(\theta)))$. We can estimate the vectors $\projope^\adj \nabla \geneloss(y_\sampiter, \projope h_{\mathbf w}(x_\sampiter))$ from the partially observed functions $((\obslocsvec_\sampiter, \obsfun{y}_\sampiter))_{\sampiter=1}^\nsamp$:

\begin{equation*}
\frac 1 {\nobsf_\sampiter} \sum_{\obsfiter=1}^{\nobsf_\sampiter} l \left (y_\sampiter(\obslocs_{\sampiter \obsfiter}), \dict(\obslocs_{\sampiter \obsfiter})^\trans h_{\mathbf{w}}(x_\sampiter) \right ) \dict (\obslocs_{\sampiter \obsfiter}), 
\end{equation*}

Then replacing $h_{\mathbf w}$ by the regressor corresponding to the vv-RKHS hypothesis class with separable kernel: $x \longmapsto \matop B k(x)$, we obtain Equation (\ref{eq :: estimated gradient KPL}) from the main paper.

Using those estimated gradient is unsurprisingly equivalent to minimizing the problem based on a formulation of an empirical risk using the partially observed functional output sample $\obsfun{\prodsamp}$:

\begin{equation} \label{prob :: empirical risk partial}
\min_{\mathbf w \in \bb R^p} \frac 1 \nsamp \sum_{\sampiter=1}^\nsamp \frac 1 {\nobsf_\sampiter} \sum_{\obsfiter=1}^{\nobsf_\sampiter} \groundloss \left (y_\sampiter(\obslocs_{\sampiter \obsfiter}), \dict(\obslocs_{\sampiter \obsfiter})^\trans h_{\mathbf{w}}(x_\sampiter) \right ).
\end{equation}

\subsection{Plug-in ridge estimator and iterative optimization solution for the square loss.} \label{subsec :: comparison iterative plug-in}

For $\sampiter \in [\nsamp]$, we recall the definition of $\obsfun{\projope}_\sampiter \in \bb R^{\nobsf_\sampiter \times \ndict}$ the discrete approximation of $\projope$ using the locations $\obslocsvec_\sampiter $:

\begin{equation*}
\obsfun{\projope}_\sampiter : = (\dict_1(\obslocsvec_\sampiter),.., \dict_\ndict(\obslocsvec_\sampiter)),
\end{equation*}

Then in the case of the square loss, Problem (\ref{prob :: empirical risk dict vvrkhs}) from the main paper can be rewritten as 

\begin{equation}
\min_{h \in \HKx} \frac 1 \nsamp \sum_{\sampiter=1}^\nsamp \bigg \| \frac{\obsfun{y}_\sampiter}{\sqrt{\nobsf_\sampiter}} - \frac{\obsfun{\projope}_\sampiter}{\sqrt{\nobsf_\sampiter}} h(x_\sampiter) \bigg \|_{\bb R^{\nobsf_\sampiter}}^2 + \lambda \|h\|_{\HKx}^2
\end{equation} 

Let us define $\obsfun{\projope} \in \cali L \left (\bb R^{\ndict \nsamp}, \bb R^{\overline \nobsf} \right ) $ as 
$ \obsfun{\projope} : (u_\sampiter)_{\sampiter=1}^\nsamp \longmapsto \mattovec \left( \left (\frac{\obsfun{\projope}_\sampiter}{\sqrt{\nobsf_\sampiter}} u_\sampiter \right )_{\sampiter=1}^\nsamp \right )$ where we have set $\overline \nobsf := \sum_{\sampiter=1}^\nsamp \nobsf_\sampiter$. 

Then using Proposition \ref{prop :: representer primal} from the main paper, we can rewrite Problem \eqref{prob :: empirical risk partial} as 

\begin{equation*}
\min_{\alpha \in \bb R^{\ndict \times \nsamp}} \frac 1 \nsamp \| \mattovec (\obsfun{\outsamp}) - \obsfun{\projope} \matrb \Kx \mattovec(\alpha) \|_{\bb R^{\overline \nobsf}}^2 + \lambda \langle \mattovec(\alpha), \matrb \Kx \mattovec(\alpha) \rangle_{\bb R^{\ndict \nsamp}}.
\end{equation*}

Carrying the same steps as in the proof of Proposition \ref{prop :: closed form ridge supp} yields that $\alpha^*$ is such that 

\begin{equation}
\mattovec(\alpha^*) = ( (\obsfun{\projope}^\adj \obsfun{\projope}) \matrb \Kx + \nsamp \lambda \matrb I )^{-1}\obsfun{\projope}^\adj \mattovec(\obsfun{\outsamp}).
\end{equation}

We remark that $\obsfun{\projope}^\adj \mattovec(\obsfun{\outsamp}) \in \bb R^{\ndict \nsamp}$ corresponds to the estimations of the scalar products that we use in the plug-in ridge estimator. Using the same notations as in  Definition \ref{def :: plug-in ridge estimator} from the main paper, we have $\obsfun{\projope}^\adj \mattovec(\obsfun{\outsamp}) = \mattovec(\obsfun{\nu})$. Then the only difference with the plug-in ridge estimator is that the matrix $(\projope^\adj \projope)_{(\nsamp)}$ is replaced by the matrix $(\obsfun{\projope}^\adj \obsfun{\projope})$ which is block-diagonal with the matrices $\left( \frac 1 {\nobsf_{\sampiter}}\obsfun{\projope}_\sampiter^\adj \obsfun{\projope}_\sampiter \right )_{\sampiter=1}^\nsamp$ as diagonal blocks. In other words, instead of using the true Gram matrix of the dictionary $\projope ^\adj \projope$ for all the observations, we use for the $\sampiter$-th observation an estimated Gram matrix using the locations of observation of the output function $y_\sampiter$.

\section{RELATED WORKS} \label{sec :: related}
We give more details on the methods presented briefly in Section \ref{subsec :: related main} from the main paper. Two of them  \citep{BarathAl17, PoczosAl15} are specific to functional input data. While we propose a straightforward extension of the latter for non-functional input data, such extension is not possible for the former. 

\subsection{Functional kernel ridge regression (FKRR)} \label{subsec :: related fkrr}

\citet{KadriAl10, KadriAl16} solve a functional KRR problem in the framework of function-valued-RKHSs (fv-RKHSs). To that end, they pose the following empirical risk minimization problem:

$$\min_{\regfunc \in \cali H_{\Kxkadri}} \frac 1 \nsamp \sum_{\sampiter=1}^\nsamp \| y_\sampiter - \regfunc(x_\sampiter) \|^2_{\cali Y} + \lambda \|\regfunc\|^2_{\cali H_{\Kxkadri}}, $$

with $\cali H_{\Kxkadri}$ the fv-RKHS associated to some OVK $\Kxkadri: \inspace \times \inspace \longrightarrow \boundedops(\cali Y)$, and $\cali Y$ a Hilbert space. 

Through a representer theorem, the problem can be reformulated using $\nsamp$ variables in $\cali Y$. The optimal representer coefficients can be found by solving the infinite dimensional system:
$$(\Kxmatkadri + \lambda \matrb I)\alpha^{\rmc{fun}} = \outsamp, $$
with $ \alpha^{\rmc{fun}} \in \cali Y^\nsamp$, $(\Kxmatkadri + \lambda \matrb I)^{-1} \in \boundedops(\cali Y)^{\nsamp \times \nsamp}$ and $\outsamp \in \cali Y^\nsamp$. 

We now focus on the case of the separable kernel $\Kxkadri(x, x')=k^{\text{in}}(x, x') \matop L$. $k^{\text{in}}$ is a scalar-valued kernel and $\matop L \in \boundedops(\cali Y)$ is an integral operator characterized by a scalar-valued kernel $k^{\text{out}}$ on $\outfuncdom^2$ and a measure on $\outfuncdom$.

As an example of such kernel, in the experiments we take $k^{\text{in}}$ a scalar Gaussian kernel, $k^{\text{out}}$ a Laplace kernel and use the Lebesgue measure on $\outfuncdom = [0, 1]$ to define the operator $\matop L$: 

\begin{equation}\label{eq :: kernel out fkrr}
\rmc L y: \theta' \longmapsto \int_{\theta \in \Theta} \exp \left (-\frac {|\theta' - \theta |}{\sigma_{k^{\text{out}}}} \right ) \mathrm d \theta.
\end{equation}

For such separable kernel, the Kronecker product structure $( \Kxmatkadri + \lambda \matrb I) = (\Kxinmatkadri \otimes \matop L + \lambda \matrb I)$ can greatly improve the computational complexity; two approaches are possible.
\begin{enumerate}
\item An eigendecomposition can be performed. If such decomposition of $\matop L$ is known in closed-form, the Kronecker product can be exploited to solve the system in $\cali O(\nsamp^3 + \nsamp^2 J \nobsf)$ time, with $J$ the number of eigenfunctions considered and $\nobsf$ the size of the discrete grid used to approximate functions in $\cali Y$. Unfortunately, such closed-forms are rarely known \citep[Section~4.3]{RasmussenWilliams06}. We know that one exists if $k^{\text{out}}(\theta_0, \theta_1) = \exp(-|\theta_0 - \theta_1 |)$, $\outfuncdom=[0, 1]$ and $\mu$ is the Lebesgue measure \citep{Hawkins89}, or if $k^{\text{out}}$ is a Gaussian kernel, $\outfuncdom = \bb R^q$ and $\mu$ is a Gaussian measure \citep{ZhuAl97}. Otherwise, an approximate eigendecomposition can be performed which adds a $\cali O(\nobsf^3)$ term to the above time complexity. 
\item The problem can be discretized on a regular grid \citep{KadriAl10} and solved in $\cali O(\nsamp^3 + \nobsf^3 + \nsamp^2 \nobsf + \nsamp \nobsf^2)$ time using a Sylvester solver or in $\cali O(\nsamp^3 + t^3)$ time using an eigen decomposition (with higher constants).  To compare the above time complexities to that of KPL, we highlight that typically $\nobsf \gg \ndict$ and $t$ is at least of the same order as $\nsamp$. 
\end{enumerate}

We compare both approaches numerically in Section \ref{subsubsec :: comparison solvers fkrr}.

\subsection{Triple basis estimator (3BE)}

\citet{PoczosAl15} firstly represent separately the input and output functions on truncated orthonormal bases obtaining a set of input and output decomposition coefficients: $(\beta^{\rmc{in}}, \beta^{\rmc{out}})$ with $\beta^{\rmc{in}} \in \bb R^{\nsamp \times c}$ and $\beta^{\rmc{out}} \in \bb R^{\nsamp \times \ndict}$; $c \in \bb N^*$ being the cardinality of the input basis and $\ndict \in \bb N^*$ that of the output basis. Then, each set of output coefficient ($\beta^{\rmc{out}}_\dictiter$ for $\dictiter \in [\ndict]$) is regressed on the input coefficients $\beta^{\rmc{in}}$ using KRRs approximated with RFFs \citep{RahimiRecht08}. Denoting by $\matop R(\beta^{\rmc{in}}) \in \bb R^{\nsamp \times J}$ the matrix of RFFs evaluated on the input coefficients $\beta^{\rmc{in}}$, for all $\dictiter \in [\ndict]$, the following (scalar-valued) sub-problem is solved:
$$ \min_{c_{\dictiter} \in \bb R^J} \|\beta^{\rmc{out}}_\dictiter - \matop R(\beta^{\rmc{in}}) c \|_{\bb R^\nsamp}^2 + \lambda \| c_\dictiter \|_{\bb R^J}^2.$$

All those sub-problems require the inversion of the same matrix $(\matop R(\beta^{\rmc{in}})^\trans \matop R(\beta^{\rmc{in}}) + \lambda \matop I)$, which can thus be carried out only once. Putting aside the computations of the decomposition coefficients, solving 3BE then has time complexity $\cali O(J^3 + J^2 \ndict)$.

Nevertheless, 3BE as proposed in \citep{PoczosAl15} is specific to function-to-function regression. As a consequence, when the input data are not functional (as in Section \ref{subsec :: speech} from the main paper), we propose to directly deal with them through a kernel; we call this extension \textbf{one basis estimator (1BE)}. We highlight that 1BE is in fact a particular case of the KPL plug-in ridge estimator with $\dict$ orthonormal and $\Kx = k \matop I$. In that case, the time complexity is $\cali O(\nsamp^3 + \nsamp^2 \ndict)$ (we solve solve $\ndict$ scalar-valued KRRs problems sharing the same kernel matrix and the same regularization parameter).

\subsection{Kernel additive model (KAM)}

In this section only, we consider that the input data consist of functions and that $[0, 1]$ is the domain of both input and output functions. In the function-to-function additive linear model \citep{RamsaySilverman05}, the following empirical risk is minimized: 
\begin{equation} \label{eq :: objective functional additive}
\sum_{\sampiter=1}^\nsamp \int_0^1 \left (y_\sampiter(\theta) - a(\theta) - \int_0^1 b(\zeta, \theta) x_\sampiter(\zeta) ~ \mathrm d \zeta \right)^2 ~ \mathrm d \theta.
\end{equation}

The functions $a: [0, 1] \longrightarrow \bb R$ and $b: [0, 1] \times [0, 1] \longrightarrow \bb R$ are the functions we want to learn. To define an hypothesis class for them, two truncated bases of $\rmc L^2([0, 1])$ are chosen, one for the input space $(e_{\dictiter}^{\rmc{in}})_{\dictiter=1}^{c}$ and one for the output space $(e_{\dictiter}^{\rmc{out}})_{\dictiter=1}^{\ndict}$. With the convention that for $\zeta \in [0, 1]$ and $\theta \in [0, 1]$, $e^{\rmc{in}}(\zeta) = (e^{\rmc{in}}_{\dictiter}(\zeta))_{\dictiter=1}^{c}$ and $e^{\rmc{out}}(\theta) = (e_{\dictiter}^{\rmc{out}}(\theta))_{\dictiter=1}^{\ndict}$, the functions $a$ and $b$ are specified as
\begin{align*}
a(\theta) & = \matop A e^{\rmc{out}}(\theta) \\ b(\zeta, \theta) & = (e^{\rmc{in}}(\zeta))^\trans \matop B e^{\rmc{out}}(\theta).
\end{align*}
Then, we use those expressions for $a$ and $b$ and minimize the objective from Equation \eqref{eq :: objective functional additive} in the variables $\matop A \in \bb R^{1 \times \ndict}$ and $\matop B \in \bb R^{c \times \ndict}$. Importantly, there is not explicit regularization penalty in the problem, however some regularization is achieved implicitly through the choice of the size of the bases $c$ and $\ndict$. 

\citet{BarathAl17} build on this model using RKHSs. The following empirical risk minimization problem is considered 
$$ \min_{h \in \cali H_{\Kxbarath}}\sum_{\sampiter=1}^\nsamp \int_0^1 \left (y_\sampiter(\theta) - \int_0^1 h(\zeta, \theta, x_\sampiter(\zeta)) \mathrm d \zeta \right )^2 \mathrm d \theta + \lambda \|h \|^2_{\cali H_{\Kxbarath}}, $$ 

where $\cali H_{\Kxbarath}$ is the RKHS of a scalar-valued kernel $\Kxbarath: ([0, 1] \times [0, 1] \times \bb R)^2 \longrightarrow \bb R$ and $\lambda > 0$. A representer theorem leads to a closed-from solution. To alleviate the computations, a truncated basis of $J < \nsamp $ of empirical functional principal components of $(y_\sampiter)_{\sampiter=1}^\nsamp$ is used. A matrix of size $\nsamp J \times \nsamp J$ must then be inverted yielding a time complexity of $\cali O(\nsamp^3 J^3)$. However, if $\Kxbarath$ is chosen as a product of three kernels, the separability property can be exploited to solve the problem in $\cali O(\nsamp^3 + J^3 + \nsamp^2 J + \nsamp J^2)$ time using a Sylvester Solver. Note that this possibility to exploit the Kronecker structure of the matrix $A$---page 6 of \citep{BarathAl17}---is not highlighted nor exploited by the authors. However the main bottleneck of the method is the computation of this matrix $A$ in itself; even when exploiting the product of kernels, $\nsamp^2 + J^2$ double integrals must be computed yielding a time complexity of $\cali O(\nsamp^2 t^2 + J^2 \nobsf^2)$ with $t$ the size of the input discretization grid and $\nobsf$ that of the output one. Even for medium $\nsamp$,  $t$ and $\nobsf$ this becomes a challenge, especially as this matrix must be computed many times so as to tune the multiple kernel parameters. 

As an example of a product of kernels used for KAM, in the experiments on the toy dataset and on the DTI dataset, we use a product of three Gaussian kernels: 
\begin{equation} \label{eq :: kernel kam}
\Kxbarath: ((\zeta, \theta, s), (\zeta', \theta', s)) \longmapsto \exp \left ( \frac {-(\zeta - \zeta')^2}{\sigma_1^2} \right ) \exp \left ( \frac {-(\theta - \theta')^2}{\sigma_2^2} \right ) \exp \left ( \frac {-(s - s')^2}{\sigma_3^2} \right ).
\end{equation}

\citet{BarathAl17} present the model for one functional covariate. However, it is straightforward to extend it to the case where there are several ones. Equivalently, consider the input functions are vector-valued with values in $\bb R^o$. Then we can consider a kernel defined on the adapted domain $\Kxbarath: ([0, 1] \times [0, 1] \times \bb R^o)^2 \longrightarrow \bb R$ and no further adaptations are required. 

\subsection{Kernel Estimator (KE)} 

Finally, the functional Nadaraya-Watson kernel estimator has been studied in \citet{Ferraty2011} in the general setting of Banach spaces. Considering a kernel function $K: \bb R \longmapsto \bb R$ combined with a given semi-metric $S$ on $\inspace$, for all $x \in \inspace$, they use the following estimator: 
$$\frac{\sum_{\sampiter=1}^\nsamp K \circ S(x, x_\sampiter) y_\sampiter} {\sum_{\sampiter=1}^\nsamp K \circ S(x, x_\sampiter)}.$$
This method is very fast as fitting it boils down to memorizing the training data, however it can lack precision.

\section{EXPERIMENTAL DETAILS AND SUPPLEMENTS} \label{sec :: experiments :: supp}
In this Section we give more insights into the numerical experiments. We introduce a toy function-to-function data to test several robustness properties of our method while two real worlds datasets have been gathered from different publications about functional regression. This collection of dataset could be used in the future for benchmarking. 

To avoid mentioning it repeatedly, we highlight that when performing cross-validation, we use $5$ folds in all the experiments; and when several values are given for a same parameters, all configurations generated by combining the described parameters/dictionaries are included in the cross-validation.

\subsection{Parametrized logcosh loss}

We consider the following logcosh loss in 1d:
$$ a \in \bb R \longmapsto \frac 1 \gamma \log(\text{cosh}(\gamma a)).$$
It corresponds to the loss $\groundloss_{\rmc{lch}}^{(\gamma)}$ defined in Section \ref{subsec :: preliminary elements} from the main paper. We illustrate the effect of the parameter $\gamma$ in Figure \ref{fig :: logcosh 1d}.

As we cannot plot the integral version of this loss, we consider the loss defined on $\bb R^2$ as follows:
$$ (a_0, a_1) \longmapsto \frac 1 \gamma \left ( \log(\text{cosh}(\gamma a_0)) + \log(\text{cosh}(\gamma a_1)) \right ).$$
We plot this loss for $\gamma=5$ in Figure \ref{fig :: logcosh 2d}. 
\begin{figure}
    \begin{center}
    \begin{minipage}{0.25\textwidth}
        \centering
		\includegraphics[width=0.8\linewidth]{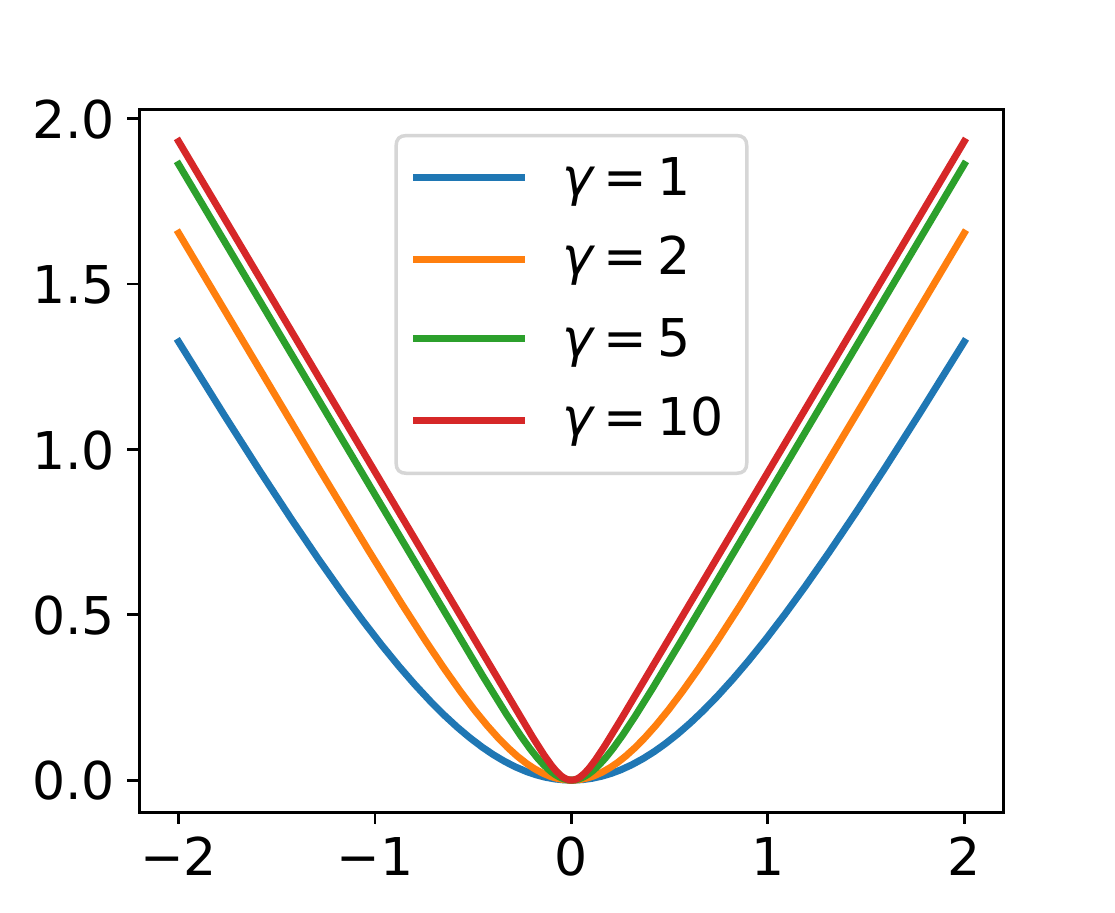}
		\caption{Logcosh loss on $\bb R$.}
		\label{fig :: logcosh 1d}
    \end{minipage}\hfill
    \begin{minipage}{0.35\textwidth}
        \centering
        \includegraphics[width=\linewidth]{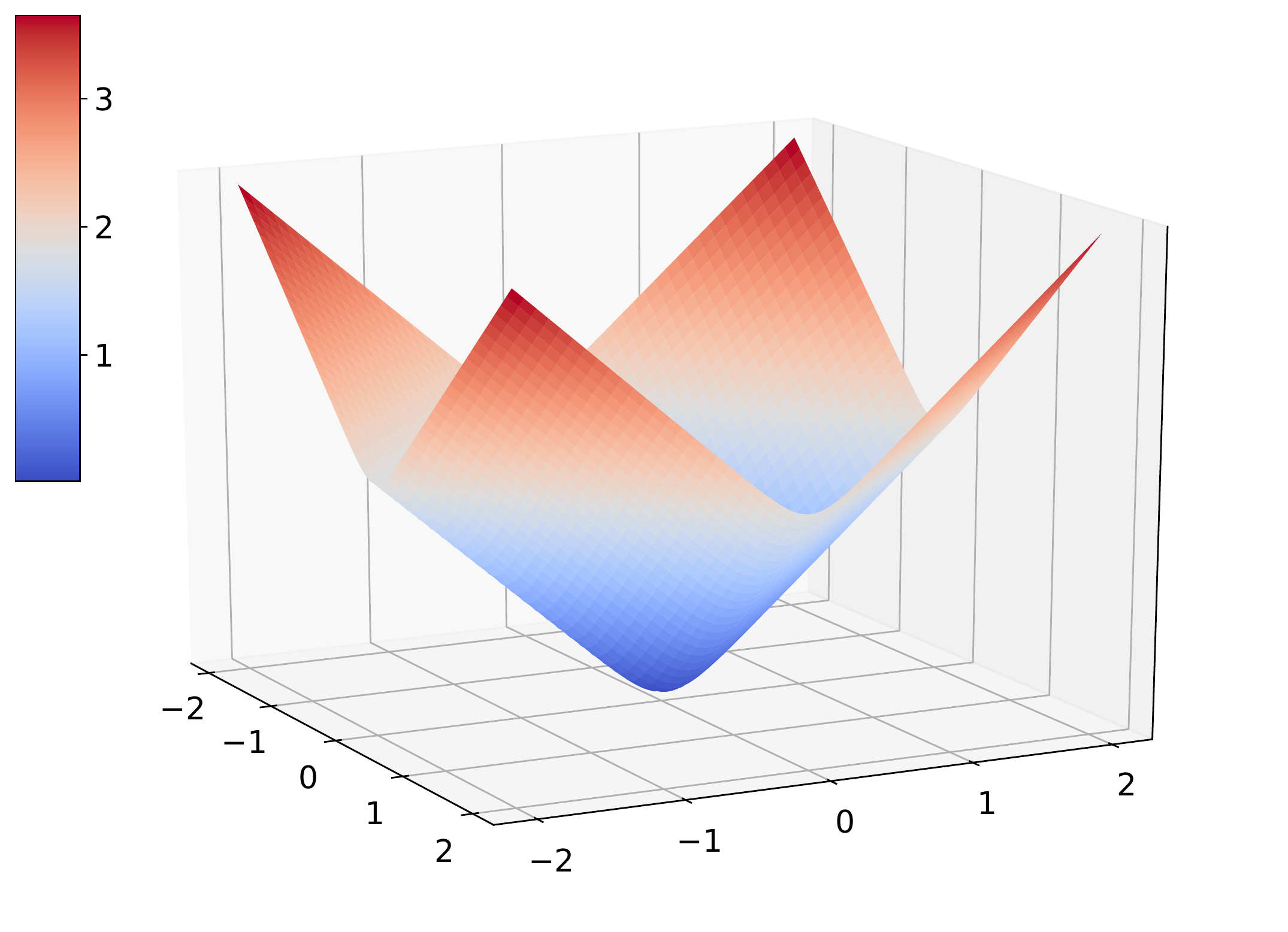}
	\caption{Logcosh loss on $\bb R^2$ ($\gamma=5$).}
	\label{fig :: logcosh 2d}
    \end{minipage}\hfill
    \begin{minipage}{0.15\textwidth}
    \includegraphics[width=\linewidth]{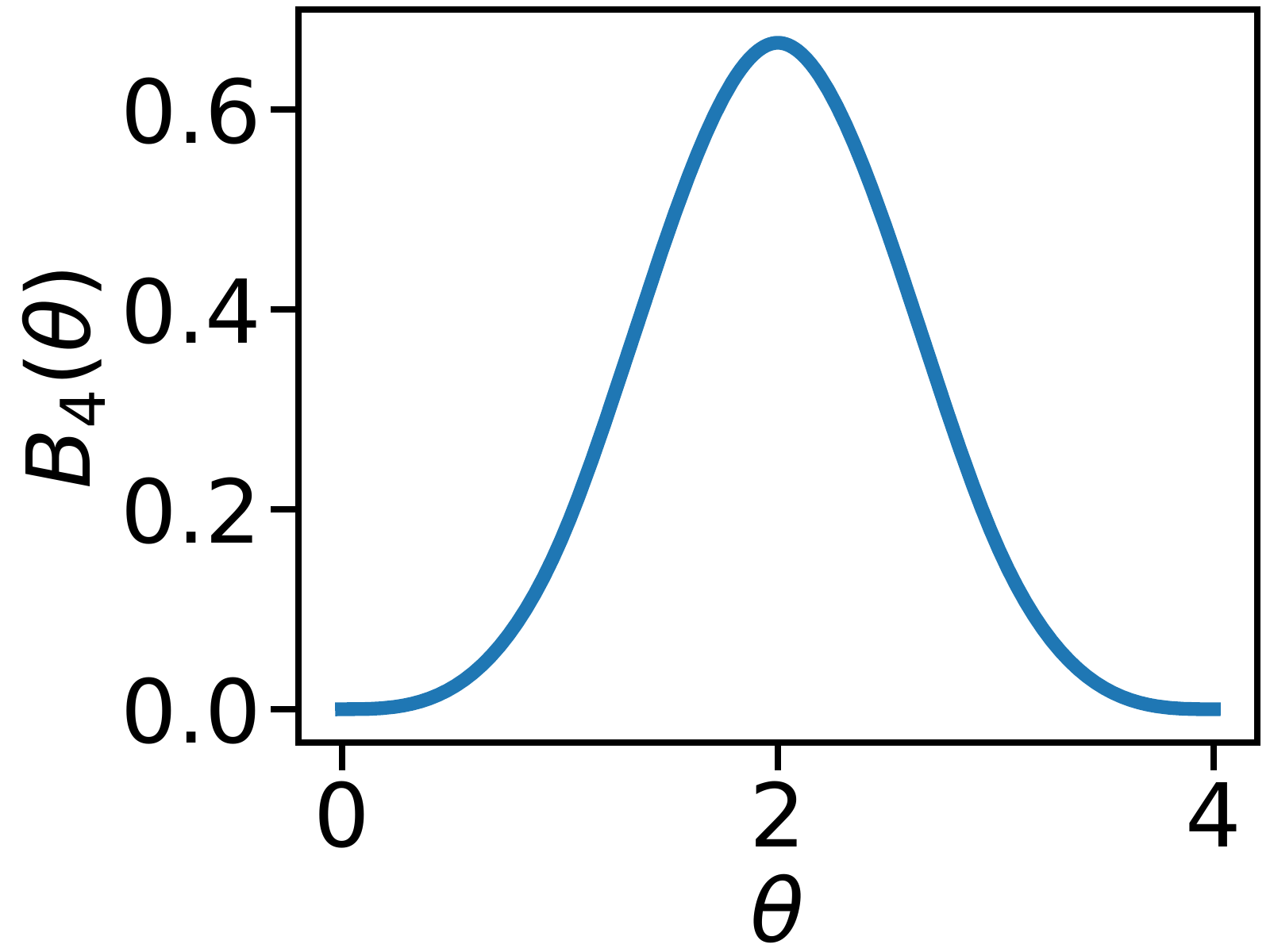}
\caption{Cubic B-spline.}
\label{fig :: cubic spline}
    \end{minipage}\hfill
    \begin{minipage}{0.2\textwidth}
    \includegraphics[width=\linewidth]{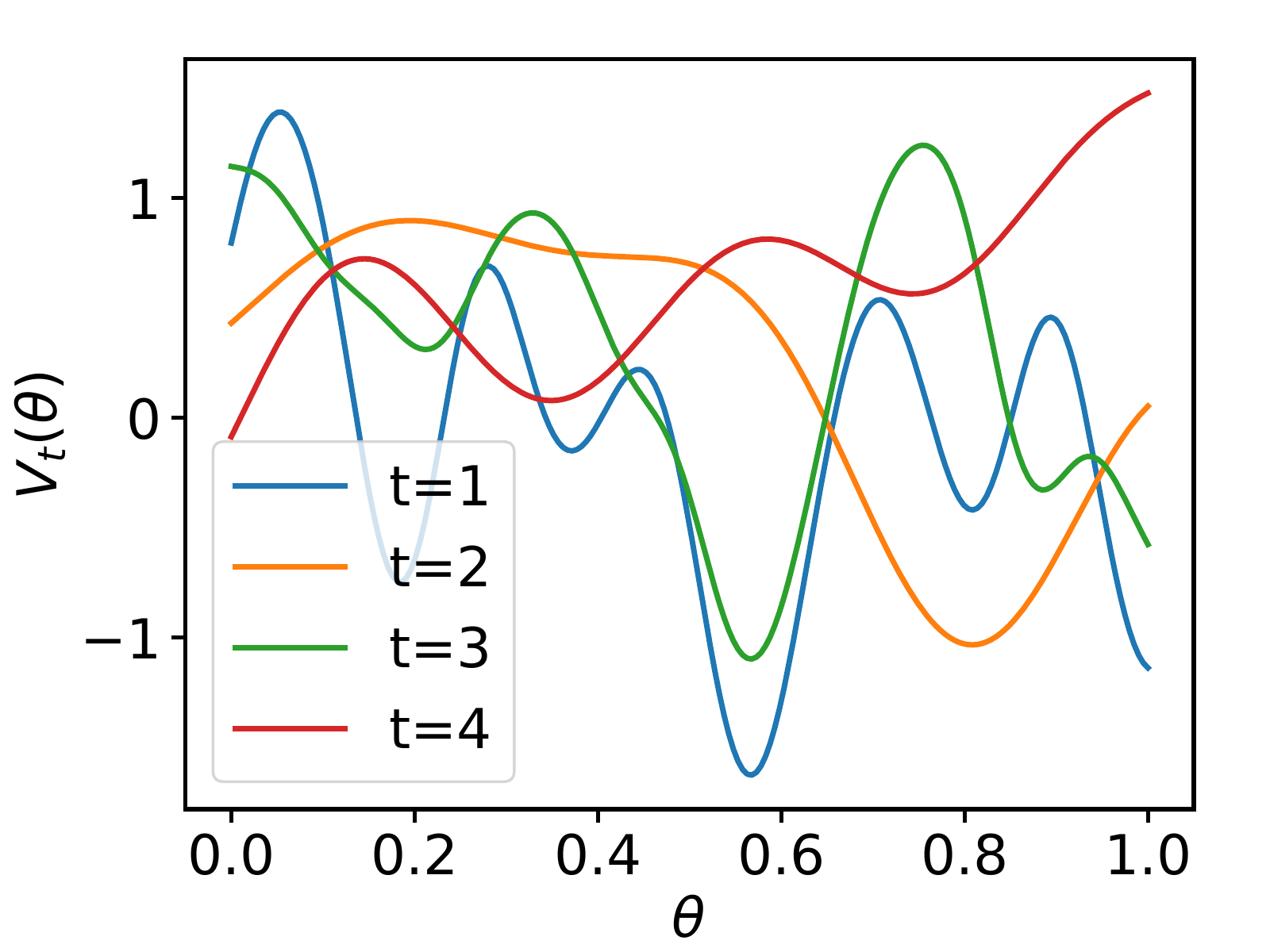}
\caption{GP draws.}
\label{fig :: gp draws}
    \end{minipage}
    \end{center}
\end{figure}

\subsection{Toy dataset}
\begin{figure}
    \begin{center}
    \begin{minipage}{0.5\textwidth}
        \centering
		\includegraphics[width=0.98\linewidth]{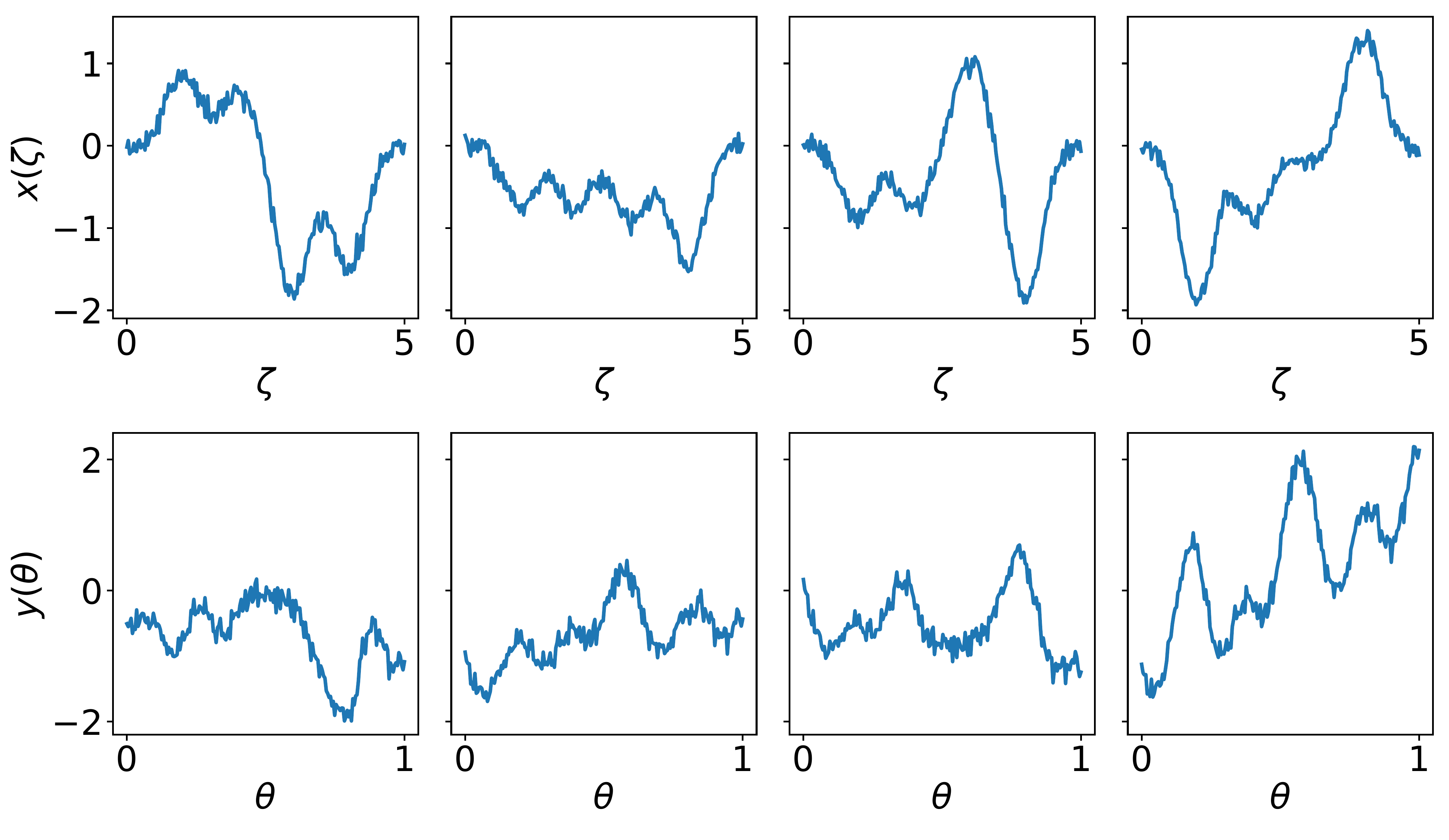}
		\caption{Examples of generated toy data.}
		\label{fig :: example toy data}
    \end{minipage}\hfill
    \begin{minipage}{0.5\textwidth}
        \centering
        \includegraphics[width=0.98\linewidth]{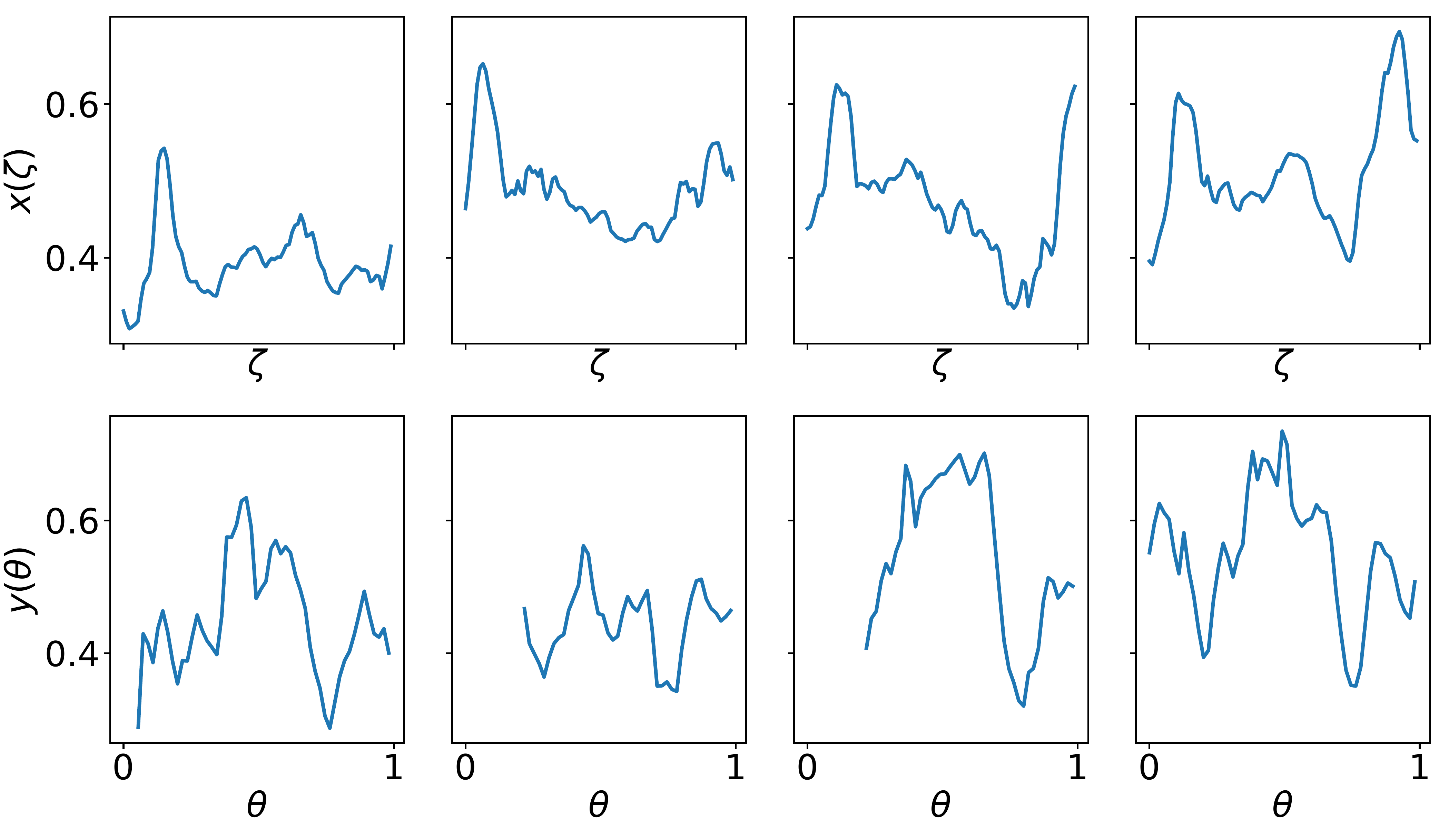}
	\caption{Examples from the DTI dataset.}
	\label{fig :: example dti data}
    \end{minipage}
    \end{center}
\end{figure}

\subsubsection{Generating process}

We consider a functional toy dataset. To generate it, we draw $r \in \bb N$ indepent zero mean Gaussian processes (GP) with Gaussian covariance functions. More precisly, for $t \in [r]$ the Gaussian process $V_t$ has covariance $(\theta_1, \theta_2) \longmapsto \exp \left ( -\frac{(\theta_2 - \theta_1)^2}{b_t^2} \right )$. We then keep those Gaussian processes fixed. In practice in those experiments we take $r=4$ and $b_1=0.1$, $b_2=0.25$, $b_3=0.1$ and $b_4=0.25$. An example of a draw of such GPs is displayed in Figure \ref{fig :: gp draws}. To generate an input/output pair, we draw $r$ coefficients $a \in \bb R^r $ i.i.d according to a uniform distribution $\cali U \left ([-1, 1] \right )$ 
Let $B_4$ denote the cardinal cubic spline \citep{DeBoor01}; it is symmetric around $\zeta=2$ and of width $4$ (see Figure \ref{fig :: cubic spline}). Let then $\bar B_4: \zeta \longmapsto B_4(4\zeta + 2)$ (a centered version of $B_4$ rescaled to have width $1$). We consider the input function $x(\zeta):=\sum_{t=1}^r a_t \bar B_4(\zeta - t) $ with $\zeta \in [0, 5]$. To it we associate the output function $y(\theta) = \sum_{t=1}^r a_t V_t(\theta)$ with $\theta \in [0, 1]$. In practice, we observe $x$ and $y$ on regular grids of size $200$. For the experiments with missing data, we remove sampling points from those grids. Finally we add Gaussian noise on the input observations with standard deviation $\sigma_x=0.07$ in all experiments. Examples of data generated that way with a Gaussian noise with standard deviation $\sigma_y=0.1$ added on the output observations are shown in Figure \ref{fig :: example toy data}. 

\subsubsection{Experimental details}
We compute the means over 10 runs with different train/test split for all experiments. For all the methods, $\lambda$ is taken in a geometric grid of size $20$ ranging from $10^{-9}$ to $10^{-4}$. Moreover, we consider the following specific parameters.
\begin{itemize}
\item \textbf{KPL}. We take a truncated Fourier dictionary including $15$ frequencies and use the separable kernel $\rmc K(x, x'):=k(x, x') \matop I$ with $k$ a scalar-valued Gaussian kernel with standard deviation $\sigma_k = 20$ and $\matop I \in \bb R^{\ndict \times \ndict}$ the identity matrix. When using the logcosh loss, the parameter $\gamma$ is set to $\gamma = 25$ for the in two experiments related to outliers (so as to approach the absolute loss) and to $\gamma = 10$ for the two other experiments.
\item \textbf{3BE}. We use $k$ a Gaussian kernel with standard deviation $\sigma_k = 3$. We use truncated Fourier bases as dictionaries, we include $10$ and $15$ frequencies respectively for the input dictionary and the output one.
\item \textbf{KAM}. We use the kernel defined in Equation \eqref{eq :: kernel kam} taking $\sigma_1=0.2$, $\sigma_2=0.1$ and $\sigma_3=2.5$ and use $J=20$ functional principal components.
\item \textbf{FKRR}. We take a Gaussian kernel as input kernel with standard deviation parameter set as $\sigma_{k^{\text{in}}} = 20$. We use the output kernel defined in Equation \eqref{eq :: kernel out fkrr} setting its parameter to $\sigma_{k^{\text{out}}} = 0.5$.
\end{itemize}

\subsection{DTI dataset}

\subsubsection{Extensive description of the dataset}

The diffusion tensor imaging (DTI) dataset \footnote[1]{This dataset was collected at Johns Hopkins University and the Kennedy-Krieger Institute and is freely available as a part of the \textit{Refund} R package} consists of 382 Fractional anisotropy (FA) profiles inferred from DTI scans along two tracts---corpus callosum (CCA) and right corticospinal (RCS). The scans were performed on 142 subjects; 100 multiple sclerosis (MS) patients and 42 healthy controls. MS is an auto-immune disease which causes the immune system to gradually destroy myelin (the substance which isolates and protects the axons of nerve cells), resulting in brain lesions and severe disability. FA profiles are frequently used as an indicator for demyelification which causes a degradation of the diffusivity of the nerve tissues. The latter process is however not well understood and does not occur uniformly in all regions of the brain. We thus propose here to use our method to try to predict FA profiles along the RCS tract from FA profiles along the CCA tract. So as to remain in an i.i.d. framework, we consider only the first scans of MS patients resulting in $\nsamp=100$ pairs of functions. The functions are observed on regular grids of sizes $93$ and $54$ respectively for the CCA and RCS tracts. However, significant parts of the FA profiles along the RCS tract are missing, we are thus dealing with sparsely sampled functions. Examples of instances from this dataset are shown in Figure \ref{fig :: example dti data}.

\subsubsection{Tuning details for Table \ref{tab :: results DTI} of the main paper}

The reported means and standard deviations are computer over 20 runs with different train/test split. For all methods (except KE) we center the output functions using the training examples and add back the corresponding mean to the predictions; and we consider values of $\lambda$ in a geometric grid of size $25$ ranging from $10^{-6}$ to $10^{-2}$.

\begin{itemize}
\item \textbf{KE}. We use a Gaussian kernel with standard deviation in a regular grid ranging from $0.05$ to $2$ with $200$ points.  

\item \textbf{KPL}. For the dictionary, we consider several families of Daubechies wavelets \citep{DaubechiesHeil92} with $2$ or $3$ vanishing moments and $4$ or $5$ dilatation levels. We use a separable kernel of the form $\Kx(x, x') = k(x, x') \matop D$ with $k$ a Gaussian kernel with fixed standard deviation parameter $\sigma_k = 0.9$. The matrix $\rmc D$ is a diagonal matrix of weights decreasing geometrically with the scale of the wavelet at the rate $\frac 1 b$ (meaning for instance that at the $j$-th scale, the corresponding coefficients in the matrix are set to $\frac 1 {b^j}$). $b$ is chosen in a grid ranging from $1$ to $2$ with granularity $0.1$. When using the logcosh loss, we consider values of the parameter $\gamma$ in $\{0.25, 0.5, 0.75, 1, 1.5, 2, 3, 4, 5, 10 \}$.

\item \textbf{3BE}. We test the same dictionaries of wavelets as for KPL for both the input and the output functions. We use $200$ RFFs for the approximated KRRs; and  consider standard deviation for the corresponding approximated Gaussian kernel in the grid $\{ 7.5, 10, 12.5, 15, 17.5, 20 \}$.

\item \textbf{KAM}. We use the product of Gaussian kernels defined in Equation \eqref{eq :: kernel kam} fixing $\sigma_1 = \sigma_2 = \sigma_3 = 0.1$. We consider including $J=20$ and $J=30$ principal components for the approximation.

\item \textbf{FKRR}. We take a Gaussian kernel as input kernel with standard deviation parameter set as $\sigma_{k^{\text{in}}} = 0.9$. We use the output kernel defined in Equation \eqref{eq :: kernel out fkrr} choosing its parameter in $\sigma_{k^{\text{out}}} \in \{ 0.5, 0.75, 1, 1.25, 1.5, 1.75, 2, 3, 4, 5, 7.5, 10 \}$.
\end{itemize}

\subsection{Speech dataset}

\subsubsection{More on the experimental setting} \label{subsubsec :: kernel details speech}

To match words of varying lengths, we extend symmetrically both the input sounds and the VT functions so as to match the longest word. We represent the sounds using 13 mel-frequency cepstral coefficients (MFCC) acquired each 5ms with a window duration of 10ms. We split the data as $\nsamp_{\text{train}} = 300$ and $\nsamp_{\text{test}}=113$. Finally, we normalize the domain of the output functions to $[0, 1]$, and normalize as well their range of values to $[-1, 1]$ so that the scores are of the same magnitude for the different vocal tracts.

The input data consist of matrices in $\bb R^{\nobsf \times 13}$ (here the number of discretization points is the same for the input and for the output functions, so we have $t=\nobsf$ discretization points for the MFCCs). Those correspond to discrete observations from $\bb R^{13}$-valued functions. For ridge-DL-KPL, 1BE/ridge-Four-KPL and FKRR, we wish to use the following integral kernel based on a Gaussian kernel:
$$ (x_0, x_1) \longmapsto \int_{[0, 1]} \exp\left ( \frac {- \|x_1(\zeta) - x_0(\zeta) \|^{2}_2} {\sigma^2} \right ) d \zeta. $$

In practice, we approximate it using the discretized datapoints as:

\begin{equation} \label{eq :: kernel speech}
(\obsfun{x}_0, \obsfun{x}_1) \longmapsto \frac 1 \nobsf \sum_{\obsfiter=1}^{\nobsf} \exp \left (\frac {- \|\obsfun{x}_{1 \obsfiter} - \obsfun{x}_{0 \obsfiter} \|^{2}_2} {\sigma^2} \right ).
\end{equation}

For KAM, we use the kernel defined on $([0, 1] \times [0, 1] \times \bb R^{13})^2$ by:

\begin{equation} \label{eq :: kernel kam speech}
((\zeta, \theta, w), (\zeta', \theta', w')) \longmapsto \exp \left ( \frac {-(|\zeta - \zeta'|}{\sigma_1} \right ) \exp \left ( \frac {-|\theta - \theta'|}{\sigma_2} \right ) \exp \left ( \frac {-\|w - w' \|_2^2}{\sigma_3^2} \right ).
\end{equation}

In practice there are magnitude differences between the MFCCs. So as to avoid biasing the norms to be over-representative of the larger ones, before applying the above describe kernels, we standardize the MFCCs using the training data. For the $r$-th MFCC, we set $\text{avg}^{(r)}: = \frac 1 {\nsamp_{\text{train}} \nobsf} \sum_{\sampiter = 1}^{\nsamp_{\text{train}}} \sum_{\obsfiter=1}^\nobsf \obsfun{x}_{\sampiter \obsfiter}^{(r)}$ and $\text{std}^{(r)}:= \sqrt{\frac 1 {\nsamp_{\text{train}} \nobsf - 1}\sum_{\sampiter=1}^{\nsamp_{\text{train}}} \sum_{\obsfiter=1}^\nobsf (\obsfun{x}_{\sampiter \obsfiter}^{(r)} - \text{avg}^{(r)})^2 }$, and use as input data $\left ( \left ( \frac {x^{(r)}_\sampiter} {\text{std}^{(r)}} \right )_{r=1}^{13} \right)_{\sampiter=1}^{\nsamp_{\text{train}}}$.

\subsubsection{Details for the MSEs part of Figure \ref{fig :: speech} from the main paper}

The reported means and standard deviations are computed over 10 runs with different train/test split. For all methods, we consider values of $\lambda$ in a geometric grid ranging from $10^{-12}$ to $10^{-5}$ of size $30$ and try both centering and not centering the output functions. For ridge-DL-KPL, 1BE/ridge-Four-KPL and FKRR, we use the kernel from Equation \eqref{eq :: kernel speech} as input kernel taking $\sigma \in \{3, 4, 5, 7.5, 10 \}$.

\begin{itemize}
\item \textbf{ridge-DL-KPL}. The dictionary $\dict$ is learnt by solving Problem (\ref{prob :: dictionary learning}) from the main paper with $\cali C$ and $\Omega_{\bb R^\ndict}$ as introduced in Section \ref{subsec :: dictionary learning} from the main paper. The number of atoms is fixed at $30$. 

\item \textbf{1BE/ridge-Four-KPL}. We use a truncated Fourier basis as dictionary with included number of frequencies in the grid $\{ 20, 30, 40, 50 \}$. 

\item \textbf{FKRR}. We use the kernel from Equation \eqref{eq :: kernel out fkrr} as output kernel. We consider the following values for its parameter: $\sigma_{k^{\text{out}}} \in  \{ 0.005, 0.01, 0.05, 0.1, 0.125, 0.15 \}$.

\item \textbf{KAM}. We use the kernel defined above in Equation \eqref{eq :: kernel kam speech} for which we consider the following parameters values $\sigma_1 \in \{0.01, 0.05, 0.1, 0.5 \}$, $\sigma_2 \in \{ 0.0005, 0.001, 0.005, 0.01 \}$ and $\sigma_3 \in \{ 0.05, 0.1, 0.5, 1, 5\}$. We consider also $J \in \{30, 40, 50 \}$ functional PCAs.  
\end{itemize}

\subsubsection{Details for the fitting times part of Figure \ref{fig :: speech} from the main paper}

\noindent{\bf Infrastructure and measurements details.}
So as to get better control over execution, we perform those experiments on a laptop rather than on the computing cluster used for the other experiments. This laptop is equipped with a 8th Generation Intel Core i7-8665U processor and 16 Gb of RAM. In Python, using the \textit{multiprocessing} package, we execute the  tasks in parallel, each on exactly one core of the CPU. We measure the corresponding CPU time using the \textit{process\_time()} function from the \textit{time} package.

\noindent{\bf Parameters.}
Computation times necessarily depend on the choice of parameters. This dependence can be explicit for parameters determining the complexity of the problems (for instance the size of a dictionary or the size of an approximation grid). For such parameters, we use fixed values for each method which correspond either to the fixed values used or to those elected by cross-validation in the MSEs experiments; we detail those values below. Other parameters can influence the computational times through the conditioning of the problem. To account for this, we consider several values which we give below as well. The means and standard deviations of the obtained fitting times are reported in the right panel of Figure \ref{fig :: speech} from the main paper.

The computation times are averaged over 10 runs of the experiments with different shuffling of the dataset and over the VTs. For all methods, we consider values of $\lambda$ in a geometric grid ranging from $10^{-12}$ to $10^{-5}$ of size $30$ and center the output functions. For ridge-DL-KPL, 1BE/ridge-Four-KPL and FKRR, we use the kernel from Equation \eqref{eq :: kernel speech} as input kernel taking $\sigma=3$.
\begin{itemize}

\item \textbf{ridge-DL-KPL}. The dictionary $\dict$ is learnt by solving Problem (\ref{prob :: dictionary learning}) from the main paper with $\cali C$ and $\Omega_{\bb R^\ndict}$ as introduced in Section \ref{subsec :: dictionary learning} from the main paper. The number of atoms is fixed at $30$. 

\item \textbf{1BE/ridge-Four-KPL}. We use a truncated Fourier basis as dictionary with $50$ included frequencies, thus the size of the dictionary is $\ndict = 99$ (cosinuses and sinuses are included plus a constant function).

\item \textbf{FKRR}. We use the kernel from Equation \eqref{eq :: kernel out fkrr} as output kernel. We consider the following values for its parameter: $\sigma_{k^{\text{out}}} \in  \{ 0.05, 0.1 \}$.

\item \textbf{KAM}. We use the kernel defined above in Equation \eqref{eq :: kernel kam speech} for which we use the following parameters values: $\sigma_1=0.1$, $\sigma_2=0.05$ and $\sigma_3=1$. We take $J=40$ functional PCAs.

\end{itemize}

\subsubsection{Comparison of solvers for FKRR} \label{subsubsec :: comparison solvers fkrr}
As highlighted in Section \ref{sec :: related}, there are two possible ways of solving FKRR with a separable kernel. 
We compare the two approaches on the speech dataset in Figure \ref{fig :: comparison solvers}. FKRR Eigapprox corresponds to the eigendecomposition solver and FKRR Syl to the Sylvester solver. Let $J$ be the number of eigenfunctions considered for the output operator $\matop L$. The difference in computational cost is mostly imputable to the need in FKRR Eigapprox to instantiate and compute $\nsamp J$ functions which correspond to Kronecker products between eigenvectors of the kernel matrix and eigenfunctions of the output operator. However, since those vectors, are functions, so as to be manipulated, they need to be discretized. Considering a discretization grid of size $\nobsf$, those vectors are of size $\nsamp \times \nobsf$ (see Algorithm 1 in \citet{KadriAl16} for more details) which can be heavy (there are $\nsamp J$ of them).

To obtain Figure \ref{fig :: comparison solvers}, we consider the following parameters for the two solvers.
\begin{itemize}
\item \textbf{FKRR Eigapprox.} We use $J=20$ eigenfunctions to approximate the output operator, a grid of size $t=300$ to approximate functions. We take the output kernel parameters in $\sigma_{k^{\text{out}}} \in \{ 0.02, 0.05, 0.1, 0.15 \} $ and  $\lambda$ in a geometric grid of size $30$ ranging from $10^{-12}$ to $10^{-5}$. 
\item \textbf{FKRR Syl.} The plots correspond to the experiments already performed and described previously.
\end{itemize}

\begin{figure}
\begin{center}
\includegraphics[width=0.4\textwidth]{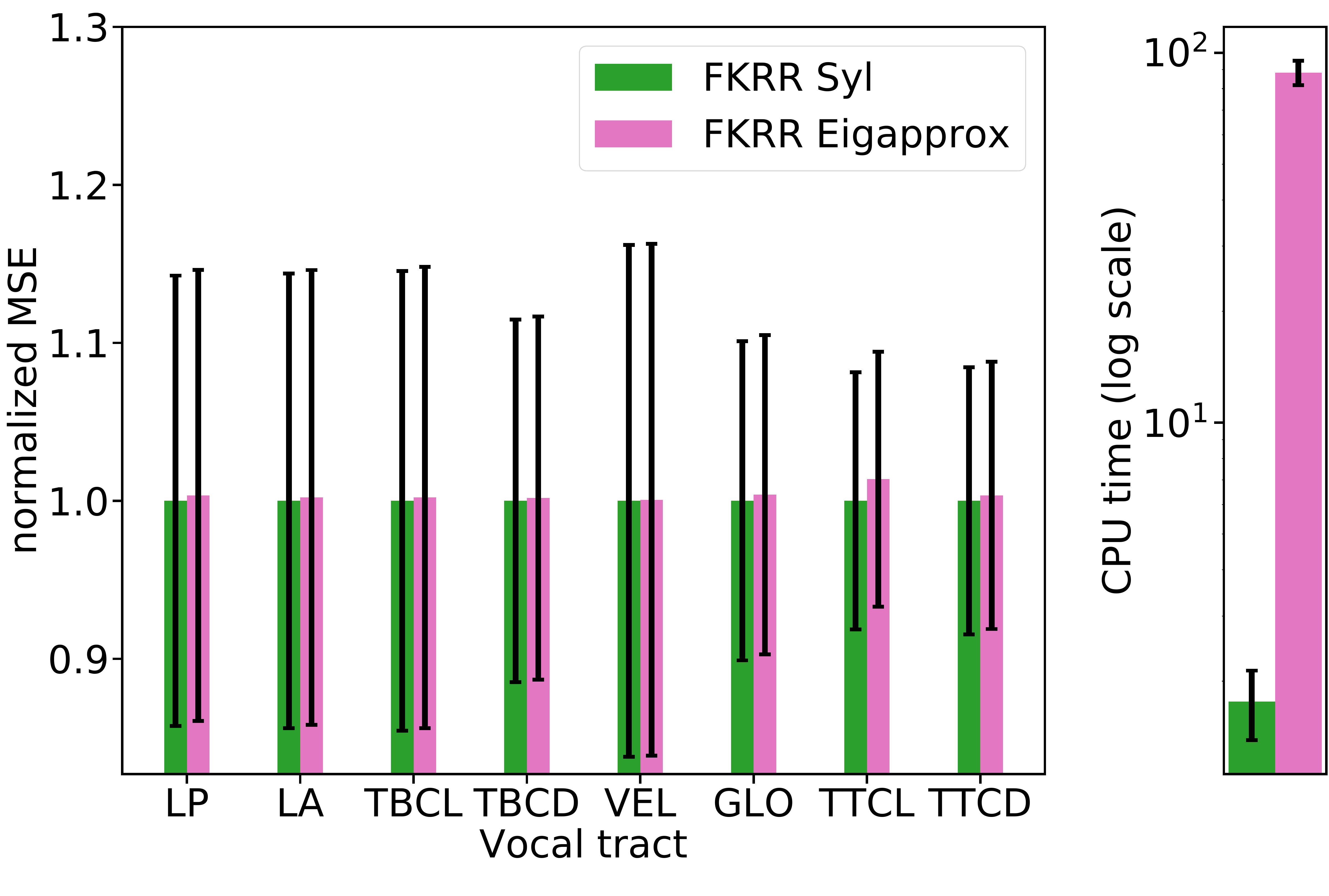}
\caption{Comparison of two solvers for FKRR on speech dataset}
\label{fig :: comparison solvers}
\end{center}
\end{figure}

\end{document}